\newif\ifarxiv
\newcommand{\arxivGuard}[1]{}
\newif\ifaddtoc
\def\1{\bm{1}}
\newcommand{\best}[1]{\cellcolor{gray!25}{#1}}
\newcommand{\pLM}{p}
\newcommand{\STrain}{S_{\text{tr}}}
\newcommand{\SVal}{S_{\val}}
\newcommand{\NTrain}{N_{\text{tr}}}
\newcommand{\NVal}{N_{\val}}
\newcommand{\QueryEmbed}{\boldsymbol{\Phi}}
\newcommand{\PretrainedQueryEmbed}{\boldsymbol{\varphi}}
\newcommand{\PretrainedQueryEmbedDim}{D_{\text{P}}}
\newcommand{\LLMEmbed}{\boldsymbol{\Psi}}
\newcommand{\LLMEmbedScalar}{\Psi}
\newcommand{\LLMClusterEmbedH}{{\LLMEmbed}_{\text{clust}}( h )}
\newcommand{\LLMClusterEmbedScalarHK}{{\LLMEmbedScalar}_{\text{clust}, k}( h )}
\newcommand{\InputClusterEmbed}{\QueryEmbed_{\text{clust}}( \boldsymbol{x} )}
\newcommand{\InputLearnedClusterEmbed}{\QueryEmbed_{\text{clust}}( \boldsymbol{x}; \boldsymbol{\theta} )}
\newcommand{\InputEmbedDim}{K}
\newcommand{\LLMEmbedDim}{K}
\newcommand{\ClusterMembership}{\QueryEmbed_{\text{clust}}}
\newcommand{\ClusterMembershipScalar}[1]{\QueryEmbed_{\text{clust}, {#1}}}
\newcommand{\LossVector}{prediction error vector}
\newcommand{\LossVectorSentenceCaps}{Prediction Error Vector}
\newcommand{\gammaEstimator}{\gamma}
\newcommand{\routerEstimator}{r}
\theoremstyle{plain}
\newtheorem{thm}{\protect\theoremname}
\theoremstyle{plain}
\newtheorem{prop}[thm]{\protect\propositionname}
\newtheorem*{prop*}{\protect\propositionname}
\theoremstyle{plain}
\newtheorem{lem}[thm]{\protect\lemmaname}
\theoremstyle{plain}
\newtheorem{theorem}{Theorem}[section]
\newtheorem{lemma}[theorem]{Lemma}\theoremstyle{definition}
\theoremstyle{remark}
\newcommand{\boldcheck}{\pmb{\checkmark}}
\newcommand{\crossmark}{\ding{56}}
\newcommand{\todowj}[1]{}
\newcommand{\todoasr}[1]{}
\newcommand{\todohari}[1]{}
\newcommand{\wjtext}[1]{#1}
\newcommand{\harichange}[1]{#1}
\newcommand{\asrnote}[1]{}
\newcommand{\hla}[1]{%
  \begingroup\setlength{\fboxsep}{1pt}%
  \colorbox{green!20!white}{{\hspace*{1pt}#1\hspace*{1pt}}}%
  \endgroup
}
\newcommand{\mset}{\mathscr{H}}
\newcommand{\msetAll}{\mset_{\rm all}}
\newcommand{\msetTrain}{\mset_{\text{tr}}}
\newcommand{\hOld}{h_{\mathrm{tr}}}
\newcommand{\hNew}{h_{\mathrm{te}}}
\newcommand{\msetNew}{\mathscr{H}_{\mathrm{te}}}
\newcommand{\numSetNew}{N}
\newcommand{\cluster}{\text{clust}}
\newcommand{\ourMethod}{{\tt UniRoute}}
\providecommand{\propositionname}{Proposition}
\providecommand{\lemmaname}{Lemma}
\providecommand{\theoremname}{Theorem}
\newcommand{\removefornow}[1]{}
\renewcommand{\Pr}{\mathbb{P}}
\newcommand{\XCal}{\mathscr{X}}
\newcommand{\defEq}{\stackrel{.}{=}}
\newcommand{\defeq}{\defEq}
\newcommand{\Real}{\mathbb{R}}
\newcommand{\bx}{\boldsymbol{x}}
\newcommand{\by}{\boldsymbol{y}}
\newcommand{\argmin}{\operatorname{argmin}}
\theoremstyle{definition}
\newcommand{\tr}{\textrm{tr}}
\newcommand{\val}{\textrm{val}}
\author{%
  \large{Wittawat Jitkrittum} \\
  \texttt{wittawat@google.com} \\
  \And
  \large{Harikrishna Narasimhan} \\
  \texttt{hnarasimhan@google.com} \\
  \And
  \large{Ankit Singh Rawat} \\
  \texttt{ankitsrawat@google.com} \\
  \And  
  \large{Jeevesh Juneja} \\
  \texttt{jeeveshjuneja@google.com} \\
  \And    
  \large{Congchao Wang} \\
  \texttt{congchaowang@google.com} \\
  \And
  \large{Zifeng Wang} \\
  \texttt{zifengw@google.com} \\
  \And      
  \large{Alec Go} \\
  \texttt{ago@google.com} \\
  \And      
  \large{Chen-Yu Lee} \\
  \texttt{chenyulee@google.com} \\
  \And        
  \large{Pradeep Shenoy} \\
  \texttt{shenoypradeep@google.com} \\
  \And        
  \large{Rina Panigrahy} \\
  \texttt{rinap@google.com} \\
  \And          
  \large{Aditya Krishna Menon} \\
  \texttt{adityakmenon@google.com} \\
  \And
  \large{Sanjiv Kumar} \\
  \texttt{sanjivk@google.com} \\
}
\affiliation{Google}
\title{\LARGE\sffamily Universal Model Routing for Efficient LLM Inference}
\begin{document}

\doparttoc %
\faketableofcontents %

\maketitle

\begin{abstract}
\emph{Model routing} is a simple technique for reducing 
the inference cost
of large language models (LLMs),
wherein one maintains a pool of candidate LLMs,
and learns to route each prompt to the smallest feasible LLM.
Existing works focus on learning a router for a \emph{fixed} pool of
LLMs.
In this paper, we consider the problem of \emph{dynamic} routing,
where \emph{new, previously unobserved} LLMs are available at test time.
We propose 
\ourMethod{}, 
a new approach to this problem that relies on representing each LLM as a \emph{feature vector}, derived based on predictions on a set of representative prompts.
Based on this, we detail two effective instantiations of \ourMethod{}, relying on \emph{cluster-based} routing and a \emph{learned cluster map} respectively.
We show that these are estimates of a theoretically optimal routing rule, and quantify their errors via an excess risk bound.
Experiments on a range of public benchmarks show the effectiveness of \ourMethod{} in routing amongst more than 30 unseen LLMs.
\end{abstract}

\section{Introduction}
Large language models (LLMs) have seen a flurry of 
recent development~\citep{Radford:2018,Radford:2019,Brown:2020,Touvron:2023,Anil:2023,Grattafiori:2024,DeepSeekAI:2024}.
These impressive abilities notwithstanding,
the inference cost of LLMs can be prohibitive~\citep{Li:2024e,Wan:2024,Zhou:2024b}.
This has motivated 
several
techniques to 
improve 
LLM inference efficiency,
such as
speculative decoding~\citep{Stern:2018,chen2023accelerating,leviathan2023fast}, 
early-exiting~\citep{SchFisGup2022}, 
quantisation~\citep{Chee:2023}, 
compression~\citep{Frantar:2023,Agarwal:2024,Rawat:2024},
and others~\citep{Pope:2023,Aishwarya:2024,Menghani:2022}.

Our interest is in \emph{model routing} for efficient inference.
Here,
one maintains a pool of candidate LLMs of various sizes and capabilities.
Given a prompt, 
one learns to predict the lowest-cost LLM which can reasonably address the prompt.
In doing so, one can learn to use high-cost LLMs sparingly, only on the (relatively) few ``hard'' inputs.
This is a conceptually simple but effective technique,
which has seen a surge of recent interest~\citep{Hendy:2023,Narayanan:2023,Ding:2024,Sakota:2024,CheJiaLin2024,HuBieLi2024,Shnitzer:2023,Wang:2023,Stripelis:2024,OngAlmWu2024,ZhuWuWen2024,Srivatsa:2024,Feng:2024,LuYuaLin2024,Zhao:2024,Dann:2024,Aggarwal:2024,Lee:2024,Mohammadshahi:2024,Chuang:2025,Huang:2025}.

Existing works largely focus on routing over a \emph{fixed} pool of LLMs.
In practice, however, the pool of candidate LLMs can constantly change;
e.g., older LLMs may be deprecated in favor of new, performant LLMs.
To leverage such new LLMs,
perhaps the simplest approach is to re-train the router.
However, 
with frequent changes to the LLM pool,
this may be impractical 
owing to the non-trivial overhead
of 
both model re-training,
as well as 
obtaining sufficient \emph{training labels} for each new LLM.

In this paper, 
we propose 
\ourMethod{},
a new approach
to
this problem 
based on 
representing each LLM as a \emph{feature vector}, derived from 
their
\emph{prediction errors}
on a set of representative prompts.
By learning a router over these LLM features, 
we enable generalisation to previously unseen LLMs \emph{without} any re-training.
Building on the observation
that $K$-NN routing~\citep{HuBieLi2024} is a special case of \ourMethod{},
we detail two 
concrete 
instantiations of \ourMethod{}
relying on 
\emph{unsupervised} and \emph{supervised} prompt clustering respectively.
While conceptually simple,
empirically,
these 
enable 
effective 
routing
with a dynamic LLM pool across several benchmarks.
In sum, our contributions are:

\begin{enumerate}[label=(\roman*),itemsep=0pt,topsep=0pt,leftmargin=16pt]
    \item we 
    formalise the
    problem setting of model routing with a \emph{dynamic} pool of LLMs (\S\ref{sec:dynamic_routing});

    \item we propose
    \ourMethod{} (\S\ref{sec:uniroute-framework}),
    a novel
    routing
    approach
    relying on 
    an LLM feature representation based on its 
    \emph{\LossVector{}} 
    on a small set of representative prompts (\S\ref{sec:correctness_representation});
    
    \item 
    we propose two %
    simple yet effective
    instantiations of \ourMethod{} 
    based on unsupervised or supervised clustering  (\S\ref{sec:cluster_router}, \S\ref{sec:two_tower}, \harichange{Figure \ref{fig:illus_cluster}}), with an accompanying excess risk bound (\S\ref{sec:excess-risk});

    \item we present experiments 
    (\S\ref{sec:experiments})
    on 
    EmbedLLM \citep{ZhuWuWen2024},
    SPROUT o3-mini \citep{Somerstep:2025},
    RouterBench~\citep{HuBieLi2024}, and Chatbot Arena~\citep{OngAlmWu2024}, 
    illustrating the ability to
    effectively route amongst $>30$ unseen LLMs.
\end{enumerate}

\begin{figure}[!t]
    \centering
    \includegraphics[width=0.9\textwidth]{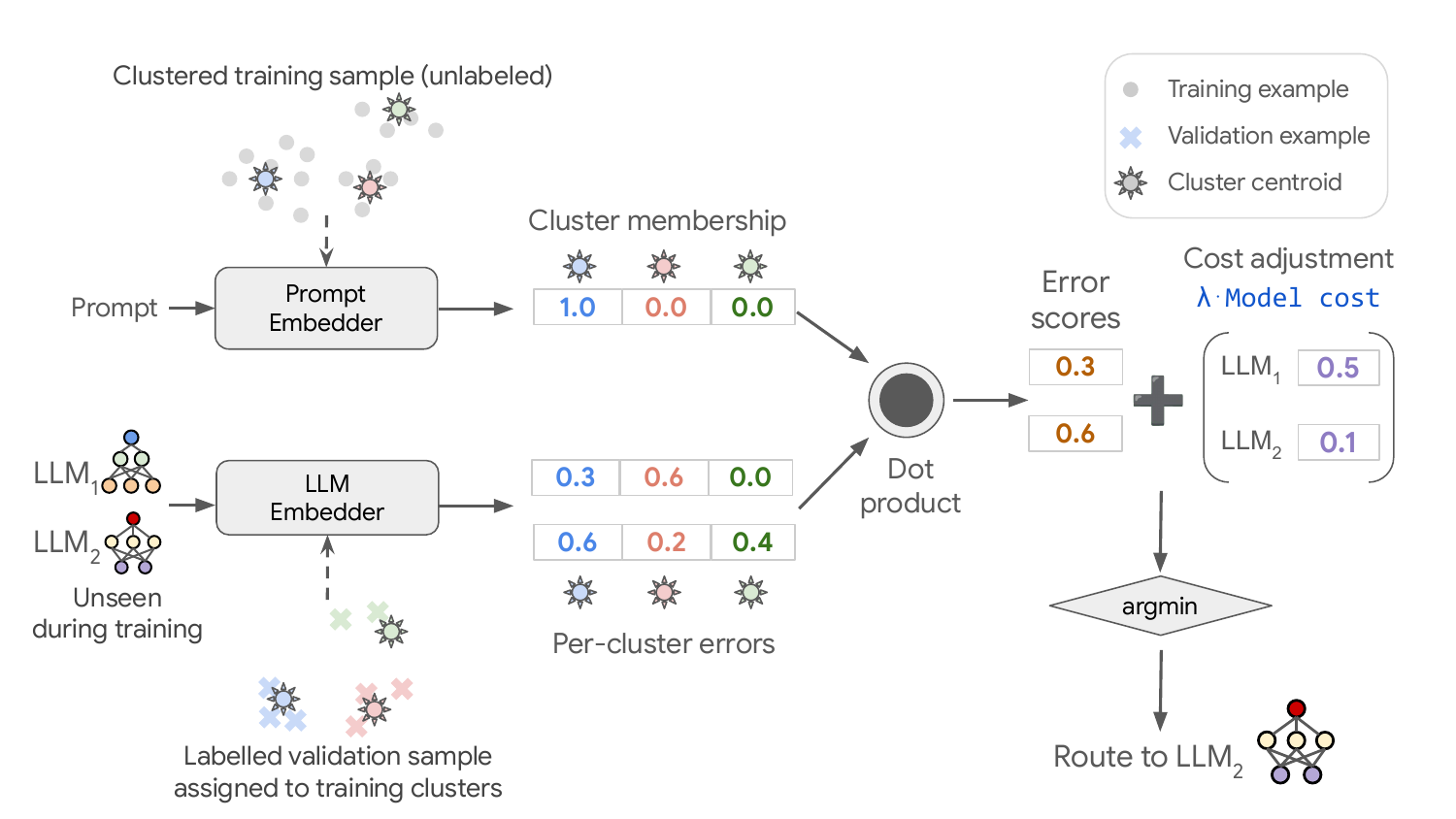}
    \caption{Illustration of our proposed 
    \ourMethod{} with a 
    cluster-based router (see \S\ref{sec:cluster_router}). 
    We first perform $K$-means on 
    a
    training set to find $K$ centroids, 
    and then
    partition the validation set into $K$ representative clusters. 
    Each test-time LLM can then be represented as a $K$-dimensional feature vector of per-cluster errors. 
    This yields an intuitive routing rule:
    for each test prompt, 
    we route to the LLM with the smallest cost-adjusted average error on the cluster the prompt belongs to. The prompt embedder may either be completely \emph{unsupervised} (as shown in the figure), or fitted via \emph{supervised} learning using labels from a set of training  LLMs different from those seen during test time (\S\ref{sec:two_tower}).
    }
    \vspace{-10pt}
    \label{fig:illus_cluster}
\end{figure}

\section{Background: Model Routing with a Static LLM Pool}
\label{sec:background}

\textbf{Language models (LMs)}.
Given a finite, non-empty vocabulary of \emph{tokens} $\mathscr{V}$,
a \emph{language model} (\emph{LM})
is a distribution 
$\pLM \in \Delta( \mathscr{V}^* )$,
where
$\mathscr{V}^* \defEq \bigcup_{n = 0}^{\infty} \mathscr{V}^n$
and $\Delta( \cdot )$ denotes the set of distributions over a set.
\emph{Large} language models (\emph{LLMs})
based on Transformers~\citep{Vaswani:2017}
have proven highly versatile.

\textbf{LLMs for predictive tasks}.
Our interest is in using LLMs for the following prediction problem.
Let $\mathscr{X} \subset \mathscr{V}^*$ be a set of \emph{input prompts},
and $\mathscr{Y}$ be a set of \emph{targets}.
Let
$\ell( \boldsymbol{x}, \boldsymbol{y}, \hat{\boldsymbol{y}} )$
denote a \emph{loss function}
measuring
the \emph{loss} (or \emph{disutility}) of a \emph{predicted} response $\hat{\boldsymbol{y}}$ on a given (prompt, target) pair $( \boldsymbol{x}, \boldsymbol{y} )$.
For example, 
$\ell( \boldsymbol{x}, \boldsymbol{y}, \hat{\boldsymbol{y}} ) = \1( \boldsymbol{y} \neq \hat{\boldsymbol{y}} )$ 
measures whether the response is an exact string match of the target.
Our goal is to construct a \emph{predictor}
$h \colon \mathscr{X} \to \mathscr{Y}$ 
minimising
$R( h ) \defEq \mathbb{E}_{( \boldsymbol{x}, \boldsymbol{y} ) \sim \Pr}\left[ \ell( \boldsymbol{x}, \boldsymbol{y}, h ) \right]$.

An LLM natively provides a distribution over $\mathscr{V}^*$.
To
convert
such a distribution to a predicted target,
we assume there is some (task-specific, and possibly randomised) \emph{prediction function} $\texttt{predict} \colon \Delta( \mathscr{V}^* ) \to \mathscr{Y}$;
e.g., in the simple case where $\mathscr{Y} \subset \mathscr{V}^*$, 
we may employ a standard decoding 
algorithm~\citep{Ficler:2017,Fan:2018,Holtzman:2020}.
More generally, $\texttt{predict}$ may involve some non-trivial processing 
(e.g., stripping non-numeric tokens).
Given such a function, we may construct
$h( \boldsymbol{x} ) \defEq \texttt{predict}( \pLM( \cdot \mid \boldsymbol{x} ) )$
to minimise $R( h )$.\asrnote{Should we highlight here what predict() looks like for us throughout this work? Or is it changing depending on the tasks/benchmarks later on?}

\textbf{Model routing}.
Model routing is a means for achieving efficiency at inference time
by selecting an appropriate LLM 
for each input prompt. 
Inference efficiency is gained by sparingly calling a large
model only on ``hard'' input prompts.
More precisely,
suppose we have a set 
of $M \geq 2$ LLMs
$\{ \pLM^{(m)} \}_{m \in [M]}$,
with corresponding \emph{inference costs}
$\{ c^{(m)} \}_{m \in [M]}$
denoting,
e.g.,
their average monetary costs for processing one prompt.
We assume $c^{(1)} \leq c^{(2)} \leq \ldots \leq c^{(M)}$.
Let $r \colon \mathscr{X} \to [ M ]$ be a \emph{router} that,
given a prompt, predicts the most suitable LLM.
In the standard (``static'') routing problem, we seek a router which achieves (cf.~\citet{CheZahZou2023,Dekoninck:2025,Woisetschlager:2025,Somerstep:2025})
\begin{equation}
    \resizebox{0.935\linewidth}{!}{$\displaystyle
    \min_{r \colon \mathscr{X} \to [ M ]} 
    \sum_{m \in [ M ]} \mathbb{E}_{( \boldsymbol{x}, \boldsymbol{y} )}\left[ \1( r( \boldsymbol{x} ) = m ) \cdot \ell( \boldsymbol{x}, \boldsymbol{y}, h^{(m)} ) \right] \colon 
    \label{eqn:static-router}
    \sum_{m \in [ M ]} \mathbb{E}_{( \boldsymbol{x}, \boldsymbol{y} )}\left[ \1( r( \boldsymbol{x} ) = m ) \cdot c^{(m)} \right] \leq B.%
    $}
\end{equation}
Here, $B \geq 0$ is some fixed budget on the cost of the routed solution.
We use $h^{(m)}( \boldsymbol{x} ) \defEq \texttt{predict}( \pLM^{(m)}( \cdot \mid \boldsymbol{x} ) )$ for some fixed prediction function $\texttt{predict}$.

\textbf{Model routing strategies}.
The most na\"{i}ve routing strategy \emph{randomly} assigns prompts to the various models,
potentially pruning inadmissible solutions (see Appendix~\ref{app:pareto-random}).
A more refined strategy is to \emph{learn} a router.
\citet{HuBieLi2024} proposed an intuitive strategy 
(see also~\citet{NarJitMen2022}),
wherein
one constructs a predictor 
$\gammaEstimator^{(m)} \colon \mathscr{X} \to \Real_+$ of the expected loss incurred by each LLM $h^{(m)}$,
and routes via
\begin{equation}
    \label{eqn:post-hoc}
    r( \boldsymbol{x} ) = \underset{m \in [M]}{\argmin} \, \left[ \gammaEstimator^{(m)}( \boldsymbol{x} ) + \lambda \cdot c^{(m)} \right].
\end{equation}
Here, 
$\lambda \geq 0$ is a hyper-parameter trading between cost and quality.
In \S\ref{sec:proposal}, we show formally that the routing rule in \eqref{eqn:post-hoc} is a plug-in estimator of a theoretically optimal routing rule (cf.~also~\citet{Somerstep:2025}).

Given a training sample 
$\STrain \defEq \{ ( \boldsymbol{x}^{(i)}, \boldsymbol{y}^{(i)} ) \}_{i = 1}^{\NTrain}$,
there are several approaches to construct $\gammaEstimator$.
For example, 
given a text embedder 
$\PretrainedQueryEmbed \colon \mathscr{X} \to \Real^{\PretrainedQueryEmbedDim}$
(e.g., BERT~\citep{Devlin:2019}, Sentence-T5~\citep{sentencet5}),
one may employ:
\begin{equation}
    \label{eqn:bert-router}
    \gammaEstimator_{\text{lin}}^{(m)}( \boldsymbol{x} ) = \mathbf{w}_{m}^\top \PretrainedQueryEmbed( \boldsymbol{x} ) + b_m,
\end{equation}
where
$\mathbf{w}_m \in \Real^{\PretrainedQueryEmbedDim}, b_m \in \Real$
and (optionally) $\PretrainedQueryEmbed$
may be fit
with a suitable empirical loss, 
e.g.,
\begin{equation}
    \label{eqn:bert-router-erm}
    \frac{1}{N} \sum_{i \in [ \NTrain ]} \sum_{m \in [ M ]} ( \ell( \bx^{(i)}, \by^{(i)}, h^{(m)} ) - \gammaEstimator_{\text{lin}}^{(m)}( \bx^{(i)} ) )^2.
\end{equation}
A 
related
matrix factorisation approach
operating over \emph{frozen} embeddings
was proposed in~\citet{OngAlmWu2024},
fitted on a sample comprising of either \emph{pairwise} comparisons~\citep{OngAlmWu2024} or pointwise correctness labels \citep{ZhaJinMao2024}.
Alternatively,
a $K$-NN estimator can be used~\citep[Section 5.1]{HuBieLi2024}:
\begin{equation}
    \label{eqn:knn-router}
    \gammaEstimator_{\text{kNN}}^{(m)}( \boldsymbol{x} ) = \frac{1}{k} \sum_{i \in {\rm NN}( \boldsymbol{x}, k )} \ell( \boldsymbol{x}^{(i)}, \boldsymbol{y}^{(i)}, h^{(m)} ),    
\end{equation}
where ${\rm NN}( \boldsymbol{x}, k )$ denotes the $k$-nearest neighbours 
of 
(the embeddings of)
$\boldsymbol{x}$
in $\STrain$.

\section{Model Routing with a Dynamic LLM Pool}
\label{sec:dynamic_routing}

We now formalise the model routing problem when the set of LLMs may vary \emph{dynamically}.

\subsection{Problem Setup}
\label{sec:setup}

The routing setup in~\eqref{eqn:static-router}
assumed a \emph{static} pool of LLMs:
indeed, observe that the linear model~\eqref{eqn:bert-router}
only estimates parameters 
$\{ (\mathbf{w}_m, b_m) \}_{m \in [ M ]}$ for a fixed set of LLMs, namely, $\{ p^{(m)} \}_{m \in [ M ]}$.
In practice, the LLM pool may frequently change,
as new models are released and old models are deprecated.
Na\"{i}vely, one may simply re-train a router 
such as~\eqref{eqn:bert-router} with each such new LLM.
However,
this
requires 
both annotating each training sample with the new LLM's predictions,
several steps of iterative training,
and initiating a fresh router deployment.
For
a constantly refreshing LLM pool,
this can 
impose a 
non-trivial overhead (e.g., computational cost).
This motivates an alternate routing setup.

Concretely, 
let 
$\msetAll$ denote the set of all possible LLM predictors,
where for simplicity we assume $| \msetAll | < +\infty$.
Let $\mathbb{H} \defEq 2^{\msetAll}$ 
denote the set of all subsets of $\msetAll$.
Let
$\mset_{\rm tr} = \{ \hOld^{(1)}, \ldots, \hOld^{(M)} \} \in \mathbb{H}$ 
denote the set of 
LLM predictors observed during training.
During evaluation,
we seek to route amongst the LLM predictors in
$\mset_{\rm te} = \{ \hNew^{(1)}, \ldots, \hNew^{(N)} \} \in \mathbb{H}$.
If $\mset_{\rm tr} = \mset_{\rm te}$, 
we obtain the original routing problem in~\eqref{eqn:static-router}.
However, we aim to allow
$\mset_{\rm tr} \neq \mset_{\rm te}$,
including the case 
$\mset_{\rm tr} \cap \mset_{\rm te} = \varnothing$.

To accommodate such a dynamic LLM pool, 
we first modify our router to accept both an input prompt \emph{and} a set of candidate LLMs,
with the goal to pick the best option from this set;
i.e.,
we consider 
\emph{dynamic routers}
$\mathscr{R} \defEq \{ r( \cdot, {\mset} ) \colon \XCal \to [| \mset |] \mid \mset \in \mathbb{H} \}$.
Next, we assume that the set of 
training LLMs $\msetTrain$ is itself drawn from some \emph{meta-distribution} $\mathfrak{H}$ over $\mathbb{H}$. 
Rather than perform well on the \emph{specific} set $\msetTrain$,
we would like to generalise to \emph{any} set of LLMs drawn from $\mathfrak{H}$.
Concretely, we wish to solve:
\begin{equation}
    \min_{r \in \mathscr{R}}  \mathbb{E} \left[\sum_{m\in[|\mset|]} \1( r( \bx, \mset ) = m ) \cdot \ell(\bx,\by,h^{(m)}) \right] 
    \colon
    \mathbb{E} \left[\sum_{m\in[|\mset|]} \1( r( \bx, \mset ) = m ) \cdot c( h^{(m)} ) \right] \le B,
    \label{eq:risk}
\end{equation}
where 
as before $B \geq 0$ denotes a cost budget,
$\mset \defeq \{ h^{(1)}, \ldots, h^{(M)} \} \sim \mathfrak{H}$ denotes a sample of $M$ LLMs,  
$c \colon \msetAll \to \Real_+$ denotes the cost of a given LLM, and $\mathbb{E}$ is a shorthand for $ \mathbb{E}_{(\bx, \by, \mset)}$.

\subsection{Optimal Routing with a Dynamic Pool}

To guide the design of a dynamic router, 
we begin by studying the \emph{Bayes-optimal} rule for~\eqref{eq:risk}.
Interestingly, we find the Bayes-optimal rule \emph{decomposes} across each of the constituent LLMs.
The result is known
for a \emph{fixed} candidate set $\mset \in \mathbb{H}$~\citep[Lemma F.1]{Jitkrittum:2023},~\citep{Dekoninck:2025,Somerstep:2025}.
The distinction arises for a \emph{varying} candidate set, where the result closely mirrors~\citet[Eq.\ 6]{Tailor:2024}, derived in the related setting of learning to defer to an expert (see~\S\ref{sec:related}).

\begin{prop}[Optimal dynamic routing]
\label{prop:optimal_rule} 
Under a mild regularity condition on $\mathbb{P}$,\todohari{What does our continuity assumption on the distribution mean for discrete distributions over text? To be discussed in appendix}
for any 
input
$\bx \in \XCal$,
LLM candidate set $\mset \in \mathbb{H}$,
and budget $B > 0$,
there exists a
Lagrange multiplier $\lambda_{\mathfrak{H}} \ge 0$ 
such that
the optimal dynamic router $r^{*}$ for the constrained optimization
in \eqref{eq:risk} is
\begin{equation}
\label{eq:opt_rule}
r^{*}(\bx, \mset) = \underset{m\in[ | \mset | ]}{\argmin} \, \left[ \mathbb{E}_{\by\mid\bx}\left[\ell(\bx,\by,h^{(m)})\right]+\lambda_{\mathfrak{H}} \cdot c( h^{(m)} ) \right].
\end{equation}
\end{prop}

Intuitively, 
it is optimal to route to the model that has the lowest expected loss
on the given input $\bx$, 
after applying a \emph{cost adjustment} 
of $\lambda_{\mathfrak{H}} \cdot c( h^{(m)} )$ to the loss.
The hyperparameter $\lambda_{\mathfrak{H}} \ge0$
allows one to trade off the expected quality and the average cost.
If one wishes to \emph{sweep} $B$ to trace a 
cost-quality \emph{deferral curve}
(Appendix~\ref{sec:deferral-curve}),
one may equally treat $\lambda_{\mathfrak{H}}$
as a constant to be swept from $[ 0, +\infty )$.

\paragraph{Special case: 0-1 Loss}
Consider a setting where an LLM response 
may be compared 
to a ground-truth
target
based on an exact match criteria:
i.e., we pick the 0-1 loss
$\ell(\bx,\by,h^{(m)})=\1[h^{(m)}(\bx)\neq\by]$,
with
values either 
0 (incorrect) or 1 (correct).
Here, the optimal rule in \eqref{eq:opt_rule} becomes: %
\begin{align}
r^{*}(\bx, \mset) &= 
\underset{m\in[ |\mset| ]}{\argmin} \, \left[ \gamma^*( \bx, h^{(m)} ) +\lambda_{\mathfrak{H}} \cdot c( h^{(m)} ) \right] \label{eq:opt_rule01} \\
\gamma^*( \bx, h ) &\defEq \mathbb{P}\left[\by\ne h(\bx)\mid\bx\right].\nonumber
\end{align}
For simplicity, we will focus on the the 0-1 loss henceforth, and consider
\eqref{eq:opt_rule01} as the optimal routing rule;
our results can be readily adapted (as we shall see in our experiments in \S\ref{sec:experiments}) to other loss functions
by considering \eqref{eq:opt_rule}.
Example problems where the 0-1 loss are appropriate include
GSM8K \citep{CobKosBav2021}, 
MMLU~\citet{HenBurBas2021}, and problems in 
SuperGLUE~\citep{WanPruNan2019}.

\subsection{Plug-in Routing with a Dynamic Pool}
\label{sec:new_llms}

Proposition~\ref{prop:optimal_rule} and~\eqref{eq:opt_rule01} suggest a simple practical approach to routing with a dynamic 
pool of test LLMs
$\msetNew = \{ \hNew^{(n)} \}_{n \in [N]}$: 
we may construct a plug-in estimator 
$\gammaEstimator( \bx, h )$ of $\gamma^*( \bx, h )$, 
and
estimate~\eqref{eq:opt_rule01} via
\begin{equation}
    \label{eqn:plugin-dynamic-routing}
    \routerEstimator(\bx, \msetNew) = 
    \underset{n \in [ N ]}{\argmin} \, \left[ \gammaEstimator( \bx, \hNew^{(n)} ) +\lambda\cdot c( \hNew^{(n)} ) \right].
\end{equation}
A key question arises: how should we parameterise $\gamma( \bx, h )$?
We study this question next.

\section{\ourMethod{}: Universal Routing via an LLM Feature Representation}
\label{sec:uniroute}
We present a general
dynamic routing approach
based on constructing \emph{LLM feature representations}.

\subsection{The \ourMethod{} Approach}
\label{sec:uniroute-framework}

To enable routing with a dynamic LLM pool,
we propose to parameterise $\gamma$
by
representing both prompts \emph{and} LLMs as feature vectors.
Specifically,
for fixed $K > 1$
let 
$\QueryEmbed \colon \XCal \to \Real^{\InputEmbedDim}$ and
$\LLMEmbed \colon \msetAll \to \Real^{\LLMEmbedDim}$
denote
\emph{feature maps}
for 
prompts and LLMs respectively.
Then, we propose:
\begin{tcolorbox}[
colback=gray!5!white,  %
colframe=gray!5!white, %
arc=3mm,                %
boxsep=0pt,          %
left=2pt,            %
right=2pt,           %
top=1pt,             %
bottom=1pt,          %
toptitle=0pt,        %
bottomtitle=1pt,     %
]
\begin{equation}
    \label{eqn:gamma-uniroute}
    \gammaEstimator_{\text{uni}}( \bx, h ) = \QueryEmbed( \bx )^\top \LLMEmbed( h ).
\end{equation}
\end{tcolorbox}
We may now fit any parameters associated with $\QueryEmbed, \LLMEmbed$ on the training set $\STrain$,
and then
route as before via~\eqref{eqn:plugin-dynamic-routing}.
Crucially,
provided we define an easily computable $\LLMEmbed$,
this
{seamlessly handles}
any
$h \in \msetAll$, \emph{including one unobserved during training};
this is analogous to semantic output codes for zero-shot classification~\citep{Palatucci:2009}.
Thus,~\eqref{eqn:gamma-uniroute} provides an 
approach for \emph{universal routing} with dynamic LLM pools.

To \emph{realise} the potential of this approach,
it now remains to specify 
$\QueryEmbed( \bx )$ and
$\LLMEmbed( h )$.

\textbf{Prompt representation.}\ 
The choice of prompt representations 
$\QueryEmbed( \bx ) \in \Real^{\InputEmbedDim}$
has been well-studied in prior work~\citep{HuBieLi2024}.
A natural choice 
is 
to build on
a general-purpose text embedding
such as 
\texttt{text-embedding-3}~\citep{OpenAI:2025},
NV-Embed~\citep{Lee:2025},
E5-Mistral-7B~\citep{Wang:2024},
or
Gecko~\citep{LeeDaiRen2024}.
Such embeddings may be projected from a native 
$D_{\rm P}$ to $K \ll D_{\rm P}$ dimensional space, e.g., via a linear transformation.

\textbf{LLM representation.}\ 
A good choice for 
$\LLMEmbed( h )$ 
is less apparent than that for $\QueryEmbed( \bx )$.
Observe that
the 
standard
linear router~\eqref{eqn:bert-router}
for a static pool 
$\msetTrain = \{ \hOld^{(1)}, \ldots, \hOld^{(M)} \}$
corresponds to a \emph{one-hot} LLM representation
$\LLMEmbed_{\text{oh}}( h ) = \begin{bmatrix} \1( h = \hOld^{(m)} ) \end{bmatrix}_{m \in [ M ]}$,
and a prompt representation 
$\QueryEmbed( \bx ) = \mathbf{W} \PretrainedQueryEmbed( \bx ) \in \Real^{M}$
for $\mathbf{W} \in \Real^{M \times \PretrainedQueryEmbedDim}$.
As noted previously, such an approach is inherently tied to the LLM pool $\msetTrain$,
analogous to the cold-start problem in collaborative filtering~\citep{Schein:2002}.
A na\"{i}ve 
alternate 
idea 
is to flatten the LLM's trained parameters.
However,
these would be in excess of billions of dimensions 
(exacerbating the risk of overfitting),
and 
are inadmissible for many proprietary LLMs.
We now examine an alternative LLM representation based on \emph{its performance on a subset of prompts}.

\begin{tcolorbox}[
title={\textbf{\ourMethod{}: Routing with a Dynamic LLM Pool}},
halign title=flush center, 
float,floatplacement=!t,
colback=gray!5!white,
colframe=gray!5!white,
colbacktitle=gray!50!white,
arc=3mm,                %
boxsep=0pt,          %
left=2pt,            %
right=2pt,           %
top=2pt,             %
bottom=2pt,          %
toptitle=1pt,        %
bottomtitle=1pt,     %
]
    \begin{enumerate}[label=(\arabic*),leftmargin=16pt]
        \item \textbf{Train base router.}
        Fit parameters of $\gamma_{\text{uni}}$ from~\eqref{eqn:gamma-uniroute} on training set 
        $\STrain = \{ ( \bx^{(i)}, \by^{(i)} ) \}_{i = 1}^{\NTrain}$.
        \item \textbf{Embed test LLMs.}
        Compute $\LLMEmbed( \hNew )$ for each test LLM $\hNew \in \msetNew$, e.g., via~\eqref{eqn:uniroute-llm-embed} on a validation set $\SVal = \{ ( \bx^{(j)}, \by^{(j)} ) \}_{j = 1}^{\NVal}$.
        \item \textbf{Route on new prompts.} 
        Given a new input prompt $\bx$, 
        pick the best test LLM via~\eqref{eqn:plugin-dynamic-routing}.
    \end{enumerate}
\end{tcolorbox}

\subsection{Representing an LLM via the \LossVectorSentenceCaps{}}
\label{sec:correctness_representation}

A key desideratum for
our LLM representation $\LLMEmbed$
is that 
$\LLMEmbed( h )^\top \LLMEmbed( h' )$ 
ought to be large for a pair $(h, h')$ of ``similar'' LLMs,
and small for a pair of ``dissimilar'' LLMs.
A reasonable definition of ``similar'' would thus enable the design of $\LLMEmbed$.
To this end,
we posit that two LLMs are ``similar'' if they 
\emph{have comparable performance on a set of representative prompts},
akin to proposals in~\citet{Thrush:2024,ZhuWuWen2024}.

Concretely, suppose 
that
we have access to a small 
(labelled)
validation set 
$\SVal = \{(\bx^{(j)},\by^{(j)})\}_{j=1}^{\NVal}$.
Further, suppose that 
\emph{any} LLM $h \in \msetAll$
--
\emph{including new LLMs unobserved during training}
--
can be evaluated
efficiently
on these prompts.
Then,
one may 
represent the LLM based on its
\emph{prediction error vector} on prompts from $\SVal$:
for suitable 
$F \colon \Real^{\NVal} \to \Real^K$
(e.g., a linear projection),
we choose:

\begin{tcolorbox}[
colback=gray!5!white,  %
colframe=gray!5!white, %
arc=3mm,                %
boxsep=0pt,          %
left=2pt,            %
right=2pt,           %
top=2pt,             %
bottom=2pt,          %
toptitle=0pt,        %
bottomtitle=1pt,     %
]
\begin{equation}
    \label{eqn:uniroute-llm-embed}
    \LLMEmbed( h ) = F\left( \begin{bmatrix} \1( \by^{(j)} \neq h( \bx^{(j)} ) ) \end{bmatrix}_{j \in [ \NVal ]} \right) \in \Real^K.
\end{equation}
\end{tcolorbox}
Interestingly, 
a special case of this
is the $K$-NN method~\eqref{eqn:knn-router} from~\citet{HuBieLi2024} 
applied to $\SVal$:
for 
$\LLMEmbed_{\text{knn}}( h ) = \begin{bmatrix} \1( \by^{(j)} \neq h( \bx^{(j)} ) ) \end{bmatrix}_{j \in [ \NVal ]} \in \Real^{\NVal}$ 
and 
$\QueryEmbed_{\text{knn}}( \bx ) \in \{ 0, 1 \}^{\NVal}$ indicating which validation samples are the $k$-nearest neighbours of $\bx$,~\eqref{eqn:gamma-uniroute} exactly reduces to~\eqref{eqn:knn-router}.
In general, however, it can prove useful to parameterise $F$ and \emph{learn} some compressed LLM representation in $K \ll \NVal$ dimensions.

We remark that~\citet{ZhuWuWen2024} also considered representing LLMs as feature vectors,
with the goal of enabling routing.
Crucially, however, their representations do \emph{not} enable generalisation to unseen LLMs,
and thus do not support dynamic routing;
\ifarxiv
cf.~\S\ref{sec:related} for more discussion.
\else
cf.~Appendix~\ref{app:related} for more discussion.
\fi

\subsection{Discussion: \ourMethod{} versus Standard Routing}

A central assumption in 
\ourMethod{}
is that 
for any new LLM,
one may efficiently compute
$\LLMEmbed( \cdot )$;
for $\LLMEmbed( \cdot )$ given by~\eqref{eqn:uniroute-llm-embed},
this 
in turn
assumes that
any new LLM can be efficiently evaluated on $S_{\rm val}$.
We stress some important points regarding this.
First,
the choice of prompts in $\SVal$ is of clear import.
In the simplest case, this may be a small subset of the training set $\STrain$.
More generally,
these could be hand curated based on domain knowledge,
or drawn from a standard benchmark suite
(which is often available for any new LLM).
Second,
we emphasise that
$\SVal$ is assumed to be of modest size 
(e.g., $\mathscr{O}( 10^{3} )$);
consequently,
performing inference for a new LLM on $\SVal$ is not prohibitive.
Third,
\ourMethod{} involves 
\emph{significantly less overhead} than 
na\"{i}ve router re-training.
Per~\S\ref{sec:setup},
re-training a router 
such as~\eqref{eqn:bert-router} on
$\SVal$ 
involves several steps of iterative training,
which 
for a constantly refreshing LLM pool
can impose a cumulatively onerous overhead.
Further, since $\SVal$ is of modest size,
re-training the router is susceptible to overfitting;
thus, \ourMethod{} can also yield better quality.

Interestingly,
the $K$-NN method
from~\citep{HuBieLi2024}
--
which, as noted in~\S\ref{sec:correctness_representation},
is a special case of \ourMethod{}
--
\emph{does} support new LLMs without re-training.
Indeed,
one may
readily compute 
$\gammaEstimator_{\text{kNN}}$ in~\eqref{eqn:knn-router}
based solely on the prediction error vector.
However, 
as
$\SVal$ is 
assumed to be
of modest size,
this approach 
may not generalise favourably;
indeed, even in moderate data regimes,~\citet{ZhuWuWen2024} observed that $K$-NN may underperform.
Further, it does not exploit any information from the 
(potentially large)
\emph{training} set $\STrain$.

These limitations notwithstanding,
$K$-NN has the appealing ability to exploit non-linear structure in the data.
We now explore a \emph{cluster-based} instantiation of \ourMethod{}
with a similar property.

\section{\ourMethod{} with Cluster-Based LLM Feature Representations}
\label{sec:proposal}

Building on the above,
we now consider
certain \emph{cluster-based} LLM representations, involving either unsupervised or supervised cluster assignments based on a large set of
training prompts.

\subsection{Representing an LLM via Per-Cluster Prediction Errors}
\label{sec:cluster_router}

We propose an instantiation of~\eqref{eqn:uniroute-llm-embed}
that represents any
LLM $h \in \msetAll$
through its average errors 
$\LLMClusterEmbedH \in [0,1]^K$
on $K > 1$ pre-defined \emph{clusters}. 
Similarly, we represent prompts via
their cluster membership
$\InputClusterEmbed \in \{ 0, 1 \}^K$.
This yields the following instantiation of~\eqref{eqn:gamma-uniroute}:
\begin{equation}
    \label{eqn:gamma-cluster}
    \gammaEstimator_{\cluster}(\bx, h) \defEq \InputClusterEmbed^\top \LLMClusterEmbedH.
\end{equation}
Intuitively,
$\gammaEstimator_{\cluster}(\bx, h )$ 
estimates the performance of a given LLM on 
a prompt $\bx$
by examining its performance on \emph{similar} prompts,
i.e.,
those belonging to the same cluster.

Na\"{i}vely, one may 
directly
cluster $\SVal$;
however,
this
is prone to overfitting,
since (by assumption) the set is of modest size.
Thus,
we instead cluster the prompts in the \emph{training} set $\STrain$.
We then
use this to
group the 
\emph{validation}
prompts into $K$ disjoint clusters, 
and compute per-cluster errors for a new LM using $\SVal$. %
Concretely, 
given a text embedder 
$\PretrainedQueryEmbed \colon \XCal \to \Real^{\PretrainedQueryEmbedDim}$,
we compute
$\InputClusterEmbed, \LLMClusterEmbedH$
via:
\begin{enumerate}[label=(\roman*),itemsep=1pt,topsep=0pt,leftmargin=16pt]
\item 
Cluster the \emph{training} set embeddings 
$\{\PretrainedQueryEmbed(\bx^{(i)})\}_{i=1}^{\NTrain}$
to construct $K$ non-overlapping clusters.
This yields a cluster assignment map
$\ClusterMembership \colon \mathscr{X}\to\{0,1\}^K$, 
where %
the $k$-th index is 1 when $\bx$ belongs to cluster $k$ (i.e., it is on average closest to samples in cluster $k$). This step does not require labels.
\item Assign each \emph{validation} set prompt to a cluster. 
Let $C_{k}\defeq\{(\bx, \by)\,:\,(\bx, \by) \in \SVal,\, \ClusterMembershipScalar{k}(\bx)=1\}$ be the subset of the validation set that belongs to cluster $k$.
\item For 
any LLM $h \in \msetAll$,
compute 
$\LLMClusterEmbedH \in [0,1]^{K}$ 
using its per-cluster validation errors: 
\begin{align}
\LLMClusterEmbedScalarHK \defeq\frac{1}{|C_{k}|}\sum_{(\bx,\by)\in C_{k}}\1\big[ \by \ne h(\bx) \big].
\label{eq:cluster_accs}
\end{align}
\end{enumerate}
Plugging these into~\eqref{eqn:gamma-cluster},
we may now approximate the expected loss for $\hNew^{(n)}$ on an input prompt $\bx$ using the average error of the LLM on the cluster the prompt is assigned to, and route via~\eqref{eqn:plugin-dynamic-routing}.

This cluster-based instantiation of \ourMethod{} 
can route with new LLMs in 
a highly efficient manner:
given any new test LLM, we simply need to estimate its
average per-cluster clusters for all validation prompts.
This
does \emph{not} require any expensive gradient updates,
and is a \emph{one-off cost}:
further routing
can operate entirely on this vector,
and is oblivious to any further changes in the LLM pool.

A natural choice of clustering algorithm in step (ii) is  $K$-means~\citep{Mac1967}, 
which returns a set of $K$ centroids 
and an assignment map $\ClusterMembership$ that assigns prompts 
to the cluster with the nearest centroid.
For $K = 1$, the router devolves to %
the \emph{ZeroRouter} from \citep{HuBieLi2024}
(see Appendix~\ref{app:pareto-random}).
Clearly, the 
practical utility of \ourMethod{}
depends on selection of $K$;
empirically, \ourMethod{} is reasonably robust to this parameter.
\ifarxiv
An illustration of our proposal is shown in \cref{fig:illus_cluster}.
\fi

\subsection{From Fixed to Learned Cluster Assignment Maps}
\label{sec:two_tower}

The above cluster-based router does not leverage the labels in $\STrain$.
We may exploit this information to further improve routing quality.
Specifically,
given the same clustering as above,
we propose to
\emph{learn} a
cluster assignment map
$\ClusterMembership( \cdot; \boldsymbol{\theta} ) \in [ 0, 1 ]^{K}$ 
parameterised by $\boldsymbol{\theta}$,
that can better map an input prompt to a distribution over clusters. 
Specifically, we parameterise
 $\ClusterMembershipScalar{k}( \boldsymbol{x}; \boldsymbol{\theta} ) \propto \exp\left(\theta_{k}^\top \PretrainedQueryEmbed( \boldsymbol{x} ) \right),$
for 
$\boldsymbol{\theta} \in \mathbb{R}^{K\times \PretrainedQueryEmbedDim}$
and text embedding $\PretrainedQueryEmbed$.
Analogous to~\eqref{eqn:gamma-cluster}, we have:
\begin{align*}
\gammaEstimator_{\cluster}( \bx, h; \boldsymbol{\theta} ) &= \InputLearnedClusterEmbed^\top \LLMClusterEmbedH, %
\end{align*}
where 
$\LLMClusterEmbedH$ 
denotes the per-cluster errors for the LM estimated from the validation set $\SVal$, as in~\eqref{eq:cluster_accs};
note that this does \emph{not} depend on $\boldsymbol{\theta}$.
To pick $\boldsymbol{\theta}$, we minimize the log loss on the training set 
$\STrain$
against 
the correctness labels for 
the training LMs $\mset_\tr$:%
\begin{align*}
    -\sum_{(\bx, \by) \in \STrain} &\sum_{h \in \mset_\tr}
        \1\big[ \by \ne h(\bx)\big] \cdot 
            \log\,\gammaEstimator_{\cluster}\left(
                    \bx, h; \boldsymbol{\theta} \right) 
            +
        \1\big[ \by = h(\bx)\big] \cdot 
            \log \left( 
                1 - \gammaEstimator_{\cluster}\left(
                    \bx, h; \boldsymbol{\theta} \right) 
                    \right).
\end{align*}

\subsection{Excess Risk Bound}
\label{sec:excess-risk}
We now present an excess risk bound for our cluster-based routing strategy.
Suppose we represent the underlying data distribution over $(\bx,\by)$ by a mixture of $K$ latent components:
$\mathbb{P}(\bx,\by) = \sum_{k=1}^{K}\pi_{k} \cdot \mathbb{P}(\bx,\by\,|\,z=k),$
where $z$
denotes a latent random variable that identifies the mixture
component and $\pi_{k}=\mathbb{P}(z=k)$. For a fixed $k$, we may denote the probability of incorrect predictions
for 
$h \in \mset$ 
conditioned on $z=k$ by:
$
\LLMEmbedScalar^*_{k}( h ) \defeq \mathbb{P}_{\bx, \by|z=k}\left[h(\bx)\ne\by\right].
$%
Then, cluster-based routing
seeks to mimic the following population rule: 
\asrnote{Is the assumption here that the *clustering* map is able to identify the true latent variable for each prompt/prompt?}
\begin{align}
    \tilde{r}^{*}(\bx, \msetNew)  &=
    \underset{n \in [ \numSetNew ]}{\argmin} \, \left[ \gamma^*_{\cluster}(\bx, \hNew^{(n)}) +\lambda \cdot c( \hNew^{(n)} ) \right];
    \label{eq:cluster-rule-population}
    \\
    \gamma_{\cluster}^*(\bx, \hNew^{(n)}) &= \sum_{k\in[K]}\mathbb{P}(z=k|\bx) \cdot \LLMEmbedScalar^*_{k}(\hNew^{(n)}).
    \nonumber
\end{align}

\begin{prop}
\label{prop:cluster-regret-bound}
Let $r^*$ 
be as per~\eqref{eq:opt_rule01}.
For any $\msetNew = \{ \hNew^{( n )} \}_{n \in [ N ]} \in \mathbb{H}$, 
$\hNew^{( n )} \in \msetNew$,
and
$\bx \in \XCal$, let:
\begin{align*}
    \Delta_{k}(\bx, \hNew^{( n )} ) \defEq \left|\mathbb{P}_{\by|\bx,z=k}\left[\hNew^{( n )}(\bx)\neq\by\right] \,-\, \LLMEmbedScalar^*_{k}( \hNew^{( n )} ) \right|.
\end{align*}
Let 
$R_{01}(r, \msetNew) \defEq \sum_{n} \mathbb{P} \left[ \hNew^{( n )}(\bx)\neq\by \land r(\bx, \msetNew)=m \right]$ 
denote the 0-1 risk. %
Then under a regularity condition on $\mathbb{P}$,
the difference in 0-1 risk between $\tilde{r}^*$ and $r^*$ is bounded by:
\begin{align*}
\mathbb{E}_{\msetNew}\left[R_{01}(\tilde{r}^*,\msetNew)\right] ~-~ \mathbb{E}_{\msetNew}\left[{R}_{01}(r^*,\msetNew)\right] & 
 \,\leq\,
\mathbb{E}_{\msetNew, \bx}\left[\max_{\hNew^{( n )} \in \msetNew, k\in[K]}\,\Delta_{k}(\bx, \hNew^{( n )} )\right].
\end{align*}
\end{prop}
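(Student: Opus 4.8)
The plan is to compare the two routing rules pointwise in $(\bx,\msetNew)$, then take expectations. Fix $\msetNew$ and $\bx$, write $m^* \defeq r^*(\bx,\msetNew)$ and $\tilde m \defeq \tilde r^*(\bx,\msetNew)$, and abbreviate $\gamma(\bx,h) \defEq \mathbb{P}[\by\ne h(\bx)\mid \bx]$ and $\tilde\gamma(\bx,h) \defEq \sum_k \mathbb{P}(z=k\mid \bx)\,\LLMEmbedScalar_k(h)$. The key observation is that, by the tower rule over the latent component $z$, the \emph{true} cost-adjusted objective $\gamma(\bx,h)+\lambda c(h)$ equals $\sum_k \mathbb{P}(z=k\mid\bx)\,\mathbb{P}_{\by\mid\bx,z=k}[h(\bx)\ne\by] + \lambda c(h)$, whereas $\tilde r^*$ minimizes the surrogate $\tilde\gamma(\bx,h)+\lambda c(h)$. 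The two objectives differ only through replacing $\mathbb{P}_{\by\mid\bx,z=k}[h(\bx)\ne\by]$ by $\LLMEmbedScalar_k(h)$, so for any $h$,
\begin{equation*}
\bigl| (\gamma(\bx,h)+\lambda c(h)) - (\tilde\gamma(\bx,h)+\lambda c(h)) \bigr| \;\le\; \sum_{k}\mathbb{P}(z=k\mid\bx)\,\Delta_k(\bx,h) \;\le\; \max_{k\in[K]}\Delta_k(\bx,h).
\end{equation*}

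Next I would run the standard ``argmin of a perturbed objective'' argument. Since $\tilde m$ minimizes the surrogate objective, its true objective exceeds that of $m^*$ by at most twice the sup-norm perturbation: $\gamma(\bx,h^{(\tilde m)}) - \gamma(\bx,h^{(m^*)}) \le 2\max_{h\in\msetNew}\max_{k}\Delta_k(\bx,h) - \lambda(c(h^{(\tilde m)})-c(h^{(m^*)})) + \lambda(c(h^{(\tilde m)})-c(h^{(m^*)}))$; more carefully, writing $G(h) = \gamma(\bx,h)+\lambda c(h)$ and $\tilde G(h)=\tilde\gamma(\bx,h)+\lambda c(h)$, we have $G(h^{(\tilde m)}) \le \tilde G(h^{(\tilde m)}) + \varepsilon \le \tilde G(h^{(m^*)}) + \varepsilon \le G(h^{(m^*)}) + 2\varepsilon$ where $\varepsilon = \max_{h\in\msetNew}\max_k \Delta_k(\bx,h)$. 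Subtracting the $\lambda c$ terms, the instantaneous regret in the \emph{loss} part alone need not telescope cleanly because of the cost terms; but the quantity that actually appears in $R_{01}$ is exactly $\sum_n \mathbb{P}[\hNew^{(n)}(\bx)\ne\by \wedge r(\bx,\msetNew)=m]$, i.e. $\mathbb{P}_{\by\mid\bx}[h^{(r(\bx,\msetNew))}(\bx)\ne\by] = \gamma(\bx,h^{(r(\bx,\msetNew))})$ after taking expectation over $\bx$. I would therefore want to bound $\gamma(\bx,h^{(\tilde m)}) - \gamma(\bx,h^{(m^*)})$, which means I need the factor-of-two cancellation to respect the cost terms.

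The clean way around the cost-term bookkeeping — and I expect this to be the one subtle point — is to use the two optimality inequalities \emph{with a consistent $\lambda$}: $r^*$ minimizes $G$ and $\tilde r^*$ minimizes $\tilde G$ with the \emph{same} $\lambda$ (this is implicit in the statement, where the comparison is at matched Lagrange multiplier). Then $G(h^{(\tilde m)}) - G(h^{(m^*)}) \le \bigl(\tilde G(h^{(\tilde m)}) + \varepsilon\bigr) - \bigl(\tilde G(h^{(m^*)}) - \varepsilon\bigr) \le 2\varepsilon$, and since the cost terms $\lambda c(h^{(\tilde m)})$ and $\lambda c(h^{(m^*)})$ are common to $G$ and $\tilde G$, they survive into this bound only as $\lambda(c(h^{(\tilde m)}) - c(h^{(m^*)}))$ on \emph{both} sides and cancel, leaving $\gamma(\bx,h^{(\tilde m)}) - \gamma(\bx,h^{(m^*)}) \le 2\varepsilon$. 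This already gives the claim with a constant $2$; to recover the stated bound without the factor $2$, I would instead use the sharper one-sided comparison: by optimality of $m^*$ for $G$ and of $\tilde m$ for $\tilde G$, and the fact that $G$ and $\tilde G$ differ by at most $\varepsilon$ pointwise, one gets $G(h^{(\tilde m)}) \le \tilde G(h^{(\tilde m)})+\varepsilon \le \tilde G(h^{(m^*)})+\varepsilon$ and separately $\tilde G(h^{(m^*)}) \le G(h^{(m^*)})+\varepsilon$ would recover $2\varepsilon$ — so to get exactly $\varepsilon$ one must exploit that at the population level $\tilde r^*$ is \emph{defined} to minimize $\tilde G$, and the quantity being bounded is the gap of $\tilde r^*$ measured under $G$; writing it as $\sum_n\mathbb{P}[\,\cdot\,]$ and expanding, $R_{01}(\tilde r^*,\msetNew) - R_{01}(r^*,\msetNew) = \mathbb{E}_\bx[\gamma(\bx,h^{(\tilde m)}) - \gamma(\bx,h^{(m^*)})]$ (cost terms do not appear in $R_{01}$ at all), and I would bound this by $\mathbb{E}_\bx[\tilde\gamma(\bx,h^{(\tilde m)}) - \tilde\gamma(\bx,h^{(m^*)})] + \mathbb{E}_\bx[\max_k\Delta_k(\bx,h^{(\tilde m)})] + \mathbb{E}_\bx[\max_k\Delta_k(\bx,h^{(m^*)})]$, where the first expectation is $\le 0$ by definition of $\tilde r^*$ — wait, that still gives two $\Delta$ terms. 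The resolution the authors presumably intend: pair each $\Delta$ term with the correct rule so that only one survives, using that $G(h^{(m^*)}) \le G(h^{(\tilde m)})$ gives $\gamma(\bx,h^{(m^*)}) + \lambda c(h^{(m^*)}) \le \gamma(\bx,h^{(\tilde m)}) + \lambda c(h^{(\tilde m)})$, hence $\gamma(\bx,h^{(\tilde m)}) - \gamma(\bx,h^{(m^*)}) \le \lambda(c(h^{(m^*)}) - c(h^{(\tilde m)}))$, and symmetrically $\tilde G(h^{(\tilde m)}) \le \tilde G(h^{(m^*)})$ gives $\lambda(c(h^{(\tilde m)}) - c(h^{(m^*)})) \le \tilde\gamma(\bx,h^{(m^*)}) - \tilde\gamma(\bx,h^{(\tilde m)})$; chaining these two and using $|\gamma - \tilde\gamma|\le\max_k\Delta_k$ twice gives $0 \le \gamma(\bx,h^{(\tilde m)}) - \gamma(\bx,h^{(m^*)}) \le \max_k\Delta_k(\bx,h^{(\tilde m)}) + \max_k\Delta_k(\bx,h^{(m^*)}) \le 2\max_{h\in\msetNew,k}\Delta_k(\bx,h)$. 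So the honest bound carries a $2$; if the stated bound omits it, I would note the absolute value $|\cdot|$ in $\Delta_k$ is being used to absorb it, or re-examine whether the intended $\Delta_k$ already includes such a factor. Finally, taking $\mathbb{E}_{\msetNew}$ and pulling the max inside the expectation over $\bx$ via monotonicity yields the displayed inequality; the regularity condition on $\mathbb{P}$ is exactly what is needed (as in Proposition~\ref{prop:optimal_rule}) to ensure the argmins are attained and the tower-rule decomposition of $\gamma$ is valid. The main obstacle is thus purely the bookkeeping of the cost terms and the constant: the probabilistic content is the one-line tower-rule identity $\gamma(\bx,h) = \sum_k \mathbb{P}(z=k\mid\bx)\mathbb{P}_{\by\mid\bx,z=k}[h(\bx)\ne\by]$.
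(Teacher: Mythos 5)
Your pointwise comparison of the two argmins does not close as stated. The inequality you rely on, $\gamma(\bx,h^{(\tilde m)}) - \gamma(\bx,h^{(m^*)}) \leq 2\max_{h,k}\Delta_k(\bx,h)$ for each fixed $\bx$, is false in general. Carefully unwinding the two optimality inequalities $G(h^{(m^*)}) \le G(h^{(\tilde m)})$ and $\tilde G(h^{(\tilde m)}) \le \tilde G(h^{(m^*)})$ (and note the sign: the first actually gives $\gamma(\bx,h^{(\tilde m)}) - \gamma(\bx,h^{(m^*)}) \geq \lambda\bigl(c(h^{(m^*)}) - c(h^{(\tilde m)})\bigr)$, the reverse of what you wrote), you only get $\gamma(\bx,h^{(\tilde m)}) - \gamma(\bx,h^{(m^*)}) \leq 2\varepsilon + \lambda\bigl(c(h^{(m^*)}) - c(h^{(\tilde m)})\bigr)$ with $\varepsilon = \max_{h,k}\Delta_k(\bx,h)$. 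The residual cost term has no definite sign pointwise, your claimed lower bound $0 \leq \gamma(\bx,h^{(\tilde m)}) - \gamma(\bx,h^{(m^*)})$ is also false pointwise, and the residual only vanishes after integrating over $\bx$ and $\msetNew$ and invoking that both rules saturate the shared budget. The matched-$\lambda$ assumption your plan leans on is itself unjustified: $r^*$ and $\tilde r^*$ solve two different constrained programs at the same budget $B$, and their Lagrange multipliers need not coincide.

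The paper avoids all of this bookkeeping by staying global rather than pointwise. It defines a proxy risk $\tilde R_{01}(r,\msetNew) \defEq \mathbb{E}_{\bx,z}\bigl[\sum_n \LLMEmbedScalar_z(\hNew^{(n)})\,\1(r(\bx,\msetNew)=n)\bigr]$ and proves two lemmas: first, that $\tilde r^*$ is the minimizer of $\mathbb{E}_{\msetNew}[\tilde R_{01}(\cdot,\msetNew)]$ \emph{subject to the budget constraint}; second, that $|R_{01}(r,\msetNew) - \tilde R_{01}(r,\msetNew)| \le \mathbb{E}_\bx[\max_{n,k}\Delta_k(\bx,\hNew^{(n)})]$ for any router $r$, which is where your tower-rule identity $\gamma(\bx,h) = \sum_k\mathbb{P}(z=k|\bx)\,\mathbb{P}_{\by|\bx,z=k}[h(\bx)\ne\by]$ is used. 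The excess risk is then the three-term telescope $R_{01}(\tilde r^*) - R_{01}(r^*) = [R_{01}(\tilde r^*)-\tilde R_{01}(\tilde r^*)] + [\tilde R_{01}(\tilde r^*) - \tilde R_{01}(r^*)] + [\tilde R_{01}(r^*)-R_{01}(r^*)]$: the middle term is $\leq 0$ in expectation over $\msetNew$ because $r^*$ is feasible for the proxy constrained program, and the outer two are each $\leq \mathbb{E}_\bx[\max_{n,k}\Delta_k]$. This sidesteps the residual cost term entirely --- the budget constraint, not a shared $\lambda$, does the work. Your suspicion about the constant is correct: the paper's own derivation yields $2\cdot\mathbb{E}_{\bx,\msetNew}[\max_{n,k}\Delta_k]$, which does not match the factor of $1$ in the displayed statement of the proposition.
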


This suggests that the quality gap 
between cluster-based routing in~\eqref{eq:cluster-rule-population} %
and the optimal rule in~\eqref{eq:opt_rule01} is bounded by the discrepancy between the per-cluster and per-prompt errors:
i.e.,
the gap between the LLM's 
error on a prompt
versus the 
average  constituent cluster error (see Appendix \ref{app:cluster-regret-bound} for proof).

\section{Related Work}
\label{sec:related}
\textbf{Model routing.}\
Model routing has emerged as a simple yet effective strategy to lower LLMs' inference cost~\citep{Hendy:2023,Narayanan:2023}.
Recent works have studied various strategies to learn a %
router,
including
training an explicit ``meta-model'' based on a neuronal network~\citep{Ding:2024,Sakota:2024,CheJiaLin2024,Aggarwal:2024},
$k$-nearest neighbours~\citep{HuBieLi2024,Shnitzer:2023,Stripelis:2024,Lee:2024},
matrix factorisation~\citep{OngAlmWu2024,ZhuWuWen2024,Li:2025},
and graph neural networks~\citep{Feng:2024}.
Works have also explored the role of supervision in training a router~\citep{LuYuaLin2024,Zhao:2024},
and enhancing router robustness~\citep{Dann:2024,Montreuil:2025,Shafran:2025}.
Typically,
the router orchestrates amongst multiple independent LLMs,
although it is also possible to route amongst \emph{implicit} sub-models in a larger model, such as those defined by a MatFormer~\citep{Devvrit:2024,Cai:2024}.

\textbf{Model cascading.}\
Cascading is a closely related technique for orchestrating amongst multiple models,
wherein one \emph{sequentially invokes} {each} model in order of cost.
One then uses statistics from the resulting model output (e.g., the confidence) to decide whether or not to proceed to the next costlier model.
Cascading has a long history in computer vision applications~\citep{VioJon2001,WanLuoCra2018,Swayamdipta:2018,Rawat:2021,WanKonChr2022,Kag:2023,Jitkrittum:2023}.
Recently, cascades have also been successfully proven in the case of LLMs~\citep{Varshney:2022,CheZahZou2023,GupNarJit2024,Yue:2024,Chen:2024,Nie:2024,Chuang:2025}.

\ifarxiv

\setlength{\tabcolsep}{0.2em} %
\renewcommand{\arraystretch}{1.2}  %

\begin{table}[!t]
    \begin{centering}
    \resizebox{\linewidth}{!}{
    \begin{tabular}{>{\raggedright}p{3cm}>{\centering}m{1.5cm}>{\centering}m{4cm}>{\centering}m{1.5cm}>{\centering}m{2cm}>{\centering}m{1.5cm}>{\centering}m{3cm}}
    \toprule 
    Routing approach & Candidate LLMs & Training signals & Works without task labels & Adaptive computation & Unseen models at test time & Reference\tabularnewline
    \midrule
    Smoothie  & Any & Query embeddings. No label required. & \boldcheck &  \crossmark &  \crossmark & \citet{GuhCheCho2024}\tabularnewline
    Cascading with token-level features  & 2 & Pointwise evaluation. & \boldcheck & \boldcheck &  \crossmark & \citet{GupNarJit2024}\tabularnewline
    Summon the titans  & 2 & Annotations from a teacher model. & \boldcheck & \boldcheck  &  \crossmark & \citet{Rawat:2021} \tabularnewline
    RouteLLM  & 2 & Pairwise comparison metrics. & \boldcheck & \boldcheck  &  \crossmark & \citet{OngAlmWu2024}\tabularnewline
    $K$-NN router & Any & Pointwise evaluation, query embeddings.  & \boldcheck & \boldcheck  &  $\triangle$ & \citet{HuBieLi2024,Shnitzer:2023}\tabularnewline
    GraphRouter & Any &  Task information &  \crossmark & \boldcheck  & \boldcheck  & \citet{Feng:2024}\tabularnewline
    \midrule 
    Our proposal & Any & Pointwise evaluation, query clusters & \boldcheck & \boldcheck  & \boldcheck & This work\tabularnewline
    \bottomrule
    \end{tabular}
    }
    \par\end{centering}
    \caption{A qualitative comparison of recently proposed model routing approaches.
    Adaptive computation refers to the ability to trade quality for a
    reduced inference cost. 
    $\triangle$: The $K$-NN approach considered in \citet{HuBieLi2024,Shnitzer:2023} is for a fixed pool of LLMs.
    However, the approach can be straightforwardly extended to support unseen models at test time (i.e., dynamic pool). 
    }
\end{table}

\textbf{Selective classification and learning to defer.}\ 
The formal underpinnings of routing and cascading can be traced to
three closely related literatures:
learning to reject~\citep{chow1970optimum,Bartlett:2008,Cortes:2016}, 
selective classification~\citep{Geifman:2019,NarMenJit2024,NarMenJit2024a},
and learning to defer to an expert~\citep{Madras:2018,SanErdKon2023}.
Following pioneering studies of~\citet{Trapeznikov:2013,Bolukbasi:2017,MozSon2020},
a series of works have studied the routing and cascading problem through these lenses~\citep{NarJitMen2022,MaoMohMoh2024,Mao2024,MaoMohZho2024,MaoMohZho2024a}.

\textbf{Model fusion}
Model routing may be contrast to model \emph{fusion}, 
where the primary goal is to leverage multiple models to improve \emph{quality}, 
potentially at the expense of \emph{efficiency}.
This can involve
invoking multiple models prior to generating an output~\citep{Ravaut:2022,Jiang:2023,Guha:2024,Wang:2024,Hu:2024b},
or producing a single fused model~\citep{Singh:2020}.

\textbf{Mixture of experts (MoE).}\
Classically, MoE models focused on learning parameters for independent models, along with a suitable routing rule~\citep{Jacobs:1991,Jordan:1993}.
These have proven an plausible alternative to model specialisation~\citep{Jang:2023,Douillard:2024}.
Such models are typically of the same size; thus, cost considerations do not factor into the router design.
More recently, MoEs have focussed on \emph{sub}-models within a single larger model, e.g., a Transformer~\citep{Fedus:2022,Zhou:2022}.

\textbf{Early-exiting.}\
Early-exiting enables adaptive computation within a \emph{single} neural model,
by allowing computation to terminate at some intermediate layer~\citep{TeeMcDKun2016,ScaScaBac2020,Zhou:2020}. 
Often, the termination decision is based on a simple model confidence (akin to simple model cascading),
but learning approaches have also been considered~\citep{XinNogYu2020,SchFisGup2022}.

\textbf{Speculative decoding.}\
Speculative decoding is another technique that leverages two models to speed up inference, by using the smaller model to draft tokens and having the larger model verify them~\citep{Stern:2018,chen2023accelerating,leviathan2023fast,Tran-Thien_2023,Sun:2023,Zhou:2024,Cai:2024b,Li:2024,Li:2024b}.
Recent
works have studied approaches to combine
speculative decoding
with
early-exits~\citep{Elhoushi_2024} 
and
cascades~\citep{Narasimhan:2024}.

\else

Our solution allows a trained router to make use of these test-time LLMs without re-training.
This is unlike most, though not all existing solutions to the problem;
indeed, recent works~\citet{Feng:2024,Zhao:2024,Li:2025} have also proposed solutions to dynamic routing.
Compared to these works, we provide a formally-grounded approach that 
avoids dependence on exogenous \emph{task label} information for prompts, and relies on established statistical learning primitives; 
we defer a complete discussion to Appendix~\ref{sec:discussion_related}.
\fi

\section{Experiments}
\label{sec:experiments}
We demonstrate the effectiveness of 
\ourMethod{}
in the setting
of observing \textbf{new LLMs at test time}
on 
EmbedLLM \citep{ZhuWuWen2024}, 
RouterBench \citep{HuBieLi2024},
a Math+Code dataset from \citet{Dekoninck:2025} (containing a subset of Minerva Math and LiveCodeBench \citep{lewkowycz2022solving,jain2024livecodebench}),  
SPROUT o3-mini \citep{Somerstep:2025}, and 
Chatbot Arena \citep{ZheChiShe2023}.

\newcommand{\sig}{\hla{$^{*}$}}
\newcommand{\nosig}{\phantom{\sig}}

\begin{figure*}[!t]
  \begin{minipage}[b]{.98\linewidth}
  \centering
  \resizebox{\columnwidth}{!}{
    \begin{tabular}{lrrrrrrrrrrrr@{}}
    \toprule
    \multirow{2}{*}{
    \diaghead(-3,1){justpadspaceeee}{Method}{Dataset}
    } 
    & \multicolumn{3}{c}{EmbedLLM}  
    & \multicolumn{3}{c}{RouterBench} 
    & \multicolumn{3}{c}{Math+Code} 
    & \multicolumn{3}{c}{SPROUT o3-mini} 
    \\
    \cmidrule(lr){2-4} \cmidrule(lr){5-7} \cmidrule(lr){8-10} \cmidrule(lr){11-13}
     & Area (50\%) $\uparrow$ & Area  $\uparrow$ & QNC $\downarrow$ 
     & Area (50\%) $\uparrow$ & Area  $\uparrow$ & QNC $\downarrow$ 
     & Area (50\%) $\uparrow$ & Area  $\uparrow$ & QNC $\downarrow$ 
     & Area (50\%) $\uparrow$ & Area  $\uparrow$ & QNC $\downarrow$ 
     \\
    \midrule
    ZeroRouter~\citep{HuBieLi2024}
       & .285\sig & .607\sig & 87.5\%\sig
       & .320\sig & .689\sig & 99.9\%\nosig
      & .193\sig & .395\sig & 82.8\%\sig
      & .404\sig & .820\sig & 100.0\%\sig
      \\
      
    $K$-NN~\citet{HuBieLi2024,Shnitzer:2023}
       & .298\sig & .636\sig & 46.1\%\sig
        & .328\sig & .707\sig & 99.7\%\nosig 
        & .237\sig & .487\sig &	25.7\%\nosig
        & .418\sig & .844\sig & 29.6\%\sig
        \\
     \ourMethod{} ($K$-Means) 
        & .307\sig & .648\sig & 33.9\%\nosig
        & \best{.332}\nosig & \best{.712}\nosig & \best{99.4}\%\nosig &
        \best{.238}\nosig &	\best{.490}\nosig	& \best{25.7}\%\nosig	  & \best{.421}\nosig & \best{.850}\nosig & \best{19.6}\%\nosig
        \\
    \ourMethod{} (LearnedMap)
        & \best{.308}\nosig & \best{.651}\nosig & \best{33.2\%}\nosig
        & .331\nosig & .711\nosig & 99.6\%\nosig &
        ---\nosig & ---\nosig & ---\nosig
        & .420\nosig & .846\nosig & 23.4\%\nosig
        \\
    \midrule
       MLP (Clairvoyant)%
        & .314\nosig & .664\nosig & 26.9\%\nosig
        & .339\nosig & .723\nosig & 95.2\%\nosig 
        & .242\nosig & .500\nosig & 25.1\%\nosig
        & .427\nosig & .859\nosig & 4.5\%\nosig
        \\
    \bottomrule
    \end{tabular}
   }
  \end{minipage}
  \begin{minipage}[b]{.98\linewidth}
  \vspace{5pt}
         \begin{subfigure}{0.245\textwidth}
          \includegraphics[scale=0.22]{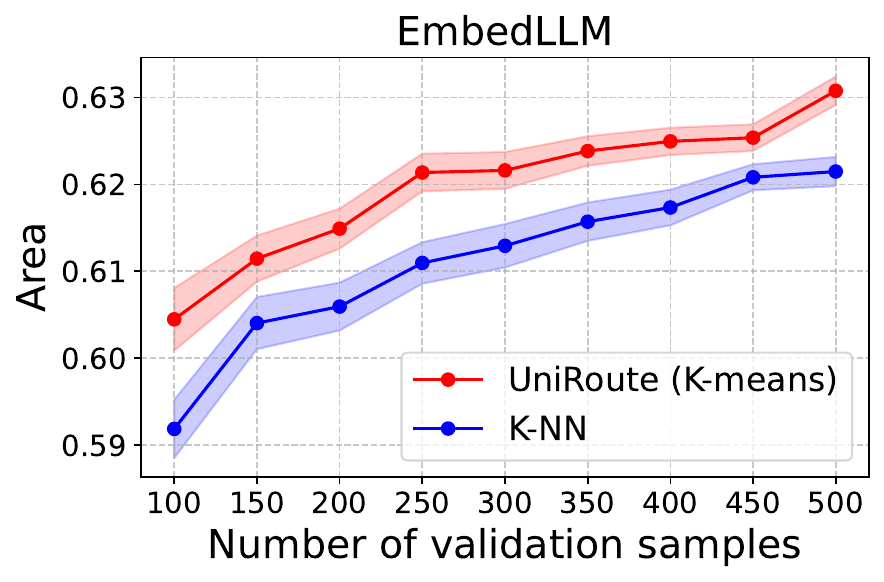}
        \end{subfigure}
        \begin{subfigure}{0.245\textwidth}
        \includegraphics[scale=0.22]{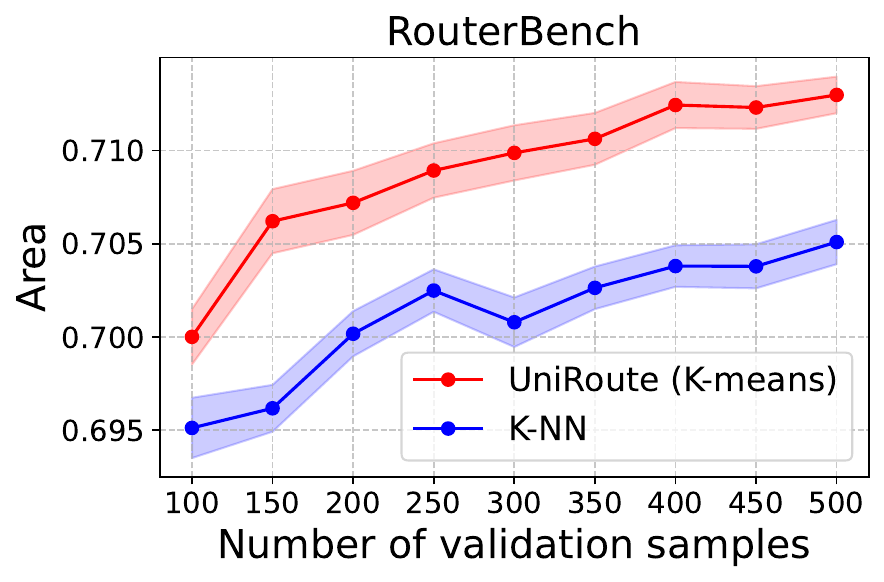}
        \end{subfigure}
        \begin{subfigure}{0.245\textwidth}
        \includegraphics[scale=0.22]{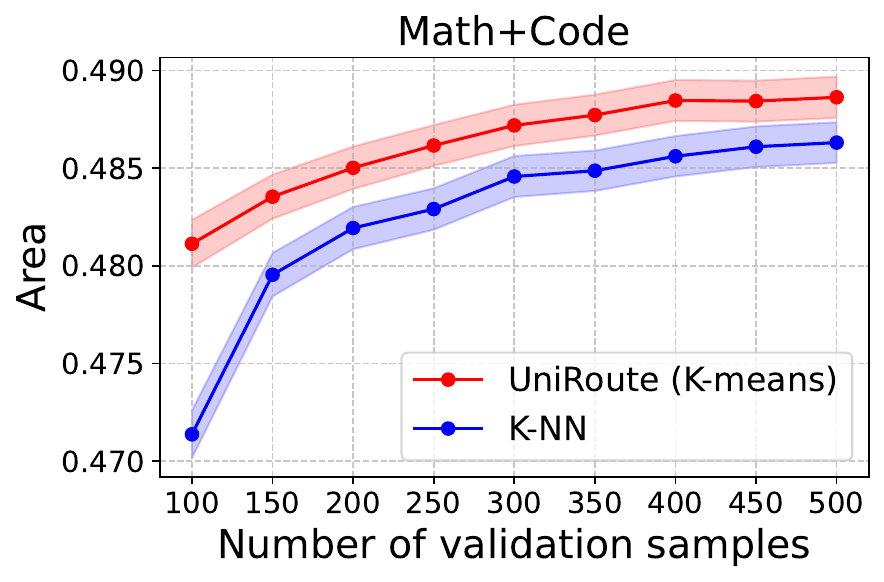}
        \end{subfigure}
        \begin{subfigure}{0.245\textwidth}
        \includegraphics[scale=0.22]{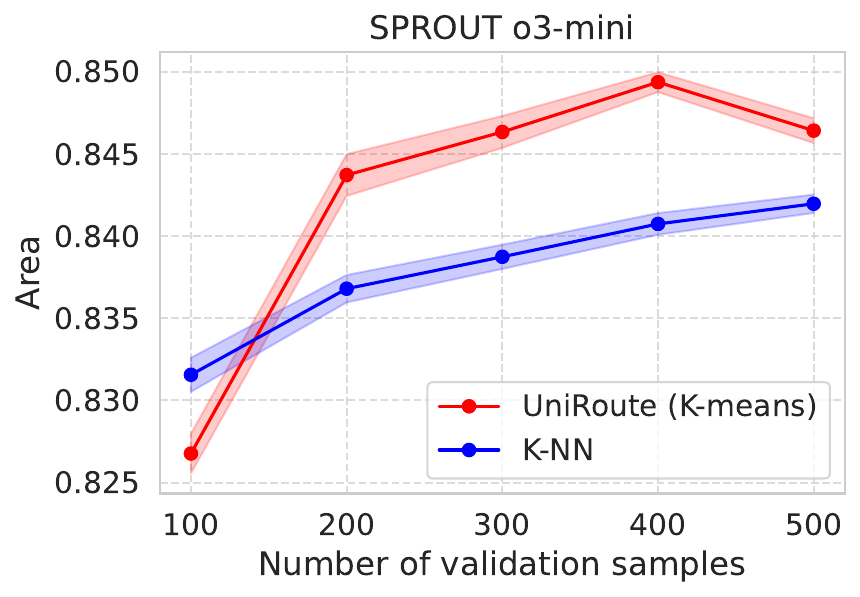}
        \end{subfigure}
  \end{minipage}
  \caption{
    {\textbf{Top}}: %
    We report the \emph{area} under the deferral curve (up to $50\%$ and $100\%$ cost), and the Quality-Neutral Cost (QNC), i.e., the minimum relative cost to achieve the same performance as the most accurate LLM.  For Math+Code, we do not have training LLMs; so we do not report results for \ourMethod{} (LearnedMap). $\sig$ indicates the method is statistically significantly worse than \ourMethod{} (LearnedMap) at significance level $\alpha=0.01$ (we compare against K-means for Math+Code). \textbf{MLP (Clairvoyant) is an \textit{oracle}} that uses the test LLMs for training (provides a performance \emph{upper bound}).  %
    {\textbf{Bottom}}: Areas under the deferral curve ($\uparrow$) with $96\%$ CI on \textbf{unseen test LLMs} for varying number of validation samples.  \ourMethod{} ($K$-means) consistently outperforms $K$-NN for small sample sizes.
    }
\vspace{-12pt}
\label{fig:three_experiments}
\end{figure*}

\ifarxiv
\subsection{Experimental Setup}
\else
\underline{\textbf{Experimental setup}}.
\fi
We first describe the 
\textbf{LLM pool construction.}
With EmbedLLM, RouterBench, and SPROUT o3-mini, we partition the set of LLMs  into two disjoint sets: training models ($\mset_\mathrm{tr}$ in \S\ref{sec:proposal}) and testing models ($\mset_\mathrm{te}$). 
For EmbedLLM ($112$ LLMs) and SPROUT o3-mini (15 LLMs), we use a random subset of $\sfrac{2}{3}$ for training and $\sfrac{1}{3}$ for testing. For  RouterBench (11 LLMs in total), we use a random 50\% 
for training and the rest for testing. For Math+Code, we use all 4 LLMs for testing; consequently, the training data is unlabeled.

\wjtext{For each of the 400 independent trials}, we randomly split examples into 60\%/10\%/30\% for training, validation, and testing %
(for RouterBench, and SPROUT o3-mini we use 1\% for validation). 
The training portion is for training a router, and only has correctness labels of training models.
The validation split is the small dataset used to represent each test-time LLM as a feature vector (see \S\ref{sec:cluster_router}). 
All baselines are evaluated on the test examples and \emph{only} on the test LLMs. 
We use Gecko 1B~\citep{LeeDaiRen2024} to produce a $768$-dimensional prompt embedding where required.

\textbf{Per-example metrics.}\
All datasets considered in the main text use binary accuracy as the 
the evaluation metric.
Thus, all 
methods
rely on the deferral rule described in \eqref{eq:opt_rule01}.

\textbf{Baselines.}\ 
We reiterate that, with the notable exception of $K$-NN \citep{HuBieLi2024}, \textit{\textbf{most existing routers in the literature are inadmissible}} in a settingwith a \emph{dynamic} pool of LLMs. This is true of the multi-layer perceptron (MLP) \citep{HuBieLi2024}, matrix factorization \citep{OngAlmWu2024,ZhuWuWen2024} and BERT \citep{OngAlmWu2024,Ding:2024} routers, which have a \emph{fixed} number of outputs, one per training LLM, and are thus inherently tied to those LLMs.
Nonetheless, we include some of these methods as
an \emph{oracle} %
(by assuming the set of LLMs is fully known) to estimate an \emph{upper bound} on router performance on unseen LLMs.
The baselines we consider are:
\begin{itemize}[itemsep=0pt,topsep=0pt,leftmargin=16pt]%
    \item \textit{ZeroRouter} \citet{HuBieLi2024}. 
    A \textit{random router} that randomizes between two LLMs, where the LLMs and mixing coefficients are chosen to maximize the expected quality on the validation sample, while satisfying the budget constraint 
    (details in Appendix \ref{app:pareto-random}).

    \item \textit{$K$-NN} \citep{HuBieLi2024,Shnitzer:2023}. A special case of \ourMethod{} (see \S\ref{sec:correctness_representation}) that for each prompt, looks up the $K$ nearest prompts in the validation set in the space of prompt embeddings, computes $\gammaEstimator$ for each test LLM using \eqref{eqn:knn-router} (with the 0-1 loss), and routes via  \eqref{eqn:post-hoc}.
    
    \item \textit{MLP (Clairvoyant upper bound)}. An MLP router with one output for each train and test LLM, trained on prompt embeddings to estimate $\gammaEstimator$. For training, we use the combined training and validation set, annotated with correctness labels from both the train and test LLMs. %
    This {oracle} baseline provides an estimate %
     of the \emph{performance achievable when all LLMs are observed}.

\end{itemize}

We compare them to our \ourMethod{} cluster-based routing method using both (i) unsupervised $K$-means  for clustering (\S\ref{sec:cluster_router}), and (ii) the supervised learned cluster assignment map (\S\ref{sec:two_tower}).  
In \Cref{app:expts-additional}, we also include as a baseline \emph{Clairvoyant version of the matrix factorization} router~\citep{OngAlmWu2024,ZhuWuWen2024}. %

\begin{wrapfigure}{r}{0.45\textwidth}
  \begin{center}
  \vspace{-5pt} 
    \includegraphics[width=0.45\textwidth]{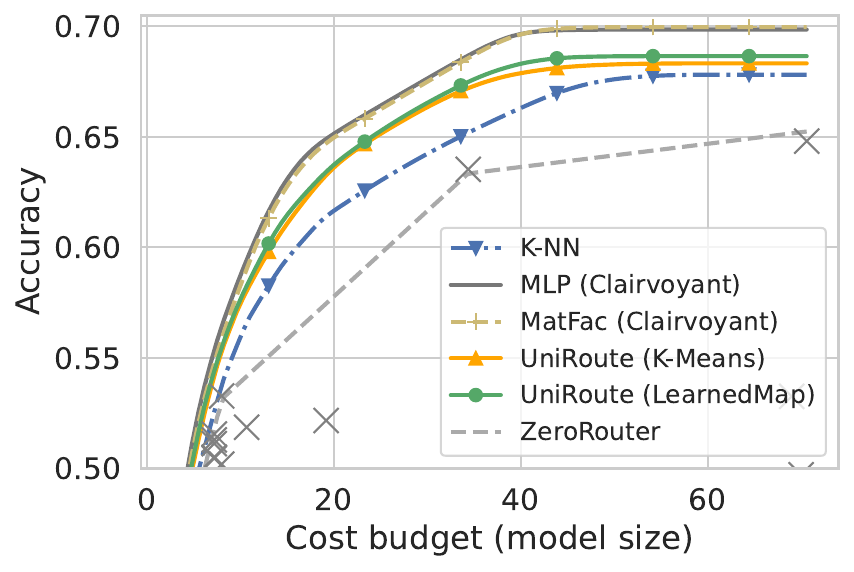}
  \end{center}
  \vspace{-5pt}
  \caption{Deferral curves on EmbedLLM.}
  \vspace{-5pt}
  \label{fig:deferral-curve-embedllm}
\end{wrapfigure}

\textbf{Evaluation.}\ We evaluate each method with a \emph{deferral curve}
(Appendix~\ref{sec:deferral-curve};~\citep{Jitkrittum:2023,WanAugRus2024,HuBieLi2024}),
which plots the 
trade-off of
average quality against cost.
The trade-off is realized by varying $\lambda_\mathfrak{H}$ in the routing rule in \eqref{eq:opt_rule}.
For EmbedLLM, we use the number of parameters of the LLM as the cost of processing one prompt. 
This cost is a proxy for the amount of computation required to call each LLM.
For RouterBench and SPROUT o3-mini, we plot LLMs' API calling costs (in USD) as available in the dataset.

\textbf{Hyper-parameter tuning.}\ We apply the following procedure to pick $K$ for $K$-NN, $K$-means, %
and the Learned cluster map: %
for each parameter, we represent the training LLM using correctness labels in the training set, evaluate the routing rule for the training LLMs on the validation set, and measure the area under the deferral curve. This evaluation metric can be seen as the average improvement in quality per unit cost (analogous to the AIQ metric used in \citet{HuBieLi2024}). The parameter with the maximum area is then chosen. See Appendix \ref{sec:validate_k} for details.

\ifarxiv
\subsection{Experimental Results}
\else
\underline{\textbf{Experimental results}}.
\fi
We present deferral curves for different methods on EmbedLLM in \cref{fig:deferral-curve-embedllm}, and on other datasets in Appendix \ref{app:deferral}. 
Each isolated  point \textcolor{gray}{$\times$} represents the cost and average test accuracy of one testing LLM.
In the table, we report three evaluation metrics for each method: (i) the area under the deferral curve (Area); (ii) the area up to 50\% cost; and (iii) the quality-neutral cost (QNC) or the minimum relative cost needed to achieve the same performance as the most accurate testing LLM. %
Note that the QNC is analogous to the  call-performance threshold %
reported in \citet{OngAlmWu2024}.  The table in Figure \ref{fig:three_experiments} summarizes these metrics for all four datasets.

\textbf{\ourMethod{} enables generalisation under dynamic LLM pools}.\ \ourMethod{} yields competitive quality-cost trade-offs, with the gains over $K$-NN being \emph{statistically significant}.
In particular, on EmbedLLM
--
featuring \textbf{$> 30$ unseen LLMs}
-- 
\harichange{we provide compelling gains over $K$-NN on all metrics.} 
Further,
on all four datasets, 
we 
consistently outperform
ZeroRouter, which was noted as a strong baseline in~\citet{HuBieLi2024}.
Appendix~\ref{sec:chatbot_exp_details} further shows \ourMethod{} 
is effective when the LLM representations are constructed from Chatbot Arena, and evaluated  on EmbedLLM.

\textbf{\ourMethod{} is effective even with a small validation sample}.\
\harichange{We show in Figure \ref{fig:three_experiments} (bottom), that our proposal is often significantly better than $K$-NN across a range of validation sample sizes.} 
One potential reason for this is
the requirement in $K$-NN that only the retrieved neighbors from the validation set can be used to estimate test models' performance. It is hence unable to exploit the training set in any way. In contrast, our methods are able to exploit the training data either in an unsupervised ($K$-means)
or
supervised (Learned cluster map)
manner.

\textbf{\ourMethod{} is robust to choice of clusters $K$}.
\Cref{app:expts-additional} shows that
in general,
\ourMethod{} is effective for several different $K$ values,
and
our hyperparameter selector picks an effective $K$.

\textbf{\ourMethod{} maintains generalisation under static LLM pools}.\ 
While the 
dynamic pool setting is the focal point of our work,
we show in Appendix~\ref{app:static-pool}
that even in the \emph{static} LLM pool setting,
\ourMethod{} typically performs comparable to most baselines.

\textbf{Qualitative analysis of $\LLMEmbed( h )$ embeddings}.\
Appendix~\ref{sec:embed-viz} presents further qualitative analysis of the LLM embeddings deriving from the \LossVector{} representation.
These show that a largely intuitive grouping of ``similar'' LLMs (e.g., coding-specialists) in the embedding space.

\section{Conclusion and Future Work}
We present principled strategies for routing amongst multiple unseen test-time LLMs,
by leveraging
a \emph{\LossVector{}} LLM representation.
An interesting future direction is to 
enhance routing robustness to prompt distribution shifts,
such as by allowing the set of representative prompts $S_{\rm val}$ to dynamically vary. 
Such a routing system will further reduce the need for frequent router re-training.
\asrnote{Let's add an impact statement as well?}

\bibliographystyle{plainnat}
\bibliography{refs}

\clearpage
\appendix

\addcontentsline{toc}{section}{Appendix} %
\part{Appendix} %
\parttoc %

\raggedbottom 
\section{Limitations}
\label{sec:limitations}

Our work has some limitations that would be worthy subjects for future research.
First, in the fully static LLM pool setting, our proposed method is not \emph{guaranteed} to recover the performance of existing methods such as the linear router~\eqref{eqn:bert-router} (owing to a dependence on the validation sample, which may be a subset of the training data);
designing \emph{hybrid} static-dynamic routers would be of interest.

Second, while we proposed certain natural cluster-based routers, there could be other instances of \ourMethod{} that are also worth exploring.
A more systematic study of this design space would be worthwhile.

\section{Societal Impact}
\label{sec:impact}

Our primary contribution is a mathematical formalism for routing amongst multiple dynamic LLMs, with concrete instantiations based on prompt clustering.
These are coupled with empirical results on public benchmarks, typically involving LLM accuracy as the primary metric.

We do not foresee immediate negative societal impact from our mathematical framework.
While well-studied in the literature, the underlying routing problem itself \emph{could} lend itself to some undesirable outcomes;
e.g., a router may overly favour models that produce outputs that are systematically biased on certain data sub-groups.
We believe that our framework is amenable to such constraints,
and believe that
accounting for these is a worthy subject of future research.

\section{Proofs of Results in Main Body}
\label{sec:proof}

In what follows, we use $r_\mset(\bx)$ and $r(\bx, \mset)$ interchangeably.
As a shorthand, we define
\begin{align*}
S(r) & \defeq\mathbb{E}_{(\bx,\by,\mset)}\left[\sum_{m\in[|\mset|]}\1(r(\bx,\mset)=m)\cdot\ell(\bx,\by,h^{(m)})\right], \\
C(r) & \defeq\mathbb{E}_{(\bx,\mset)}\left[\sum_{m\in[|\mset|]}\1(r(\bx,\mset)=m)\cdot c(h^{(m)})\right].
\end{align*}
The constrained optimization problem in \eqref{eq:risk} is equivalent to
\begin{align}
 & \min_{r\in\mathscr{R}}S(r)\text{ subject to }C(r)\le B.
 \label{eq:shortcon}
\end{align}
It will be useful to consider the following related unconstrained optimization objective:
\begin{align}
L(r,\lambda) & =S(r)+\lambda\cdot C(r).
\label{eq:shortuncon}
\end{align}
For any $\lambda\ge0$, let $r_{\lambda}^{*}$ be the minimiser of $L(r,\lambda)$.

\subsection{Intermediate Results}
We first provide results that will be useful for providing the main claims in \S\ref{sec:proof_opt_rule} and \S\ref{app:cluster-regret-bound}.

\begin{lem}
\label{lem:opt_uncon}Given $\lambda\ge0$, the optimal solution $r_{\lambda}^{*}\in\arg\min_{r}L(r,\lambda)$
to the unconstrained problem in (\ref{eq:shortuncon}) is
\begin{align*}
r_{\lambda}^{*}(\bx,\mset) & \in\argmin_{m\in[|\mset|]}\mathbb{E}_{\by|\bx}\left[\ell(\bx,\by,h^{(m)})\right]+\lambda\cdot c(h^{(m)}).
\end{align*}
\end{lem}

\begin{proof}
Starting with the definition of $L$,
\begin{align*}
L(r,\lambda) & =S(r)+\lambda\cdot C(r)\\
 & =\mathbb{E}_{(\bx,\by,\mset)}\left[\sum_{m\in[|\mathscr{H}|]}\1(r(\bx,\mathscr{H})=m)\cdot\ell(\bx,\by,h^{(m)})\right]+\lambda\cdot\mathbb{E}_{(\bx,\mathscr{H})}\left[\sum_{m\in[|\mathscr{H}|]}\1(r(\bx,\mathscr{H})=m)\cdot c(h^{(m)})\right]\\
 & \stackrel{(a)}{=}\mathbb{E}_{\mset}\mathbb{E}_{\bx}\mathbb{E}_{\by|\bx}\left[\sum_{m\in[|\mathscr{H}|]}\1(r(\bx,\mathscr{H})=m)\cdot\left\{ \ell(\bx,\by,h^{(m)})+\lambda\cdot c(h^{(m)})\right\} \bigg|\thinspace\bx\right]\\
 & =\mathbb{E}_{\mset}\mathbb{E}_{\bx}\bigg[\underbrace{\sum_{m\in[|\mathscr{H}|]}\1(r(\bx,\mathscr{H})=m)\cdot\left\{ \mathbb{E}_{\by|\bx}\left[\ell(\bx,\by,h^{(m)})\big|\thinspace\bx\right]+\lambda\cdot c(h^{(m)})\right\} }_{\mathcal{L}_{\mset,\bx}}\bigg],
\end{align*}
where $(a)$ uses the fact that the draw of $\mset$ is independent
of the draw of $(\bx,\by)$. The last line makes it clear that for
any fixed $\mset\sim\mathfrak{H}$, and any fixed $\bx$, to minimize
the overall loss, the router ought to route to the model that has
the lowest cost-adjusted loss $\mathcal{L}_{\mset,\bx}$. Thus, 
\[
r^{*}(\bx,\mset)\in\argmin_{m\in[|\mset|]}\mathbb{E}_{\by|\bx}\left[\ell(\bx,\by,h^{(m)})\right]+\lambda\cdot c(h^{(m)}).
\]
\end{proof}

\begin{lem}
\label{lem:lagrange_minimiser}For any $\lambda\ge0$, let $r_{\lambda}^{*}$
be the minimiser of $L(r,\lambda)$. Then, for any $r$, if $C(r_{\lambda}^{*})\ge C(r)$,
then $S(r_{\lambda}^{*})\le S(r)$.
\end{lem}

\begin{proof}
Since $r_{\lambda}^{*}$ minimises $L(r,\lambda)$, for any $r$,
we have $L(r_{\lambda}^{*},\lambda)\le L(r,\lambda)$. That is, 
\begin{align}
 & S(r_{\lambda}^{*})+\lambda\cdot C(r_{\lambda}^{*})\le S(r)+\lambda\cdot C(r)\label{eq:lagrange_min_implies}\\
\iff & S(r_{\lambda}^{*})\le S(r)+\lambda\cdot\left(C(r)-C(r_{\lambda}^{*})\right).\nonumber 
\end{align}
Since $\lambda\ge0$, it follows that $S(r_{\lambda}^{*})\le S(r)$
by the assumption that $C(r)-C(r_{\lambda}^{*})\le0$.
\end{proof}

\begin{lem}
\label{lem:lambda_penalizes_cost}Let $r_{\lambda}^{*}$
be the minimiser of $L(r,\lambda)$. For $\lambda,\lambda'\ge0$, $\lambda\ge\lambda'$
if and only if $C(r_{\lambda}^{*})\le C(r_{\lambda'}^{*})$. In other
words, $\lambda\mapsto C(r_{\lambda}^{*})$ is a non-increasing function.
Hence, $\sup_{\lambda\ge0}C(r_{\lambda}^{*})=C(r_{0}^{*})$.
\end{lem}

\begin{proof}
Since $r_{\lambda}^{*}\in\arg\min_{r}L(r,\lambda),$ by definition,
we have 
\begin{align*}
S(r_{\lambda}^{*})+\lambda\cdot C(r_{\lambda}^{*}) & \le S(r)+\lambda\cdot C(r),
\end{align*}
for any $r$, including $r_{\lambda'}^{*}$. This means
\begin{align*}
S(r_{\lambda}^{*})+\lambda\cdot C(r_{\lambda}^{*}) & \le S(r_{\lambda'}^{*})+\lambda\cdot C(r_{\lambda'}^{*}).
\end{align*}
In a symmetric manner,
\begin{align*}
S(r_{\lambda'}^{*})+\lambda'\cdot C(r_{\lambda'}^{*}) & \le S(r_{\lambda}^{*})+\lambda'\cdot C(r_{\lambda}^{*}).
\end{align*}
Adding the above two inequalities, we have
\begin{align*}
S(r_{\lambda}^{*})+\lambda\cdot C(r_{\lambda}^{*})+S(r_{\lambda'}^{*})+\lambda'\cdot C(r_{\lambda'}^{*}) & \le S(r_{\lambda'}^{*})+\lambda\cdot C(r_{\lambda'}^{*})+S(r_{\lambda}^{*})+\lambda'\cdot C(r_{\lambda}^{*})\\
\implies\lambda\cdot C(r_{\lambda}^{*})+\lambda'\cdot C(r_{\lambda'}^{*}) & \le\lambda\cdot C(r_{\lambda'}^{*})+\lambda'\cdot C(r_{\lambda}^{*})\\
\implies\lambda\cdot\left(C(r_{\lambda}^{*})-C(r_{\lambda'}^{*})\right)+\lambda'\cdot\left(C(r_{\lambda'}^{*})-C(r_{\lambda}^{*})\right) & \le0\\
\implies\left(\lambda-\lambda'\right)\left(C(r_{\lambda}^{*})-C(r_{\lambda'}^{*})\right) & \le0.
\end{align*}
The last inequality above implies that $\lambda\ge\lambda'$ if and
only if $C(r_{\lambda}^{*})\le C(r_{\lambda'}^{*})$.
\end{proof}

\begin{lem}
\label{lem:routed_model_changes}Let $\lambda,\lambda'\ge0$ such
that $\lambda\le\lambda'$. Given $\bx\in\mathscr{X}$ and $\mset\in\mathbb{H}$,
if $r_{\lambda}^{*}(\bx,\mset)\neq r_{\lambda'}^{*}(\bx,\mset)$,
then there exists a model pair $h^{(m)},h^{(k)}\in\mset$ with $h^{(m)}\neq h^{(k)}$
such that 
\begin{align*}
 & \lambda\le\frac{\text{\ensuremath{\mathbb{E}_{\by|\bx}\left[\ell(\bx,\by,h^{(k)})\right]}}-\mathbb{E}_{\by|\bx}\left[\ell(\bx,\by,h^{(m)})\right]}{c(h^{(m)})-c(h^{(k)})}\le\lambda'.
\end{align*}
\end{lem}

\begin{proof}
Given $\mset,$ define $A_{m}(\lambda,\bx)\defeq\mathbb{E}_{\by|\bx}\left[\ell(\bx,\by,h^{(m)})\right]+\lambda\cdot c(h^{(m)})$.
Recall that 
\[
r_{\lambda}^{*}(\bx,\mset)=\argmin_{m\in[|\mset|]}A_{m}(\lambda,\bx).
\]
Observe that given $(\bx,\mset)$, for each $m\in[|\mset|]$, $A_{m}(\lambda,\bx)$
is a one-dimensional affine function of $\lambda$. So, $r_{\lambda}^{*}(\bx,\mset)$
is the index of the affine function that gives the lowest value when
evaluated at $\lambda$. Fix $\bx$. Since $\lambda\mapsto A_{m}(\lambda,\bx)$
is continuous, for any $m$, if $r_{\lambda}^{*}(\bx,\mset)\neq r_{\lambda'}^{*}(\bx,\mset)$,
it must mean that the index of the lowest affine function changes
as we move from $\lambda$ to $\lambda'$. This implies that there
is an index pair $m$ and $k$ (with $m\neq k$) for which $\lambda\mapsto A_{m}(\lambda,\bx)$
and $\lambda\mapsto A_{k}(\lambda,\bx)$ cross, as we move from $\lambda$
to $\lambda'$. The cross point is precisely at $\bar{\lambda}$ such
that
\begin{align*}
 & A_{m}(\bar{\lambda},\bx)=A_{k}(\bar{\lambda},\bx)\\
\iff & \mathbb{E}_{\by|\bx}\left[\ell(\bx,\by,h^{(m)})\right]+\bar{\lambda}\cdot c(h^{(m)})=\mathbb{E}_{\by|\bx}\left[\ell(\bx,\by,h^{(k)})\right]+\bar{\lambda}\cdot c(h^{(k)})\\
\iff\bar{\lambda}= & \frac{\text{\ensuremath{\mathbb{E}_{\by|\bx}\left[\ell(\bx,\by,h^{(k)})\right]}}-\mathbb{E}_{\by|\bx}\left[\ell(\bx,\by,h^{(m)})\right]}{c(h^{(m)})-c(h^{(k)})}.
\end{align*}
\end{proof}

\begin{lem}
\label{lem:cost_continuous}Assume $\bx$ is a continuous random vector.
Then, $\lambda\mapsto C(r_{\lambda}^{*})$ is continuous.
\end{lem}

\begin{proof}
We will show that for any $\Delta\lambda\in\mathbb{R}$, $\lim_{\Delta\lambda\to0}|C(r_{\lambda+\Delta\lambda}^{*})-C(r_{\lambda}^{*})|=0$.
Observe that
\begin{align*}
|C(r_{\lambda+\Delta\lambda}^{*})-C(r_{\lambda}^{*})| & =\left|\mathbb{E}_{(\bx,\mset)}\left[c\left(h^{\left(r_{\lambda+\Delta\lambda}^{*}(\bx,\mset)\right)}\right)\right]-\mathbb{E}_{(\bx,\mset)}\left[c\left(h^{\left(r_{\lambda}^{*}(\bx,\mset)\right)}\right)\right]\right|\\
 & \stackrel{(a)}{\le}\mathbb{E}_{(\bx,\mset)}\left|c\left(h^{\left(r_{\lambda+\Delta\lambda}^{*}(\bx,\mset)\right)}\right)-c\left(h^{\left(r_{\lambda}^{*}(\bx,\mset)\right)}\right)\right|\\
 & =\mathbb{E}_{\mset}\int_{x\in\mathscr{X}}\left|c\left(h^{\left(r_{\lambda+\Delta\lambda}^{*}(\bx,\mset)\right)}\right)-c\left(h^{\left(r_{\lambda}^{*}(\bx,\mset)\right)}\right)\right|p(\bx)\,\mathrm{d}\bx\defeq\spadesuit,
\end{align*}
where we applied Jensen's inequality at $(a)$. Define $\mathcal{S}_{\Delta\lambda}$
to be the set of input points for which $r_{\lambda}^{*}$ and $r_{\lambda+\Delta\lambda}^{*}$
make different decisions. Precisely,
\begin{align*}
\mathcal{S}_{\Delta\lambda}(\mset) & \defeq\left\{ \bx\mid r_{\lambda+\Delta\lambda}^{*}(\bx,\mset)\neq r_{\lambda}^{*}(\bx,\mset)\right\} .
\end{align*}
Let $c_{\mathrm{max}}\defeq\max_{h\in\mset_{\mathrm{all}}}c(h)$ i.e.,
the maximum per-query cost of any LLM in our finite universe. The
quantity $\spadesuit$ can be bounded further with
\begin{align*}
\spadesuit & = \mathbb{E}_{\mset}\int_{x\in\mathcal{S}_{\Delta\lambda}(\mset)}\left|c\left(h^{\left(r_{\lambda+\Delta\lambda}^{*}(\bx,\mset)\right)}\right)-c\left(h^{\left(r_{\lambda}^{*}(\bx,\mset)\right)}\right)\right|p(\bx)\,\mathrm{d}\bx\\
 & \le 2 c_{\mathrm{max}}\mathbb{E}_{\mset}\int_{x\in\mathcal{S}_{\Delta\lambda}(\mset)}p(\bx)\,\mathrm{d}\bx\\
 & = 2c_{\mathrm{max}}\mathbb{E}_{\mset}\mathbb{P}(\mathcal{S}_{\Delta\lambda}(\mset)).
\end{align*}
The proof now amounts to showing that $\lim_{\Delta\lambda\to0}\mathbb{E}_{\mset}\mathbb{P}(\mathcal{S}_{\Delta\lambda}(\mset))=0$.

Define a shorthand $\lambda_{m,k}(\bx,\mset)\defeq\frac{\text{\ensuremath{\mathbb{E}_{\by|\bx}\left[\ell(\bx,\by,h^{(k)})\right]}}-\mathbb{E}_{\by|\bx}\left[\ell(\bx,\by,h^{(m)})\right]}{c(h^{(m)})-c(h^{(k)})}$
where $h^{(m)},h^{(k)}\in\mset$. We start by expanding $\mathcal{S}_{\Delta\lambda}(\mset)$
to have
\begin{align*}
\mathcal{S}_{\Delta\lambda}(\mset) & =\left\{ \bx\mid r_{\lambda+\Delta\lambda}^{*}(\bx,\mset)\neq r_{\lambda}^{*}(\bx,\mset)\right\} \\
 & \stackrel{(a)}{\subseteq}\left\{ \bx\mid\exists k\neq m:\thinspace\lambda_{m,k}(\bx,\mset)\in[\lambda,\lambda+\Delta\lambda]\right\} \\
 & =\bigcup_{m\neq k}\left\{ \bx\mid\lambda_{m,k}(\bx,\mset)\in[\lambda,\lambda+\Delta\lambda]\right\} \\
 & \defeq\mathcal{F}_{\Delta\lambda}(\mset),
\end{align*}
where at $(a)$ the subset relationship is due to Lemma \ref{lem:routed_model_changes}.
Since $\mathcal{S}_{\Delta\lambda}(\mset)\subseteq\mathcal{F}_{\Delta\lambda}(\mset)$,
we have
\begin{align*}
\spadesuit & \le 2 c_{\mathrm{max}}\mathbb{E}_{\mset}\mathbb{P}(\mathcal{F}_{\Delta\lambda}(\mset))\\
 & \stackrel{(a)}{\le} 2 c_{\mathrm{max}}\mathbb{E}_{\mset}\sum_{m\neq k}\mathbb{P}\left(\left\{ \bx\mid\lambda_{m,k}(\bx,\mset)\in[\lambda,\lambda+\Delta\lambda]\right\} \right)\\
 & = 2 c_{\mathrm{max}}\mathbb{E}_{\mset}\sum_{m\neq k}\mathbb{P}\left(\lambda_{m,k}(\bx,\mset)\in[\lambda,\lambda+\Delta\lambda]\right),
\end{align*}
where at $(a)$ we use the union bound. Note that there are a finite number
of summands because $\mset_{\mathrm{all}}$ is finite. Since $\bx$
is a continuous random variable, for any $\mset$, $\lambda_{m,k}(\bx,\mset)$
is also a continuous random variable. As $\Delta\lambda\to0$, we
have $\mathbb{P}\left(\lambda_{m,k}(\bx,\mset)\in[\lambda,\lambda+\Delta\lambda]\right)\to0$
for any $m,k$. This means $\spadesuit\to0$ and thus $\lim_{\Delta\lambda\to0}|C(r_{\lambda+\Delta\lambda}^{*})-C(r_{\lambda}^{*})|=0$.
\end{proof}

\begin{prop}
\label{prop:uncon_exists}
Suppose $\mathbb{P}(\by|\bx)$ and $\mathbb{P}(\bx)$ are continuous
in $\bx$. Let $r_{0}^{*}\in\arg\min_{r}L(r,0)$ where $L(r,\lambda)=S(r)+\lambda \cdot C(r)$ (see also \eqref{eq:shortuncon}). For any budget $B\in(0,C(r_{0}^{*})]$
in the constrained problem in \eqref{eq:risk} (or equivalently in \eqref{eq:shortcon}), there exists $\lambda_{\mathfrak{H}}\ge0$
such that the minimiser of $L(r,\lambda_{\mathfrak{H}})$ 
also minimises (\ref{eq:risk}).
\end{prop}

\begin{proof}
Since $\mathbb{P}(\by|\bx)$ and $\mathbb{P}(\bx)$ are continuous
in $\bx$, and $\lambda\mapsto C(r_{\lambda}^{*})$ is continuous
(by Lemma \ref{lem:cost_continuous}), there exists $\lambda_{B}\ge0$
such that $C(r_{\lambda_{B}}^{*})=B$. Applying Lemma \ref{lem:lagrange_minimiser}
with this choice of $\lambda_{B}$ implies that
\begin{align*}
S(r_{\lambda_{B}}^{*}) & \le S(r),
\end{align*}
for any $r$ such that $C(r)\le C(r_{\lambda_{B}}^{*})=B$. This proves
that $r_{\lambda_{B}}^{*}$ is the minimiser of (\ref{eq:risk}).
Setting $\lambda_{\mathfrak{H}}=\lambda_{B}$ completes the proof. 
\end{proof}

\subsection{Proof of Proposition \ref{prop:optimal_rule}}
\label{sec:proof_opt_rule}
\begin{prop*}[Restated]
Assume that $\bx \sim \mathbb{P}$ and $\eta(\bx) = \mathbb{P}(\by \mid \bx)$ are  continuous random variables. 
Then, for any LLM meta-distribution $\mathfrak{H}$,
and budget $B\in(0,C(r_{0}^{*})]$ (see \S\ref{sec:proof} for the definitions of $C$ and $r_0$), there exists $\lambda_{\mathfrak{H}} \ge 0$ such that
the optimal dynamic router $r^{*}$ for the constrained optimization
in \eqref{eq:risk} is
\begin{equation*}
r^{*}(\bx, \mathscr{H}) = \underset{m\in[ | \mathscr{H} | ]}{\argmin} \, \left[ \mathbb{E}_{\by\mid\bx}\left[\ell(\bx,\by,h^{(m)})\right]
+\lambda_{\mathfrak{H}} \cdot c(h^{(m)}) \right].
\end{equation*}
\end{prop*}

Note that the continuity assumption on the data distribution applies to the setting where $\bx$ can be represented via a fixed-length sentence embedding (i.e., a Euclidean vector). This is the primary setting of this work.

\begin{proof}
Under the continuity assumption on $\mathbb{P}$, by Proposition~\ref{prop:uncon_exists}, there exists $\lambda_{\mathfrak{H}}) \ge 0$ such that the minimiser of the unconstrained objective $L(r, \lambda_{\mathfrak{H}})$ (see $\eqref{eq:shortuncon}$) also minimises $\eqref{eq:risk}$.
By Lemma~$\ref{lem:opt_uncon}$, $r^*$ is established.

\end{proof}

\subsection{Proof of Proposition \ref{prop:cluster-regret-bound}}
\label{app:cluster-regret-bound}

\begin{prop*}[Restated]
For a set of LLMs $\mathscr{H}$, 
let $r^*$ denote the Bayes-optimal routing rule in Proposition \ref{prop:optimal_rule}. 
For any $\bx \in \XCal$ and $h^{(m)} \in \mathscr{H}$, let:
\begin{align*}
    \Delta_{k}(\bx, h^{(m)}) = \left|\mathbb{P}_{\by|\bx,z=k}\left[h(\bx)\neq\by\right] \,-\, \LLMEmbedScalar^*_{k}(h^{(m)}) \right|.
\end{align*}
Let $R_{01}(r, \msetNew) \defEq \sum_{n} \mathbb{P} \left[ \hNew^{( n )}(\bx)\neq\by \land r(\bx, \msetNew)=m \right]$  denote the 0-1 risk.
Then under the regularity condition on $\mathbb{P}$ in Proposition \ref{prop:optimal_rule}, the difference in 0-1 risk between $\tilde{r}^*$ and $r^*$ can be bounded as:
\begin{align*}
{\mathbb{E}_{\msetNew}\left[R_{01}(\tilde{r}^*,\msetNew)\right] ~\leq~ \mathbb{E}_{\msetNew}\left[{R}_{01}(r^*,\msetNew)\right]}  
 \,+\,
\mathbb{E}_{(\bx\msetNew)}\left[\max_{m\in[|\msetNew|], k\in[K]}\,\Delta_{k}(\bx, \hNew^{( n )})\right].
\end{align*}
\end{prop*}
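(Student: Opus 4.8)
The plan is to compare the two routers pointwise in $\bx$ (and for a fixed $\msetNew$), exploiting the fact that both $r^*$ and $\tilde r^*$ are $\argmin$ rules over the same finite candidate set, differing only in the per-LLM score they use. For a fixed $\bx$ and $\msetNew$, write the true cost-adjusted score of the $n$-th LLM as $s_n(\bx) \defEq \mathbb{P}[\hNew^{(n)}(\bx)\ne\by\mid\bx] + \lambda\cdot c(\hNew^{(n)})$, and the cluster-approximated score as $\tilde s_n(\bx) \defEq \sum_{k}\mathbb{P}(z=k\mid\bx)\,\LLMEmbedScalar_k(\hNew^{(n)}) + \lambda\cdot c(\hNew^{(n)})$. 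Note that by the mixture decomposition, $\mathbb{P}[\hNew^{(n)}(\bx)\ne\by\mid\bx] = \sum_k \mathbb{P}(z=k\mid\bx)\,\mathbb{P}_{\by\mid\bx,z=k}[\hNew^{(n)}(\bx)\ne\by]$, so the two scores differ only through replacing the per-cluster \emph{conditional-on-}$\bx$ error by the marginal per-cluster error $\LLMEmbedScalar_k$. Hence
\[
|s_n(\bx) - \tilde s_n(\bx)| \;\le\; \sum_k \mathbb{P}(z=k\mid\bx)\,\Delta_k(\bx,\hNew^{(n)}) \;\le\; \max_{k}\Delta_k(\bx,\hNew^{(n)}) \;\le\; \max_{\hNew^{(n')}\in\msetNew,\,k}\Delta_k(\bx,\hNew^{(n')}) \;=:\; \varepsilon(\bx,\msetNew).
\]

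Next I would invoke the standard ``plug-in $\argmin$'' regret bound: if two scoring functions $s$ and $\tilde s$ over a finite index set satisfy $\|s-\tilde s\|_\infty \le \varepsilon$, then the index $\tilde n$ minimizing $\tilde s$ has true score $s_{\tilde n} \le s_{n^*} + 2\varepsilon$ where $n^*$ minimizes $s$. Applying this with $s = s_\cdot(\bx)$, $\tilde s = \tilde s_\cdot(\bx)$ gives, for the chosen indices $m^* = r^*(\bx,\msetNew)$ and $\tilde m = \tilde r^*(\bx,\msetNew)$,
\[
\Big(\mathbb{P}[\hNew^{(\tilde m)}(\bx)\ne\by\mid\bx] + \lambda c(\hNew^{(\tilde m)})\Big) - \Big(\mathbb{P}[\hNew^{(m^*)}(\bx)\ne\by\mid\bx] + \lambda c(\hNew^{(m^*)})\Big) \;\le\; 2\varepsilon(\bx,\msetNew).
\]
Actually, a sharper one-sided bound that avoids the factor $2$ is available: since $\tilde s_{\tilde m}\le \tilde s_{m^*}$, we have $s_{\tilde m} \le \tilde s_{\tilde m} + \varepsilon \le \tilde s_{m^*}+\varepsilon \le s_{m^*} + 2\varepsilon$; to match the theorem statement (which has no factor $2$ and drops the cost term), I expect the intended argument bounds only the loss gap using $s_{\tilde m}-s_{m^*}\le \tilde s_{\tilde m}-\tilde s_{m^*} + |s_{\tilde m}-\tilde s_{\tilde m}| + |s_{m^*}-\tilde s_{m^*}|$ and then observes that the cost terms cancel appropriately, or more likely uses the cruder chain $\mathbb{P}[\hNew^{(\tilde m)}(\bx)\ne\by\mid\bx] \le \tilde\gamma(\bx,\hNew^{(\tilde m)}) + \varepsilon \le \tilde\gamma(\bx,\hNew^{(m^*)}) - \lambda(c(\hNew^{(\tilde m)})-c(\hNew^{(m^*)})) + \varepsilon \le \mathbb{P}[\hNew^{(m^*)}(\bx)\ne\by\mid\bx] + 2\varepsilon - \lambda\,\Delta c$, and since by optimality of $r^*$ one has $\lambda\,\Delta c \ge -(\,\mathbb{P}[\hNew^{(\tilde m)}]-\mathbb{P}[\hNew^{(m^*)}]\,) \ge -\varepsilon'$... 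I would need to check the exact book-keeping, but the essential mechanism is clear.

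The final step is to integrate over $\bx\sim\mathbb{P}(\cdot\mid\msetNew)$ and then over $\msetNew\sim\mathfrak{H}$. The pointwise loss gap $\mathbb{P}[\hNew^{(\tilde m)}(\bx)\ne\by\mid\bx] - \mathbb{P}[\hNew^{(m^*)}(\bx)\ne\by\mid\bx]$, integrated against $\bx$, is precisely $R_{01}(\tilde r^*,\msetNew) - R_{01}(r^*,\msetNew)$ after unpacking the definition $R_{01}(r,\msetNew) = \sum_n \mathbb{P}[\hNew^{(n)}(\bx)\ne\by \wedge r(\bx,\msetNew)=n] = \mathbb{E}_\bx[\mathbb{P}[\hNew^{(r(\bx,\msetNew))}(\bx)\ne\by\mid\bx]]$; and the right-hand side integrates to $\mathbb{E}_{\msetNew,\bx}[\varepsilon(\bx,\msetNew)] = \mathbb{E}_{\msetNew,\bx}[\max_{\hNew^{(n)}\in\msetNew,k}\Delta_k(\bx,\hNew^{(n)})]$. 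The regularity condition on $\mathbb{P}$ (continuity of $\eta(\bx)=\mathbb{P}(\by\mid\bx)$) is what licenses the Lagrangian form of $r^*$ from Proposition~\ref{prop:optimal_rule} and ensures ties have measure zero so the $\argmin$ rules are well-defined a.s. The main obstacle I anticipate is purely a constant-chasing one: getting the bound to come out with coefficient exactly $1$ on $\mathbb{E}[\max_{n,k}\Delta_k]$ rather than $2$ requires carefully using the optimality of \emph{both} $r^*$ (w.r.t.\ $s$) and $\tilde r^*$ (w.r.t.\ $\tilde s$) simultaneously and noting that the $\lambda\cdot c$ terms are identical in $s_n$ and $\tilde s_n$, so they telescope out of the comparison — the comparison reduces to the error terms alone, on which the perturbation is bounded by $\max_k\Delta_k(\bx,\hNew^{(n)})$ (no factor $2$, since we only perturb one side's error estimate away from the conditional truth). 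I would write the pointwise inequality as a two-line chain exploiting this cancellation and then take expectations.
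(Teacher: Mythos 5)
Your high-level mechanism is sound and genuinely different from the paper's route, but the argument as written has two real gaps, one of which is not merely "constant-chasing."

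\textbf{Difference in approach.} You work pointwise in $\bx$: define cost-adjusted scores $s_n(\bx)$ and cluster-approximated scores $\tilde s_n(\bx)$, bound $|s_n - \tilde s_n| \le \varepsilon(\bx,\msetNew)$, and invoke an $\argmin$-perturbation lemma before integrating. The paper instead decomposes the risk-level difference into three terms via a \emph{proxy risk} $\tilde R_{01}(r,\mset) \defEq \mathbb{E}_{\bx,z}\big[\sum_m \LLMEmbedScalar_z(h^{(m)})\1[r(\bx,\mset)=m]\big]$, then (i) shows $|R_{01}(r,\mset) - \tilde R_{01}(r,\mset)| \le \mathbb{E}_\bx[\max_{m,k}\Delta_k]$ for \emph{any} router $r$, and (ii) observes that $\tilde r^*$ minimizes $\tilde R_{01}$ over all budget-feasible routers, so $\tilde R_{01}(\tilde r^*)\le \tilde R_{01}(r^*)$. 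This telescoping avoids your pointwise comparison entirely, and in particular avoids the issue below.

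\textbf{Gap 1: the $\lambda\cdot c$ terms do not telescope out.} Your sketch toward the end tries to argue that because the cost terms $\lambda c(\hNew^{(n)})$ appear identically in $s_n$ and $\tilde s_n$, they cancel and the comparison "reduces to the error terms alone," giving a factor-1 bound. This is not correct: the quantity you must control is $e_{\tilde m(\bx)} - e_{m^*(\bx)} = (s_{\tilde m}-s_{m^*}) - \lambda\big(c_{\tilde m}-c_{m^*}\big)$, and while the first term is controlled by the $2\varepsilon$ perturbation bound, the cost difference $c_{\tilde m}-c_{m^*}$ does \emph{not} vanish pointwise, nor does it have a definite sign. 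It only vanishes \emph{in expectation}, and only under complementary slackness (both routers saturating the same budget). The paper handles this cleanly by comparing the two routers' \emph{constrained} proxy objectives rather than their pointwise scores — both are feasible for the same budget $B$, so optimality of $\tilde r^*$ for the proxy problem gives the needed inequality without any pointwise cancellation. Your pointwise framing also implicitly assumes $r^*$ and $\tilde r^*$ use the \emph{same} Lagrange multiplier $\lambda$; since they arise as solutions to two different constrained problems with the same budget, this requires an extra step that the paper's risk-level argument sidesteps.

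\textbf{Gap 2: the factor of $2$.} Your $\argmin$-perturbation step correctly yields $s_{\tilde m}\le s_{m^*}+2\varepsilon$, and your speculative attempt to sharpen this to $\varepsilon$ does not go through — there is no cancellation of the kind you describe, because $\tilde r^*$ is optimal with respect to $\tilde s$ while $r^*$ is optimal with respect to $s$, and both perturbations from truth contribute. For what it is worth, the paper's own appendix proof arrives at $2\cdot\mathbb{E}_{\bx,\mset}\big[\max_{m,k}\Delta_k(\bx,h^{(m)})\big]$ as well (the factor $2$ comes from bounding both $|R_{01}(\tilde r^*)-\tilde R_{01}(\tilde r^*)|$ and $|\tilde R_{01}(r^*)-R_{01}(r^*)|$); the proposition's stated coefficient of $1$ does not match the derivation. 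So your instinct that the factor $2$ is unavoidable by this route is correct, and the time you spent hunting for a way to shave it off was chasing something neither you nor the paper actually establishes. Had you simply run the argument to completion with the factor $2$ and flagged the mismatch, you would have a complete (if slightly weaker-than-stated) proof; as written, the sketch stops at the point where the real bookkeeping is needed.

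One minor point: $\bx$ is drawn independently of $\msetNew$ in this setup (this independence is used explicitly in the paper's proofs), so the integration is over $\bx\sim\mathbb{P}$ and $\msetNew\sim\mathfrak{H}$ separately, not $\bx\sim\mathbb{P}(\cdot\mid\msetNew)$.
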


For simplicity, we will refer to $\msetNew$ simply by $\mathscr{H}$, and $\hNew^{( m)}$ by $h^{(m)}$.  We define a proxy risk objective:
\begin{align}
\tilde{R}_{01}(r,\mathscr{H})
& =
\mathbb{E}_{\bx,z}\left[
\sum_{m \in [|\mathscr{H}|]} \LLMEmbedScalar^*_{z}(h^{(m)})\cdot\1\left[r(\bx, \mset)=m\right]\right],
\label{eq:proxy-risk}
\end{align}
where the expectation is over the joint distribution over $(\bx,z)$ (and not $(\bx, \by)$). 

Consider solving a variant of the constrained optimization problem in \eqref{eq:risk} where the original risk objective is replaced with the proxy risk in \eqref{eq:proxy-risk}:
\begin{align}
 & \min_{r}\thinspace\mathbb{E}_{\mathscr{H}}\left[ \tilde{R}_{01}(r,\mathscr{H}) \right]\nonumber \\
 & \text{s.t.}~~~\thinspace\mathbb{E}_{(\bx, \by, \mathscr{H})}\left[\sum_{m\in[|\mathscr{H}|]}c^{(m)} \cdot \1[r(\bx, \mset)=m]\right]\le B.\label{eq:proxy-optimization}
 \end{align}

We can then show that the optimal solution to
above proxy constrained optimization problem admits the same form as the cluster-based routing rule $\tilde{r}^*$ in \eqref{eq:cluster-rule-population}:
\begin{lemma}
\label{lem:cluster-opt-rule}
Under the assumption on $\mathbb{P}$ in Proposition \ref{prop:cluster-regret-bound}, for any set of models $\mathscr{H}$, the minimizer of the proxy constrained optimization problem in \eqref{eq:proxy-optimization} is given by:
\begin{align*}
\tilde{r}^{*}(\bx, \mset) 
 & =\underset{m\in[|\mathscr{H}|]}{\argmin}\sum_{k\in[K]}\mathbb{P}(z=k|\bx) \cdot \LLMEmbedScalar^*_{k}(h^{(m)}) +\lambda \cdot c^{(m)},
\end{align*}
for some $\lambda \geq 0$.
\end{lemma}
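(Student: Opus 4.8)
\textbf{Proof plan for Lemma \ref{lem:cluster-opt-rule}.}

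The plan is to mimic, essentially verbatim, the argument used for Proposition \ref{prop:optimal_rule}, but starting from the proxy risk $\tilde R_{01}$ defined over the joint distribution on $(\bx, z)$ rather than the true risk over $(\bx, \by)$. First I would invoke the Neyman--Pearson-type duality (as in the proof of Proposition \ref{prop:optimal_rule}, citing \citet{neyman1933ix}): under the continuity assumption on $\eta(\bx) = \mathbb{P}(\by \mid \bx)$, the constrained problem in \eqref{eq:proxy-optimization} is equivalent to minimizing an unconstrained Lagrangian
\[
\mathcal{L} = \mathbb{E}_{\mset}\,\mathbb{E}_{\bx, z}\Bigl[\sum_{m \in [|\mathscr{H}|]} \1(r(\bx,\mset)=m)\cdot \LLMEmbedScalar_{z}(h^{(m)})\Bigr] + \lambda \cdot \mathbb{E}_{\mset}\,\mathbb{E}_{\bx}\Bigl[\sum_{m \in [|\mathscr{H}|]} \1(r(\bx,\mset)=m)\cdot c^{(m)}\Bigr]
\]
for some $\lambda \ge 0$. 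One subtlety worth a remark: the continuity hypothesis is stated for $\eta(\bx)$, so I would note that the relevant quantity here is the cost-adjusted score vector $(\sum_k \mathbb{P}(z=k\mid\bx)\LLMEmbedScalar_k(h^{(m)}) + \lambda c^{(m)})_m$ having no ties almost surely; this follows from a mild regularity condition on $\mathbb{P}$ in the same spirit as before (or one simply breaks ties arbitrarily, which does not affect the risk).

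Next I would push the expectation inside: use independence of the draw of $\mset$ from $(\bx,\by,z)$, then condition on $z$ given $\bx$ to rewrite
\[
\mathbb{E}_{\bx,z}\bigl[\1(r(\bx,\mset)=m)\cdot \LLMEmbedScalar_z(h^{(m)})\bigr] = \mathbb{E}_{\bx}\Bigl[\1(r(\bx,\mset)=m)\cdot \sum_{k\in[K]}\mathbb{P}(z=k\mid\bx)\,\LLMEmbedScalar_k(h^{(m)})\Bigr],
\]
so that
\[
\mathcal{L} = \mathbb{E}_{\mset}\,\mathbb{E}_{\bx}\Bigl[\sum_{m\in[|\mathscr{H}|]} \1(r(\bx,\mset)=m)\cdot\Bigl\{\sum_{k\in[K]}\mathbb{P}(z=k\mid\bx)\,\LLMEmbedScalar_k(h^{(m)}) + \lambda\,c^{(m)}\Bigr\}\Bigr].
\]
Since the inner sum is a selection among $|\mathscr{H}|$ nonnegative terms and the router may be chosen pointwise in $(\bx,\mset)$, the minimizer assigns, for each fixed $\mset$ and $\bx$, all mass to the index achieving the smallest bracketed quantity. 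This yields exactly the claimed form of $\tilde r^*$.

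I don't expect a serious obstacle here — the argument is structurally identical to Proposition \ref{prop:optimal_rule}, with $\ell(\bx,\by,h^{(m)})$ replaced by its $z$-conditional surrogate $\LLMEmbedScalar_z(h^{(m)})$. The only place requiring care is justifying the Lagrangian equivalence and the almost-sure uniqueness of the argmin: I would handle this by stating the same "mild regularity condition on $\mathbb{P}$" invoked throughout the paper, under which the cost-adjusted proxy-score has a continuous distribution so that ties occur with probability zero; measurability of the resulting $\tilde r^*$ is then routine. With the lemma in hand, Proposition \ref{prop:cluster-regret-bound} follows by comparing $\tilde R_{01}$ with $R_{01}$ and bounding the gap by $\mathbb{E}_{\bx,\mathscr{H}}[\max_{m,k}\Delta_k(\bx,h^{(m)})]$, using that $\tilde r^*$ is optimal for the proxy objective.
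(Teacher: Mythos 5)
Your plan is correct and matches the paper's proof essentially verbatim: invoke the Neyman–Pearson/Lagrangian equivalence from \citet{neyman1933ix}, use independence of $\mset$ from $(\bx,\by,z)$ to factor the expectation, condition on $z$ given $\bx$ to pull out $\sum_k \mathbb{P}(z=k\mid\bx)\,\LLMEmbedScalar_k(h^{(m)})$, and minimize pointwise in $(\bx,\mset)$. Your extra remark about tie-breaking and the precise regularity needed is a reasonable clarification but does not change the argument.
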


We will also find it useful to bound the difference between the original risk $R_{01}(r, \mset)$ and the proxy risk in \eqref{eq:proxy-risk}:
\begin{lemma}
\label{lem:cluster-Delta-bound}
For any routing rule $r$ and fixed $\mset$, 
\[
\left|R_{01}(r,\mathscr{H}) - \tilde{R}_{01}(r,\mathscr{H})\right| \,\leq\,
\mathbb{E}_{\bx}\left[\max_{m\in[|\mathscr{H}|], k\in[K]}\,\Delta_{k}(\bx, h^{(m)})\right].
\]
\end{lemma}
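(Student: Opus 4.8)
The plan is to expand both $R_{01}(r,\mset)$ and $\tilde R_{01}(r,\mset)$ by conditioning on the latent mixture variable $z$, use that the router output $r(\bx,\mset)$ is a function of $\bx$ alone (so for each $\bx$ exactly one indicator $\1[r(\bx,\mset)=m]$ fires), and then collapse the resulting per-component conditional expectations back into a single marginal expectation over $\bx$. No regularity condition on $\mathbb{P}$ is needed here, since the argument is a pure conditioning-plus-triangle-inequality estimate and never invokes continuity of $\eta(\bx)$.

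First I would rewrite the $0$--$1$ risk using the mixture decomposition $\mathbb{P}(\bx,\by)=\sum_{k}\pi_{k}\,\mathbb{P}(\bx,\by\mid z=k)$ and the tower rule, conditioning in the order $z\to\bx\to\by$:
\[
R_{01}(r,\mset) = \sum_{m}\sum_{k}\pi_{k}\,\mathbb{E}_{\bx\mid z=k}\!\left[\mathbb{P}_{\by\mid\bx,z=k}\!\left[h^{(m)}(\bx)\neq\by\right]\cdot\1[r(\bx,\mset)=m]\right].
\]
Conditioning the proxy risk in \eqref{eq:proxy-risk} on $z$ the same way, and noting that under the event $z=k$ the weight $\LLMEmbedScalar_{z}(h^{(m)})$ is the constant $\LLMEmbedScalar_{k}(h^{(m)})$, gives
\[
\tilde R_{01}(r,\mset) = \sum_{m}\sum_{k}\pi_{k}\,\mathbb{E}_{\bx\mid z=k}\!\left[\LLMEmbedScalar_{k}(h^{(m)})\cdot\1[r(\bx,\mset)=m]\right].
\]
Subtracting and applying the triangle inequality termwise yields
\[
\left|R_{01}(r,\mset)-\tilde R_{01}(r,\mset)\right| \,\le\, \sum_{k}\pi_{k}\,\mathbb{E}_{\bx\mid z=k}\!\left[\sum_{m}\Delta_{k}(\bx,h^{(m)})\cdot\1[r(\bx,\mset)=m]\right].
\]

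The key step is then: for each fixed $\bx$ exactly one index $m$ has $\1[r(\bx,\mset)=m]=1$, so the inner sum over $m$ equals $\Delta_{k}\!\big(\bx,h^{(r(\bx,\mset))}\big)\le \max_{m\in[|\mset|],\,k'\in[K]}\Delta_{k'}(\bx,h^{(m)})$, an upper bound that no longer depends on $z$. Substituting this and using $\sum_{k}\pi_{k}\,\mathbb{P}(\bx\mid z=k)=\mathbb{P}(\bx)$ collapses the $z$-conditional expectations into a plain marginal $\mathbb{E}_{\bx}[\,\cdot\,]$, which gives exactly the claimed bound. The only point that needs care — the "obstacle", such as it is — is the bookkeeping with the latent variable: conditioning in the order $z\to\bx\to\by$ so that $\LLMEmbedScalar_{z}$ becomes a constant under the conditional expectation, and then verifying that the final $z$-free bound averages back to the $\bx$-marginal via the mixture identity above.
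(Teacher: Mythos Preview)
Your proposal is correct and follows essentially the same route as the paper: expand both risks through the latent mixture, subtract termwise, use that exactly one routing indicator fires per $\bx$, bound by the max over $(m,k)$, and collapse the conditional expectations back to a marginal over $\bx$. The only cosmetic difference is the conditioning order: you condition $z\to\bx\to\by$ from the outset, whereas the paper first conditions on $\bx$ and then introduces the mixture over $z$; your ordering is arguably cleaner since it makes the final collapse via $\sum_k\pi_k\,\mathbb{P}(\bx\mid z=k)=\mathbb{P}(\bx)$ completely transparent.
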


We are now ready to prove Proposition \ref{prop:cluster-regret-bound}:
\begin{proof}
The excess risk we wish to bound is given by:
\begin{align*}
    \lefteqn{
    \mathbb{E}_{\mathscr{H}}\left[ 
    R_{01}(\tilde{r}^*,\mathscr{H}) \,-\, {R}_{01}(r^*,\mathscr{H})
    \right]
    }
    \\ 
    &=  
    \mathbb{E}_{\mathscr{H}}\left[ 
    \Big\{R_{01}(\tilde{r}^*,\mathscr{H})
    \,-\, \tilde{R}_{01}(\tilde{r}^*,\mathscr{H})\Big\}
    \,+\, \tilde{R}_{01}(\tilde{r}^*,\mathscr{H})
    \,-\, \tilde{R}_{01}({r}^*,\mathscr{H})
    \,+\, \Big\{\tilde{R}_{01}({r}^*,\mathscr{H})
    \,-\, {R}_{01}(r^*,\mathscr{H})\Big\} 
    \right]
    \\
    &\stackrel{(a)}{\leq}
    \mathbb{E}_{\mathscr{H}}\left[ \tilde{R}_{01}(\tilde{r}^*,\mathscr{H}) \right]
    \,-\, 
    \mathbb{E}_{\mathscr{H}}\left[ \tilde{R}_{01}({r}^*,\mathscr{H}) \right] + 2\cdot\mathbb{E}_{(\bx, \mathscr{H})}\left[\max_{m\in[|\mathscr{H}|], k\in[K]}\,\Delta_{k}(\bx, h^{(m)})\right]\\
    &\stackrel{(b)}{\leq}
    0 + 2\cdot\mathbb{E}_{(\bx, \mathscr{H})}\left[\max_{m\in[|\mathscr{H}|], k\in[K]}\,\Delta_{k}(\bx, h^{(m)})\right],
\end{align*}
as desired. 
To derive (a), we apply Lemma \ref{lem:cluster-Delta-bound} to bound the first and third terms. To derive (b), we use the fact that $\tilde{r}^*$  is the minimizer of the proxy-risk $ \mathbb{E}_{\mathscr{H}}\left[ \tilde{R}_{01}(\cdot,\mathscr{H}) \right] $ subject to the budget constraint in  \eqref{eq:proxy-optimization}; since $r^*$ also  satisfies the same budget constraint, it has an equal or higher expected risk than $\tilde{r}^*$.
\end{proof}

We now prove Lemmas \ref{lem:cluster-opt-rule} and \ref{lem:cluster-Delta-bound}.

\begin{proof}[Proof of Lemma \ref{lem:cluster-opt-rule}]
Under the assumption on $\mathbb{P}$, the constrained problem in \eqref{eq:proxy-optimization}
is equivalent to minimizing the following Lagrangian objective for
some Lagrange multiplier $\lambda$ \citep{neyman1933ix}:
\begin{align*}
\mathcal{L} & =\mathbb{E}_{(\bx,z,\mset)}\left[\sum_{m \in [|\mathscr{H}|]}\LLMEmbedScalar^*_{z}(h^{(m)}) \cdot\1\left[r(\bx, \mset)=m\right]\right]+\lambda\cdot\mathbb{E}_{(\bx,\by,\mathscr{H})}\left[\sum_{m\in[|\mathscr{H}|]}\1(r(\bx,\mset)=m)\cdot c^{(m)}\right]\\
 & \stackrel{(a)}{=}\mathbb{E}_{\mset}\mathbb{E}_{\bx}\mathbb{E}_{z|\bx}\left[\sum_{m\in[|\mathscr{H}|]}\1(r(\bx, \mset)=m)\cdot\left\{ \LLMEmbedScalar^*_{z}(h^{(m)}) +\lambda\cdot c^{(m)}\right\} %
 \right]\\
 & =\mathbb{E}_{\mset}\mathbb{E}_{\bx}\bigg[
 \underbrace{
 \sum_{m\in[|\mathscr{H}|]}\1(r(\bx,\mset)=m)\cdot\Big\{ 
 \sum_{k\in[K]}\mathbb{P}(z=k|\bx) \cdot  \LLMEmbedScalar^*_{k}(h^{(m)}) 
 +\lambda\cdot c^{(m)}\Big\}
 }_{\mathcal{L}_{\mset,\bx}}
 \bigg],
\end{align*}
where (a) uses the fact that the draw of $\mset$ is independent of the draw of $(\bx,\by)$. 
The last line makes it clear that for any fixed $\mset$ %
and any fixed $\bx$, to minimize the overall loss, the router ought to
route to the model that has the lowest cost-adjusted loss $\mathcal{L}_{\mset,\bx}$. Thus, 
\begin{align*}
\tilde{r}^{*}(\bx,\mset) 
 & =\underset{m\in[|\mathscr{H}|]}{\argmin}\sum_{k\in[K]}\mathbb{P}(z=k|\bx) \cdot \LLMEmbedScalar^*_{k}(h^{(m)}) +\lambda \cdot c^{(m)}.
\end{align*}
\end{proof}

\begin{proof}[Proof of Lemma \ref{lem:cluster-Delta-bound}]
Expanding the original risk, we have:
\begin{align}
R_{01}(r,\mathscr{H})
& =\mathbb{E}_{(\bx, \by)}\left[\sum_{m\in[|\mathscr{H}|]}\1\left[h^{(m)}(\bx)\neq\by\right]\cdot\1\left[r(\bx, \mset)=m\right]\right]
\nonumber
\\
& = \mathbb{E}_{\bx}\bigg[\sum_{m\in[|\mathscr{H}|]}\mathbb{E}_{\by|\bx}\left[\1\left[h^{(m)}(\bx)\neq\by\right]\cdot\1\left[r(\bx, \mset)=m\right]\right]\bigg]
\nonumber
\\
& = \mathbb{E}_{\bx}\left[\sum_{m\in[|\mathscr{H}|]}\mathbb{P}_{\by|\bx}\left[h^{(m)}(\bx)\neq\by\right]\cdot\1\left[r(\bx, \mset)=m\right]\right]
\nonumber
\\
& = \mathbb{E}_{\bx}\left[\sum_{m\in[|\mathscr{H}|]}\sum_{k \in [K]}\pi_k \cdot \mathbb{P}_{\by|\bx, z=k}\left[h^{(m)}(\bx)\neq\by\right]\cdot\1\left[r(\bx, \mset)=m\right]\right].
\label{eq:R01-expand}
\end{align}

Recall that:
\begin{align*}
\Delta_{k}(\bx, h^{(m)}) &= 
\left|\mathbb{P}_{\by\,|\,\bx, z=k} \left[h^{(m)}(\bx)\neq\by \right] \,-\, \LLMEmbedScalar^*_{k}(h^{(m)})\right|
\\
&=\left|\mathbb{P}_{\by\,|\,\bx, z=k} \left[h^{(m)}(\bx)\neq\by \right] \,-\, \mathbb{P}_{\bx', \by'\,|\,z=k}\left[h^{(m)}(\bx')\neq\by'\right] \right|.
\end{align*}

We may next bound \eqref{eq:R01-expand} in terms of $\Delta_{m}(\bx, h^{(m)})$:
\begin{align*}
\lefteqn{R_{01}(r,\mathscr{H})} \\
&\leq
\mathbb{E}_{\bx}\left[\sum_{m\in[|\mathscr{H}|]}\sum_{k \in [K]}\pi_k \cdot \Big( 
\mathbb{P}_{\bx',\by'\,|\,z=k}\left[
    h^{(m)}(\bx')\neq\by'
\right]
+ \Delta_{k}(\bx, h^{(m)})
\Big)
\cdot\1\left[r(\bx, \mset)=m\right]\right]
\\
&=
\mathbb{E}_{\bx}\left[\sum_{k \in [K]}\pi_k \cdot\sum_{m\in[|\mathscr{H}|]} \mathbb{P}_{\bx', \by'|z=k}\left[h^{(m)}(\bx')\neq\by'\right]\cdot\1\left[r(\bx, \mset)=m\right]\right] \\
&\hspace{5cm}
\,+\,
\mathbb{E}_{\bx}\left[\sum_{k \in [K]}\pi_k \cdot \sum_{m\in[|\mathscr{H}|]}\Delta_{k}(\bx, h^{(m)}) \cdot\1\left[r(\bx, \mset)=m\right]\right]
\\
& \stackrel{(a)}{\leq}
\mathbb{E}_{\bx}\left[\sum_{k \in [K]}\pi_k \cdot \sum_{m\in[|\mathscr{H}|]} \mathbb{P}_{\bx', \by'|z=k}\left[h^{(m)}(\bx')\neq\by'\right]\cdot\1\left[r(\bx, \mset)=m\right]\right] 
\,+\,
\mathbb{E}_{\bx}\left[\sum_{k \in [K]}\pi_k \cdot \max_{m\in[|\mathscr{H}|]}\,\Delta_{k}(\bx, h^{(m)})\right] \\
& \stackrel{(b)}{\leq}
\mathbb{E}_{\bx}\left[\sum_{k \in [K]}\pi_k \cdot \sum_{m\in[|\mathscr{H}|]} \mathbb{P}_{\bx', \by'|z=k}\left[h^{(m)}(\bx')\neq\by'\right]\cdot\1\left[r(\bx, \mset)=m\right]\right] 
\,+\,
\mathbb{E}_{\bx}\left[\max_{m\in[|\mathscr{H}|], k\in[K]}\,\Delta_{k}(\bx, h^{(m)})\right]\\
&=
\mathbb{E}_{(\bx, z)}\left[ \sum_{m\in[|\mathscr{H}|]}\LLMEmbedScalar^*_{z}(h^{(m)})\cdot\1\left[r(\bx, \mset)=m\right]\right] 
\,+\,
\mathbb{E}_{\bx}\left[\max_{m\in[|\mathscr{H}|], k\in[K]}\,\Delta_{k}(\bx, h^{(m)})\right]\\
&= \tilde{R}_{01}(r, \mathscr{H}) \,+\,
\mathbb{E}_{\bx}\left[\max_{m\in[|\mathscr{H}|], k\in[K]}\,\Delta_{k}(\bx, h^{(m)})\right].
\end{align*}
where $(a)$ uses the fact that $\sum_m \1\left[r(\bx, \mset)=m\right] = 1$ and $(b)$ follows from the fact that $\sum_k \pi_k = 1$.

One can similarly show that:
\[
R_{01}(r,\mathscr{H}) \,\geq\, \tilde{R}_{01}(r,\mathscr{H}) \,-\,
\mathbb{E}_{\bx}\left[\max_{m\in[|\mathscr{H}|], k\in[K]}\,\Delta_{k}(\bx, h^{(m)})\right],
\]
which completes the proof.
\end{proof}

\section{Zero Routing}
\label{app:pareto-random}
An elementary approach to multi-model routing is to  identify the %
points 
on the non-decreasing convex hull
of the set of cost-risk pairs $\{(c^{(m)}, R( h^{(m)}): m \in [M]\}$, and to route amongst the corresponding LLMs \citep{HuBieLi2024}. %
Specifically, given a budget $B \in ( c^{(1)}, c^{(M)} )$, we may pick the two nearest costs $c^{(m_1)} \leq B <  c^{(m_2)}$ from the non-decreasing convex hull, and route a query to LLM $h^{(m_1)}$ with probability $\frac{c^{(m_2)} - B}{c^{(m_2)} - c^{(m_1)}}$ and to $h^{(m_2)}$ with probability $\frac{B - c^{(m_1)}}{c^{(m_2)} - c^{(m_1)}}$. 

This approach can be seen as a \textit{random router} that randomizes between two LLMs, where the LLMs and mixing coefficients are chosen to maximize the expected quality on the validation sample, while satisfying the budget constraint.
Despite its simplicity (i.e., the routing decision being agnostic to the input), this approach was noted as a strong baseline in \citet{HuBieLi2024}.

\subsection{Special Case: Optimal Cluster Routing Rule for $K=1$}
When the number of clusters $K=1$, the routing rule in \eqref{eqn:plugin-dynamic-routing} returns the same LLM for all queries $\bx$, and is given by:
\begin{align}
\routerEstimator(\bx, \msetNew) & =
\underset{n \in [ \numSetNew ]}{\argmin} \, \big[ {\LLMEmbedScalar}( \hNew^{(n)} ) + \lambda\cdot c( \hNew^{(n)} ) \big],
\label{eq:cluster-routing-k-equals-1}
\end{align}
where ${\LLMEmbedScalar}( \hNew^{( n )} ) \defeq\frac{1}{N_{\rm val}}\sum_{(\bx, \by) \in S_{\rm val}} \1[\hNew^{( n )}(\bx)\ne\by].$  
This rule is closely aligned with the Pareto-random router. %

\begin{prop}
{For any $\lambda \in \mathbb{R}_{\geq 0}$, the routing rule in \eqref{eq:cluster-routing-k-equals-1} returns an LLM on the  non-decreasing \emph{convex hull} of the set of cost-risk pairs 
$\{(c( \hNew^{( n )} ), \routerEstimator_{01}( \hNew^{( n )} ): n \in [ \numSetNew ]\}$, 
where 
$\routerEstimator_{01}( \hNew^{( n )} ) = \frac{1}{N_{\rm val}}\sum_{(\bx, \by) \in S_{\rm val}} \1[\hNew^{( n )}(\bx)\ne\by]$.}
\label{prop:cluster-routing-k-equals-1}
\end{prop}
\begin{proof}

Suppose there exists a $\lambda_1 \in \mathbb{R}_{\geq 0}$ and $m_1 \in \argmin_m \, {\LLMEmbedScalar}(\hNew^{( m)})+\lambda_1\cdot c^{(m)}$ such that $(c^{(m_1)}, \routerEstimator_{01}( h^{(m_1)}))$ is not on the non-decreasing convex hull. Then there exists $h^{(m_2)} \in \mathscr{H}$ such that either $c^{(m_2)} < c^{(m_1)}$ and $\routerEstimator_{01}( h^{(m_2)}) \leq \routerEstimator_{01}( h^{(m_1)})$, or $c^{(m_2)} \leq c^{(m_1)}$ and $\routerEstimator_{01}( h^{(m_2)}) < \routerEstimator_{01}( h^{(m_1)})$. In either case, ${\LLMEmbedScalar}(\hNew^{( m_2)})+\lambda_1\cdot c^{(m_2)} = \routerEstimator_{01}( \hNew^{(m_2)}) +\lambda_1\cdot c^{(m_2)} < \routerEstimator_{01}( \hNew^{(m_1)}) +\lambda_1\cdot c^{(m_1)} = {\LLMEmbedScalar}(\hNew^{( m_1)})+\lambda_1\cdot c^{(m_1)}$, which contradicts the fact that $m_1 \in \argmin_m\, [ {\LLMEmbedScalar}(\hNew^{( m)})+\lambda_1\cdot c^{(m)} ]$. 
\end{proof}

\section{Experimental Setup}

We provide more details on the experiments discussed in \cref{sec:experiments}.

\subsection{Splitting Data and LLMs}
In the experiment on each of the four datasets (EmbedLLM \citep{ZhuWuWen2024}, MixInstruct \citep{Jiang:2023},  RouterBench \citep{HuBieLi2024}), and Math+Code dataset  \citet{Dekoninck:2025}), we split the data into three disjoint portions:  train, validation, and test. 
The set of all LLMs available in each dataset is also split into two disjoint sets: training models and testing models. The relationship of data splits and model splits is summarized in the following table.

\newcommand{\cmark}{\ding{51}}%
\newcommand{\xmark}{\ding{55}}%
\newcommand{\shadecell}{ \cellcolor{gray!25}}
\begin{center}
\begin{tabular}{l|>{\centering\arraybackslash}p{49mm}|>{\centering\arraybackslash}p{14mm}|>{\centering\arraybackslash}p{28mm}}
\toprule
  \diaghead(-2,1){justpadspaceeee}{Model split}{Data split}  & Train & Validation & Test  \\
\midrule 
  Training models & \shadecell \cmark & \shadecell \cmark & \xmark \\
  Testing models & \xmark    & \shadecell \cmark & \shadecell \cmark \\ 
\bottomrule
\end{tabular}
\end{center}

\begin{itemize}
    \item \textbf{Training set}. The training examples are meant for router training. Only information of the training models (not testing models) is available in this data portion.  The only exceptions are the clairvoyant oracle baselines which are allowed access to correctness labels of testing models  on training examples. In other words, unlike other baselines, these oracle methods observe all models during training, and are  trained on both training and validation portions. These baselines are meant to establish performance achievable if a router has access to all models.
    \item \textbf{Validation set}. The validation examples are meant to be used to represent new LLMs. For instance, for our proposed \ourMethod{} ($K$-Means) approach, the validation set is used to compute per-cluster performance metrics of each testing LLM observed at test time, to represent it as a feature vector.
    \item \textbf{Test set}. The test examples are only used for evaluating routing methods.
\end{itemize}

Testing models represent new models that arrive at deployment time, and are not available for training (except to the clairvoyant fixed-pool router baseline).
Training models are meant for router training. For instance, our  \ourMethod{} ($K$-Means) approach learns to route among the training models, and is tested on the test set to route among the testing models.

For the Math+Code dataset alone (in \Cref{fig:three_experiments}), we have no training LLMs; so the training sample is unlabeled.

\subsection{Evaluation: Deferral Curve}
\label{sec:deferral-curve}

Routing performance may be assessed via a \emph{deferral curve}~\citep{Jitkrittum:2023,WanAugRus2024,HuBieLi2024}
$\mathscr{C} = \{ ( B, R( h_{\rm RM}( \cdot, r_B ) ) \colon B \in [ c^{(1)}, c^{(M)} ] \}$,
tracing the tradeoff between the cost budget $B$ and loss of the resulting routed model. 
Specifically, one varies the cost budget $B \in [ c^{(1)}, c^{(M)} ]$;
computes a router $r_B( \cdot )$ for this budget;
and plots
the resulting expected loss  $R( h_{\rm RM}( \cdot, r_B ) )$.
We may also use a quality metric (e.g., accuracy) in place of the loss to 
capture quality-cost trade-offs.

\subsection{Implementation Details of \ourMethod{} (LearnedMap)}
\label{app:implementation-details}

In this section, we give details on the architecture and training
of our proposed \ourMethod{} (LearnedMap).

\paragraph{Architecture}

Recall from \Cref{sec:two_tower} that the LearnedMap attempts to learn $\bx\mapsto\boldsymbol{\Phi}_{\mathrm{clust}}(\bx;\boldsymbol{\theta})$
parameterised by $\boldsymbol{\theta}$. This is a function that maps
an input prompt $\bx$ to  a probability vector $\boldsymbol{\Phi}_{\mathrm{clust}}(\bx;\boldsymbol{\theta})\in\Delta^{K}$
where $\Delta^{K}$ denotes the $K$-dimensional probability simplex.
In experiments, $\boldsymbol{\Phi}_{\mathrm{clust}}(\cdot;\boldsymbol{\theta})$
is defined by an MLP with two hidden layers:
\begin{align}
\boldsymbol{\Phi}_{\mathrm{clust}}(\bx;\boldsymbol{\theta}) & \defeq \left(\mathrm{Softmax}\circ\mathrm{FC}(M)\circ\mathsf{H}'\circ\mathsf{H}\circ\mathrm{BN}\circ\boldsymbol{\varphi}\right)(\bx),\label{eq:learnedmap_arch}\\
\mathsf{H} & \defeq\left[\mathrm{ReLU}\circ\mathrm{BN}\circ\mathrm{FC}(128)\right],\label{eq:learnedmap_hidden}
\end{align}
where:
\begin{itemize}
\item $\circ$ denotes function composition, 
\item $\mathsf{H}$ and $\mathsf{H}'$ denote the two, separate (i.e., no
parameter sharing) hidden layers of the same architecture as described
in (\ref{eq:learnedmap_hidden}),
\item $\mathrm{FC}(z)$ denotes a fully connected layer with $z\in\mathbb{N}$
outputs,
\item $\mathrm{ReLU}$ denotes the rectified linear unit i.e., $\mathrm{ReLU}(a)\defeq\max(0,a),$
\item $\mathrm{Softmax}$ denotes a softmax layer, 
\item $\mathrm{BN}$ denotes the batch normalization layer,
and
\item $\boldsymbol{\varphi}$ denotes a frozen text embedding. 
\end{itemize}
\begin{center}
Recall from the experiments in \Cref{sec:experiments} that
we use Gecko \cite{LeeDaiRen2024} for the text embedding $\boldsymbol{\varphi}$.
\par\end{center}

\paragraph{Training}

We use Keras 
for implementing (\ref{eq:learnedmap_arch}).
In all experiments on EmbedLLM, MixInstruct, and RouterBench, the
Learned Cluster Map is trained for only 5 epochs using Adam 
as the optimization algorithm. We observe that training for too long
can lead to overfitting. Training batch size is set to 64, and the learning rate is 0.005. Since $\boldsymbol{\varphi}$
is frozen, training $\boldsymbol{\Phi}_{\mathrm{clust}}(\cdot;\boldsymbol{\theta})$
amounts to training the MLP with two hidden layers. It is sufficient
to use CPUs for training. For one trial, training takes only a few minutes to complete.

\newcommand{\imgw}{0.9\textwidth}
\begin{figure}[H]
  \begin{minipage}[b]{.98\linewidth}
  
\begin{center}
 EmbedLLM \hspace{27mm} MixInstruct \hspace{25mm} RouterBench
 \end{center}
 \vspace{2mm}
 
    \begin{subfigure}{0.32\textwidth}
        \includegraphics[width=\imgw]{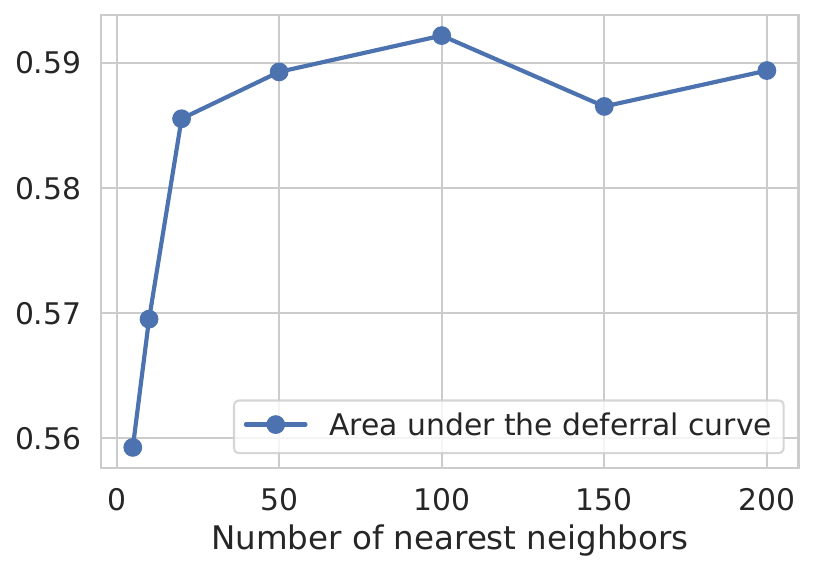}
        \caption{$K$-NN}
    \end{subfigure}
    \begin{subfigure}{0.32\textwidth}
        \includegraphics[width=\imgw]{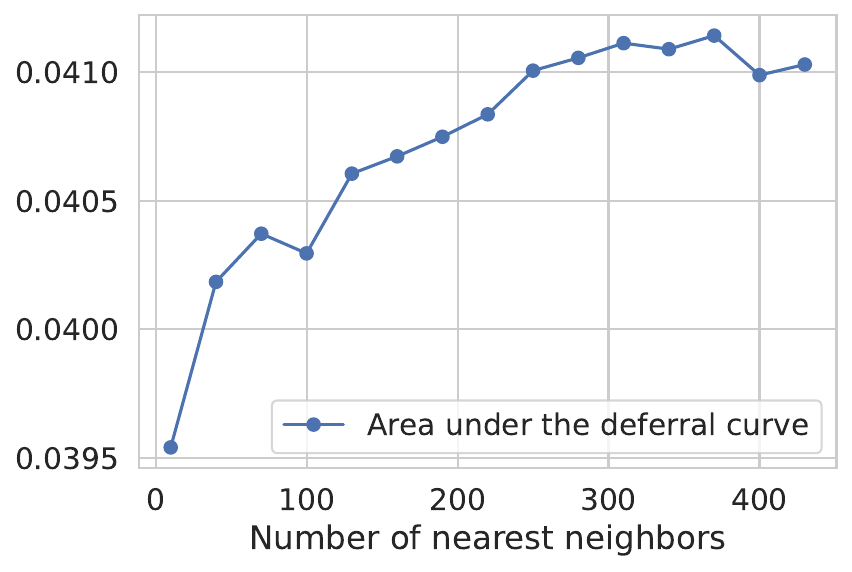}
        \caption{$K$-NN}
    \end{subfigure}
    \begin{subfigure}{0.32\textwidth}
        \includegraphics[width=\imgw]{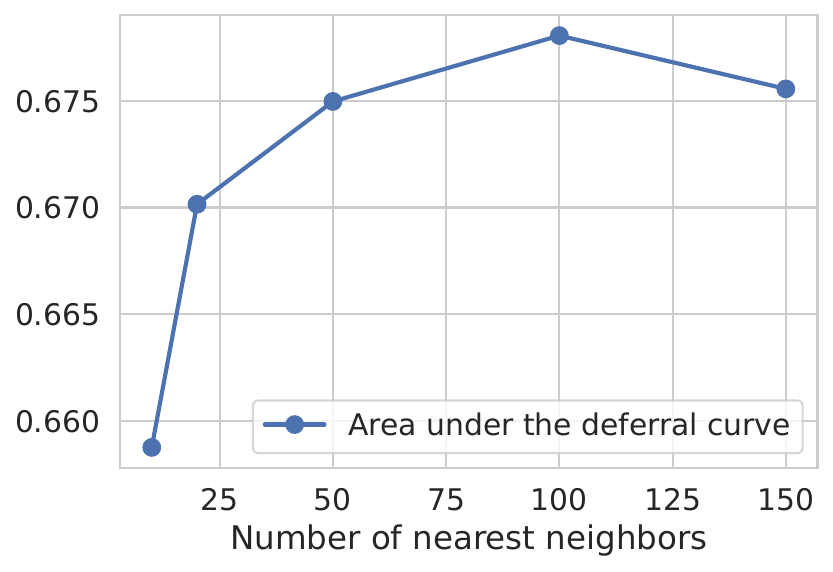}
        \caption{$K$-NN}
    \end{subfigure}
    
    \begin{subfigure}{0.32\textwidth}
        \includegraphics[width=\imgw]{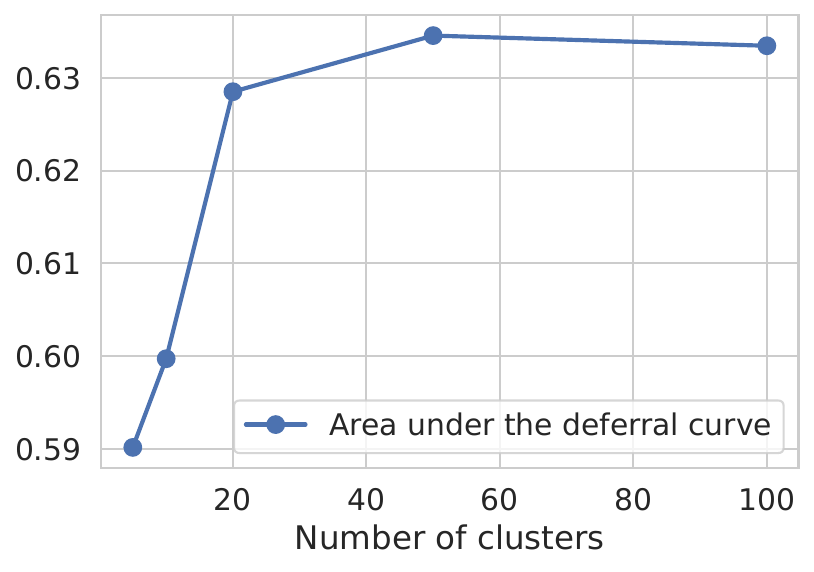}
        \caption{\ourMethod{} ($K$-Means)}
    \end{subfigure}
    \begin{subfigure}{0.32\textwidth}
        \includegraphics[width=\imgw]{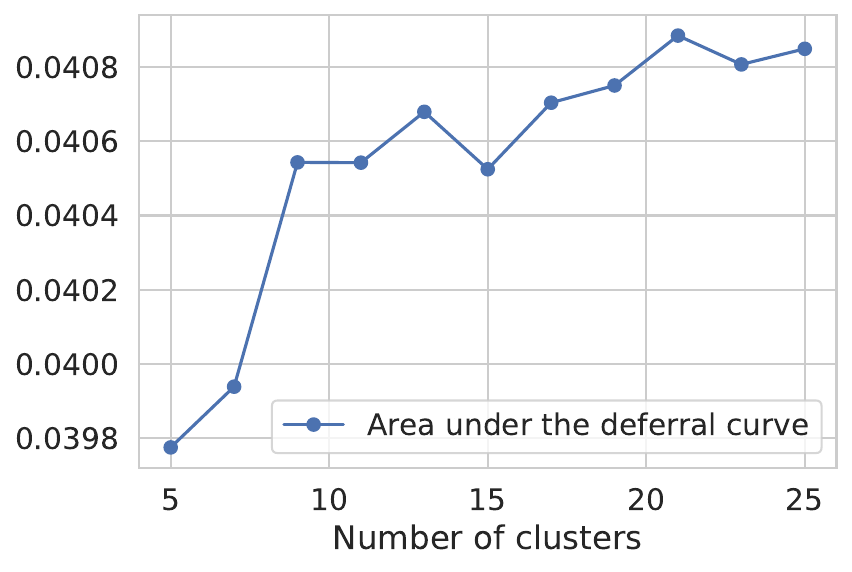}
        \caption{\ourMethod{} ($K$-Means)}
    \end{subfigure}
    \begin{subfigure}{0.32\textwidth}
        \includegraphics[width=\imgw]{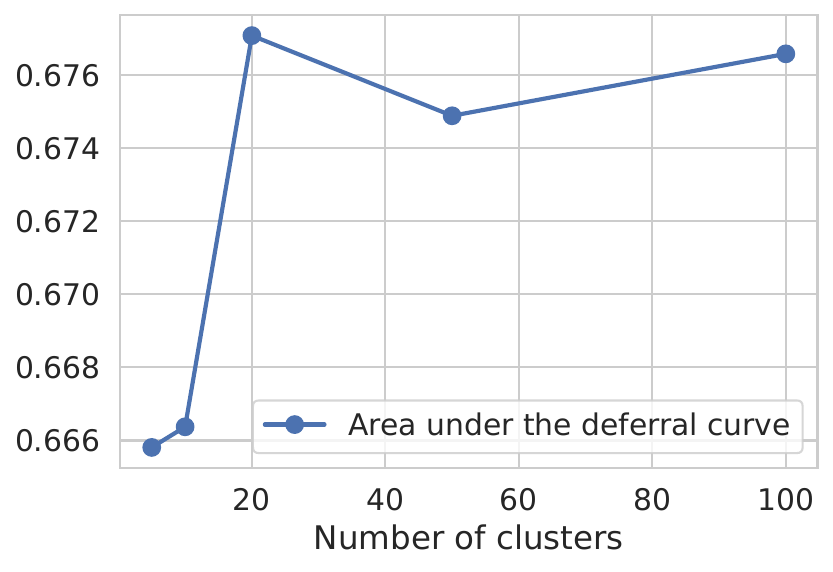}
        \caption{\ourMethod{} ($K$-Means)}
    \end{subfigure}
    
   \begin{subfigure}{0.32\textwidth}
        \includegraphics[width=\imgw]{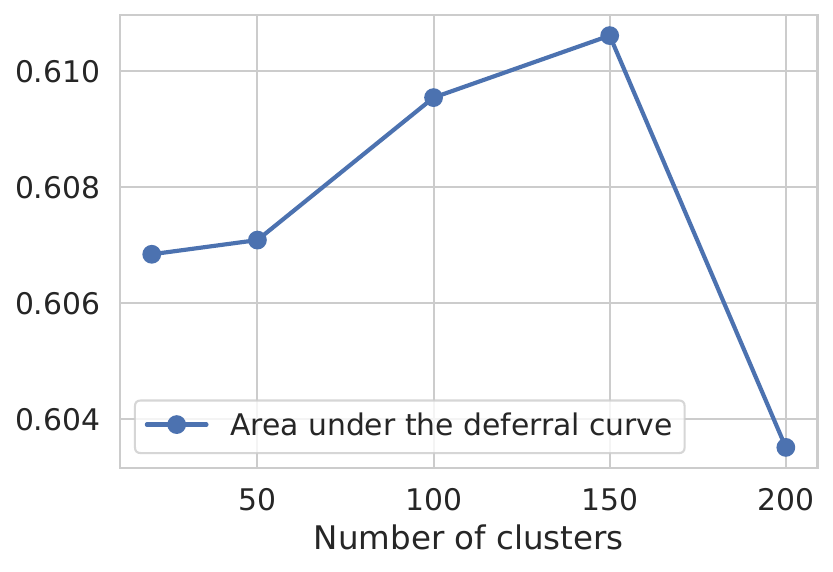}
        \caption{\ourMethod{} ($K$-Means Attributes}
    \end{subfigure}
    \begin{subfigure}{0.32\textwidth}
        \includegraphics[width=\imgw]{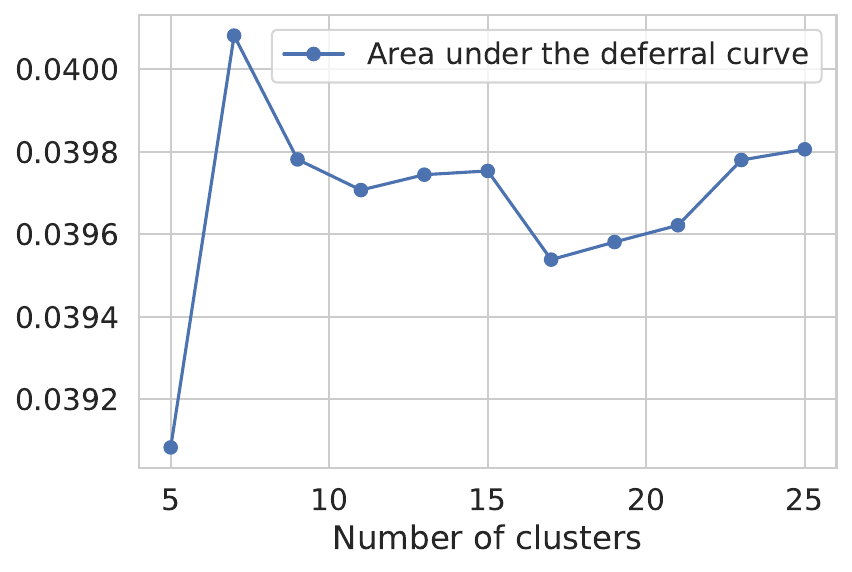}
        \caption{\ourMethod{} ($K$-Means Attributes}
    \end{subfigure}
    \begin{subfigure}{0.32\textwidth}
        \includegraphics[width=\imgw]{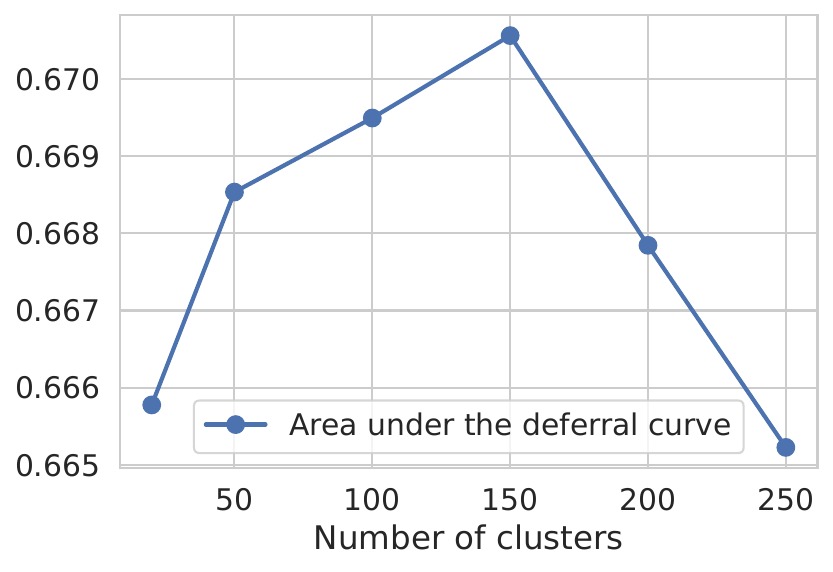}
        \caption{\ourMethod{} ($K$-Means Attributes}
    \end{subfigure}
    
    \begin{subfigure}{0.32\textwidth}
        \includegraphics[width=\imgw]{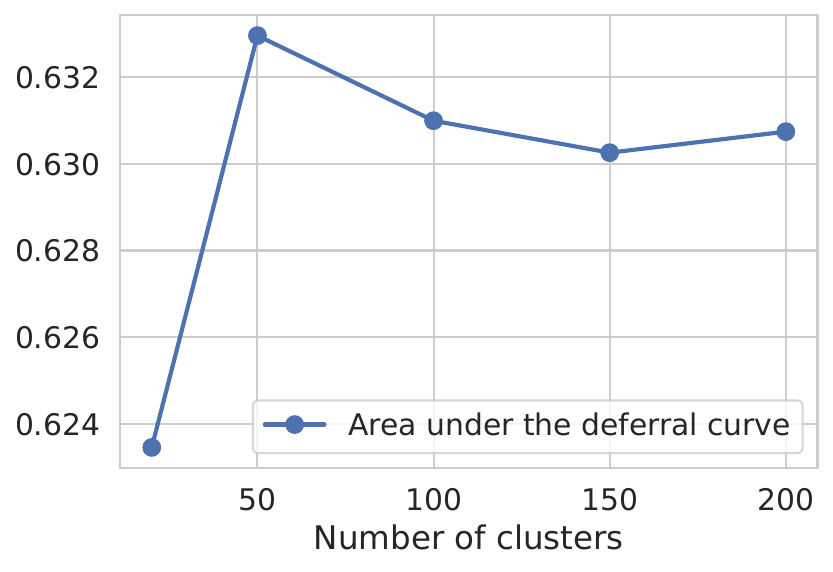}
        \caption{\ourMethod{} (LearnedMap)}
    \end{subfigure}
    \begin{subfigure}{0.32\textwidth}
        \includegraphics[width=\imgw]{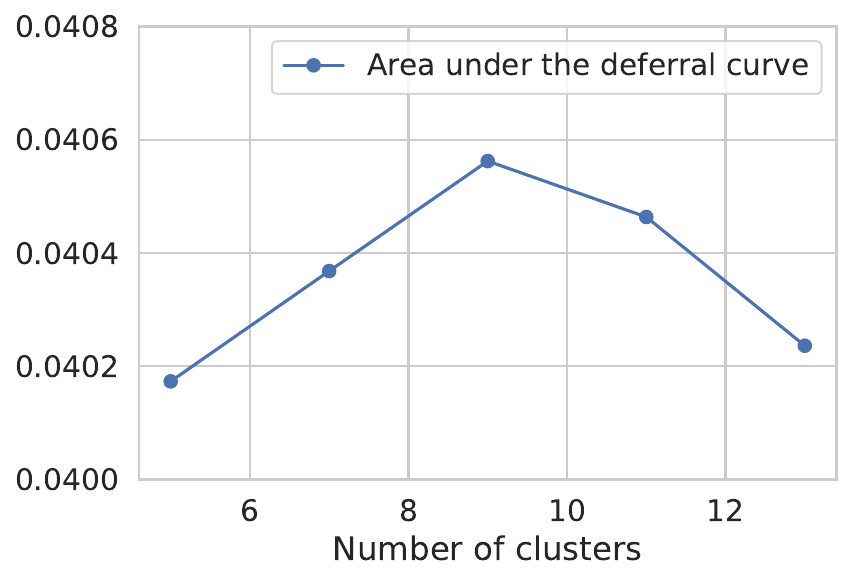}
        \caption{\ourMethod{} (LearnedMap)}
    \end{subfigure}
    \begin{subfigure}{0.32\textwidth}
        \includegraphics[width=\imgw]{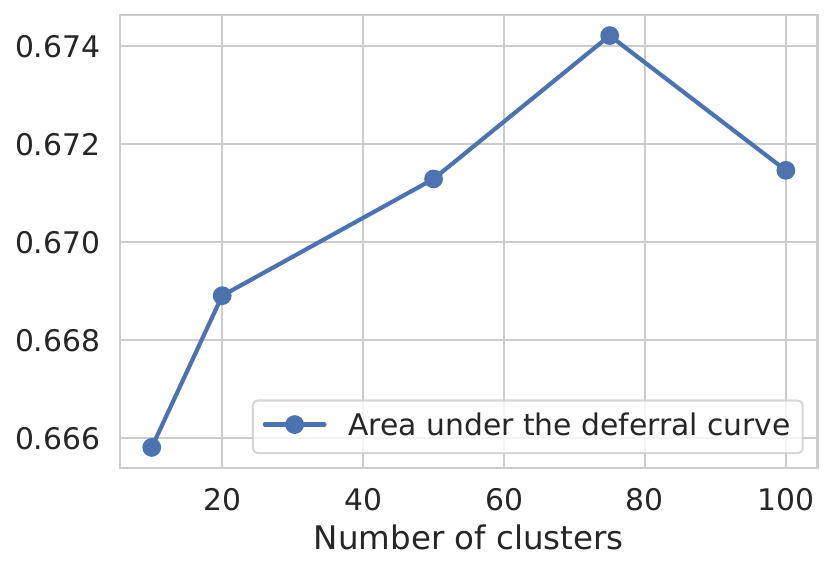}
        \caption{\ourMethod{} (LearnedMap)}
    \end{subfigure}
  \end{minipage}
  \caption{Validation performance of four methods considered in \cref{fig:three_experiments} and \cref{sec:chatbot_exp_details}: $K$-NN, \ourMethod{} ($K$-Means), \ourMethod{} ($K$-Means Attributes, and \ourMethod{} (LearnedMap). See \cref{sec:validate_k} for more details.
  }
  \label{fig:validate_k}
\end{figure}

\subsection{Router Cost}
All routing methods rely on a frozen prompt embedding model $\boldsymbol{\varphi}$. 
As described in \Cref{app:deferral}, $\boldsymbol{\varphi}$ is set be the token probability quantiles of the Gemma 2B model in the Headlines dataset. For other datasets, we use Gecko 1B model \citep{LeeDaiRen2024} for $\boldsymbol{\varphi}$. Importantly, all methods rely on the same embedding model, and thus share the same overhead for prompt embedding.

At inference time, for \ourMethod{} ($K$-Means), deciding the LLM to route to for a given test query involves computing the text embedding, and finding the nearest centroid out of $K$ centroids (recall that $K$-means is run only once on the training set). 
For \ourMethod{} (LearnedMap), after computing the prompt embedding, we pass the embedding through a small MLP as described in \eqref{eq:learnedmap_arch}.
Compared to the sizes of candidate LLMs (up to 60+B in EmbedLLM, for instance), invoking the text embedding model and performing a few operations after (centroid lookup, or invoking a small MLP) incur a negligible cost.

When plotting the deferral curves, we do not include the cost of the routing model. Including the router’s overhead will simply shift all the deferral curves to the right by a small amount, without changing the relative ordering of the methods.

\section{Additional Experimental Results}
\label{app:expts-additional}
We present additional experimental results we omitted in the main text.

\subsection{Hyper-parameter Choices \& Statistical Significance}
\label{sec:validate_k}
There are three methods that we consider in the experiments in \cref{sec:experiments} that depend on a hyperparameter $K$.
Specifically, $K$ in K-NN refers to the number of nearest neighbors, and $K$ in \ourMethod{} ($K$-means) and \ourMethod{} (LearnedMap) refers to the number of clusters. In \cref{fig:three_experiments}, we report the performance of these methods with the best $K$ found on each dataset separately.
We now describe the validation procedure we used to select the best $K$.

\paragraph{K-NN} 
For each candidate $K$, and each prompt in the validation set, find the $K$ nearest queries in the training set (in the Gecko embedding space). Route each prompt in the validation set to the most appropriate \emph{training} LLM according to the routing rule \eqref{eq:opt_rule01} where $\gamma^{(m)}$ is estimated with \eqref{eqn:knn-router}. Produce a deferral curve on the validation set, and compute the normalized area under such curve. Select $K$ that maximizes the area.
The list of candidate $K$'s is set to be from 5 to one third of the validation sample size.

\paragraph{\ourMethod{} ($K$-Means) }
For each candidate $K$, perform $K$-means on the training set using Gecko embeddings \citep{LeeDaiRen2024}.
Compute the feature vector representation of each \emph{training} LLM on the training set using \eqref{eq:cluster_accs}. 
For each prompt in the validation set, find the nearest cluster, and route the prompt to the most appropriate \emph{training} LLM according to the routing rule \eqref{eqn:plugin-dynamic-routing}. 
Produce a deferral curve on the validation set, and compute the normalized area under such curve. Select $K$ that maximizes the area.
The list of candidate $K$'s is from 3 to the number of validation sample size, divided by 50.

\paragraph{\ourMethod{} ($K$-Means Attributes)}
An alternate approach that we consider in Appendix \ref{sec:chatbot_exp_details} is to construct 
a binary vector of \emph{prompt attributes},
denoting whether a prompt possesses certain characteristics~\citep{Li:2024c,Li:2024d},
e.g.,
whether it 
requires multi-step reasoning,
seeks a single correct answer,
and so on.
These can be seen as a generalised 
``task''.
Compared to a general purpose text embedding,
such a representation is a coarser representation;
on the other hand,
for the purposes of model routing,
this can help mitigate overfitting.

Concretely, we parameterize the prompt embedding model to be  $\Phi(\bx) = \sigma(\mathbf{V}^\top \PretrainedQueryEmbed_{\text{Gecko}}(\bx))$ where $\mathbf{V} \in \mathbb{R}^{768 \times 7}$, and $\sigma$ denotes the sigmoid function.
Train each head $\mathbf{v}_j \in \mathbb{R}^{768}$  (with $\PretrainedQueryEmbed_{\text{Gecko}}$ frozen) by minimizing the sigmoid cross entropy to predict whether the $j$-th semantic attribute is active on each input prompt.  We use the seven prompt difficulty attributes as described in \citet{Li:2024c}, and prompt Gemini 1.5 Pro 002 to annotate each binary attribute on each training example.
Once the prompt embedding model $\Phi$ is trained, we freeze it, and perform the same hyperparameter selection procedure as used for K-means (Gecko) by replacing the Gecko embedding function with $\Phi$.

\paragraph{\ourMethod{} (LearnedMap) }
For each candidate $K$, perform $K$-means on the training set using Gecko embeddings.
Compute the feature vector representation of each \emph{training} LLM on the training set using \eqref{eq:cluster_accs}. 
Parameterize the cluster assignment map with a softmax-dense layer as described in  \cref{sec:two_tower}, resulting in a parameter $\boldsymbol{\theta}$ to learn. Learn the parameter by minimizing the binary sigmoid cross entropy on the training with the labels given by the correctness labels of the \emph{training} LLMs.
With the cluster map trained, for each prompt in the validation set, route the prompt to the most appropriate training LLM according to the routing rule~\eqref{eqn:plugin-dynamic-routing}.
Produce a deferral curve on the validation set, and compute the normalized area under such curve. Select $K$ that maximizes the area.
The list of candidate $K$'s is from 3 to the number of validation sample size, divided by 50.

The above tuning procedure for $K$ cannot be applied to the Math+Code dataset considered in \Cref{sec:experiments}, where we have no training LLMs. In this case, we set $K$ to $\sqrt{N_{\rm val}}$ for $K$-NN and to $ N_{\rm val} / 50 $ for $K$-Means.

\paragraph{Effect of $K$} \cref{fig:validate_k} shows the area under the deferral curve (on the validation set) vs candidate parameter $K$. Importantly, the testing models and the test set are never used in the above hyperparameter selection process.

\paragraph{MLP (Clairvoyant)} For this oracle baseline, as per \citep{HuBieLi2024}, we fix the number of hidden layers to two and the number of hidden nodes in each hidden layer to 100. 
We fit the MLP on the combined training and validation set to predict a quality score for each test LLM, and route via  \eqref{eqn:post-hoc}. For training, we once again ran  30 epochs od Adam with a learning rate of $10^{-3}$ and  batch size 64, and picked the checkpoint that yielded the best quality metric on the validation set.

\paragraph{Matrix Factorization (Clairvoyant)} For this oracle baseline, we fit a factorized model $\mathbf{A}\cdot \mathbf{B}$, with $\mathbf{A} \in \mathbb{R}^{\PretrainedQueryEmbedDim \times d}, \mathbf{B} \in  \mathbb{R}^{d \times m},$  where $\PretrainedQueryEmbedDim$ is the dimension of the pretrained query embedding, $d$ is an intermediate dimension, and $N$ is the number of test LLMs to route to. 
We choose $d$ using the same procedure used to pick the hyper-parameter $K$ above. For a query $\bx \in \XCal$ and pre-trained $\PretrainedQueryEmbed(\bx) \in \mathbb{R}^{\PretrainedQueryEmbedDim}$, this router predicts $N$ LLM scores via $\mathbf{A}\cdot \mathbf{B} \cdot \PretrainedQueryEmbed(\bx)$, and routes via  \eqref{eqn:post-hoc}.  For training, we ran 30 epochs of Adam with a learning rate of $10^{-3}$ and  batch size 64, and picked the checkpoint that yielded the best quality metric on the validation set.

\paragraph{Statistical Significance}  For the table in Figure \ref{fig:three_experiments} (Top), we repeat the experiments over 400 trials, and report  statistical significance (via the sign test) at signficance level $\alpha=0.01$. For the plots of area vs.\ validation sample size in Figure \ref{fig:three_experiments} (Bottom), we repeat the experiments over 100 trials for EmbedLLM, RouterBench, SPROUT o3-mini and 1000 trials for Math+Code, and report 96\% confidence intervals (i.e. two-sigma standard errors).

\subsection{Deferral Curves and Additional Comparisons}
\label{app:deferral}
As described in \Cref{sec:experiments}, for each of the 400 independent trials, we randomly split examples of each dataset into training, validation and testing. 
\Cref{fig:deferral-curves-app} presents the mean deferral curves of all the methods we consider.
All the statistics (Area (50\%), Area and QNC) reported in \Cref{fig:three_experiments} are derived from these curves. 
All results here are for the dynamic LLM pool setting as considered in \Cref{fig:three_experiments}.

\paragraph{EmbedLLM}
\Cref{fig:embedllm} presents deferral curves on the EmbedLLM problem. These results are the same as in \Cref{fig:deferral-curve-embedllm} where we add two oracle baselines: Matrix Factorization (MatFac) \citep{OngAlmWu2024,ZhuWuWen2024}, and MLP \citep{HuBieLi2024}.
We reemphasize that these two baselines are not designed for the dynamic pool setting, and are allowed to observe test LLMs during training to be applicable to our setting.

\paragraph{Headlines}
We consider the Headlines dataset as used in \citet{CheZahZou2023} consisting of 10K prompts in total. 
Each prompt asks an LLM to determine the price direction (up, down, neutral, or none) of an item in a list of news headlines.
There are 12 LLMs of various sizes in this dataset. 
Each LLM has a distinct cost (USD) of processing one prompt, which is a function of the input length, output length, and a fixed cost per request (see Table 1 in \citet{CheZahZou2023}).
We hold out six LLMs for training and the other six LLMs for testing:
\begin{itemize}
    \item \underline{Training LLMs}: 
        Textsynth FAIRSEQ, OpenAI GPT-4,  OpenAI GPT-Curie, Textsynth GPT-Neox, AI21 J1-Large, Cohere Xlarge;
    \item \underline{Test LLMs}:
        OpenAI ChatGPT, OpenAI GPT-3, Textsynth GPT-J, AI21 J1-Grande, AI21 J1-Jumbo, Cohere Medium.
\end{itemize}
In each of the 400 independent trials, we randomly partition the data into 4000, 400, 5600 examples for training, validation, and testing, respectively.
For Headlines, 
unlike other datasets, we do \emph{not} use Gecko for embedding  prompts.
Rather,
the frozen text embedding $\boldsymbol{\varphi}$ is constructed based on the output of the Gemma 2B model\footnote{Gemma 2B model: \url{https://huggingface.co/google/gemma-2b}.}:  for each prompt, its embedding is the seven equally spaced quantiles of the per-token probabilities of the output tokens from Gemma 2B.

The mean deferral curves over 400 trials are shown in \Cref{fig:headlines}. We observe that our proposed \ourMethod{} (LearnedMap) has higher accuracy than $K$-NN throughout the whole cost range.

\paragraph{MixInstruct}
MixInstruct is a dataset from \cite{Jiang:2023} containing responses from 11 LLMs. We use a random split of 50\% of the LLMs for training and the rest for testing. We use (exponentiated) BARTScore \citep{YuaNeuLiu2021} as the evaluation metric, following \citet{Jiang:2023}. 
The MixInstruct dataset does not have LLM API costs annotated.  
We use the number of parameters of the LLM as the cost of processing one prompt.

\begin{figure}[h]
  \begin{minipage}[b]{.98\linewidth}
  \centering
  \resizebox{\columnwidth}{!}{
    \begin{tabular}{lrrrrrrrrrrrrrrr@{}}
    \toprule
    \multirow{2}{*}{
    \diaghead(-3,1){justpadspaceeee}{Method}{Dataset}
    } 
    & \multicolumn{3}{c}{EmbedLLM}  
    & \multicolumn{3}{c}{RouterBench} 
    & \multicolumn{3}{c}{SPROUT o3-mini} 
    & \multicolumn{3}{c}{Headlines} 
    & \multicolumn{3}{c}{MixInstruct} 
    \\
    \cmidrule(lr){2-4} \cmidrule(lr){5-7} \cmidrule(lr){8-10}  \cmidrule(lr){11-13} \cmidrule(lr){14-16}
     & Area (50\%) $\uparrow$ & Area  $\uparrow$ & QNC $\downarrow$ 
     & Area (50\%) $\uparrow$ & Area  $\uparrow$ & QNC $\downarrow$ 
     & Area (50\%) $\uparrow$ & Area  $\uparrow$ & QNC $\downarrow$ 
     & Area (50\%) $\uparrow$ & Area  $\uparrow$ & QNC $\downarrow$ 
     & Area (50\%) $\uparrow$ & Area  $\uparrow$ & QNC $\downarrow$ 
     \\
    \midrule
    ZeroRouter
       & .285\sig & .607\sig & 87.5\%\sig
       & .320\sig & .689\sig & 99.9\%\nosig
       & .404\sig & .820\sig & 100.0\%\sig
      & .0233\sig & .0487\sig & 96.8\%\nosig
      & .380\sig & .819\sig & 88.0\%\sig
      \\
      
    $K$-NN
       & .298\sig & .636\sig & 46.1\%\sig
        & .328\sig & .707\sig & 99.7\%\nosig 
        & .418\sig & .844\sig & 29.6\%\sig
        & .0235\sig & \best{.0494}\nosig & \best{94.8\%}\nosig
        & \best{.411}\nosig & .830\sig & 43.7\%\sig
        \\
     \ourMethod{} ($K$-Means) 
        & .307\sig & .648\sig & 33.9\%\nosig
        & \best{.332}\nosig & \best{.712}\nosig & \best{99.4}\%\nosig 
        & \best{.421}\nosig & \best{.850}\nosig & \best{19.6}\%\nosig
        & .0235\sig & .0491\nosig & 95.2\%\nosig
        & .409\sig & .828\sig & 56.9\%\sig
        \\
    \ourMethod{} (LearnedMap)
        & \best{.308}\nosig & \best{.651}\nosig & \best{33.2\%}\nosig
        & .331\nosig & .711\nosig & 99.6\%\nosig 
        & .420\nosig & .846\nosig & 23.4\%\nosig
        & \best{.0236}\nosig & .0491\nosig & 96.2\%\nosig
        & \best{.411}\nosig & \best{.832}\nosig & \best{34.9}\%\nosig
        \\
    \midrule
       MLP (Clairvoyant)
        & .314\nosig & .664\nosig & 26.9\%\nosig
        & .339\nosig & .723\nosig & 95.2\%\nosig 
        & .427\nosig & .859\nosig & 4.5\%\nosig
        & .0240\nosig & .0502\nosig & 85.7\%\nosig
        & .412\nosig & .835\nosig & 18.1\%\nosig
        \\
    Matrix Fac. (Clairvoyant)
        & .313\nosig & .663\nosig & 27.7\%\nosig
        & .338\nosig & .720\nosig & 99.5\%\nosig
        & .426\nosig & .857\nosig & 5.0\%\nosig
        & .0242\nosig & .0505\nosig & 82.9\%\nosig 
        & .413\nosig & .835\nosig & 13.7\%\nosig
        \\
    \bottomrule
    \end{tabular}
   }
  \end{minipage}
  \begin{minipage}[b]{.98\linewidth}
    \centering
    \begin{subfigure}{0.32\linewidth}
        \includegraphics[width=\textwidth]{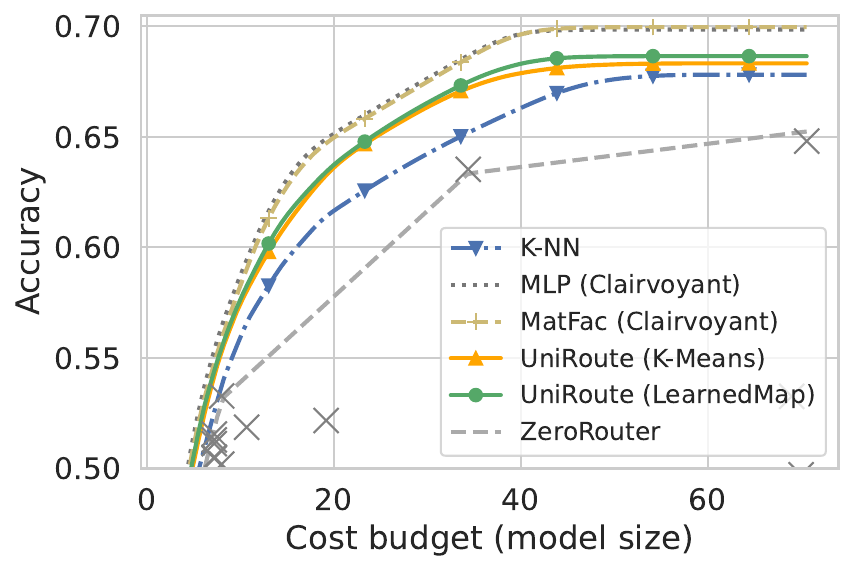}
        \caption{EmbedLLM}
        \label{fig:embedllm}
    \end{subfigure}
    \begin{subfigure}{0.32\linewidth}
        \includegraphics[width=\textwidth]{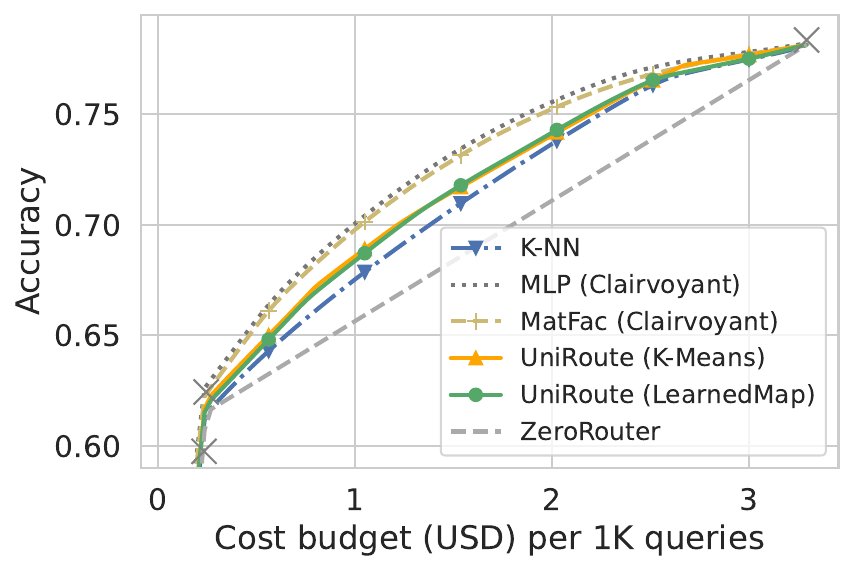}
        \caption{RouterBench}
        \label{fig:routerbench}
    \end{subfigure}
     \begin{subfigure}{0.33\linewidth}
        \includegraphics[width=\textwidth]{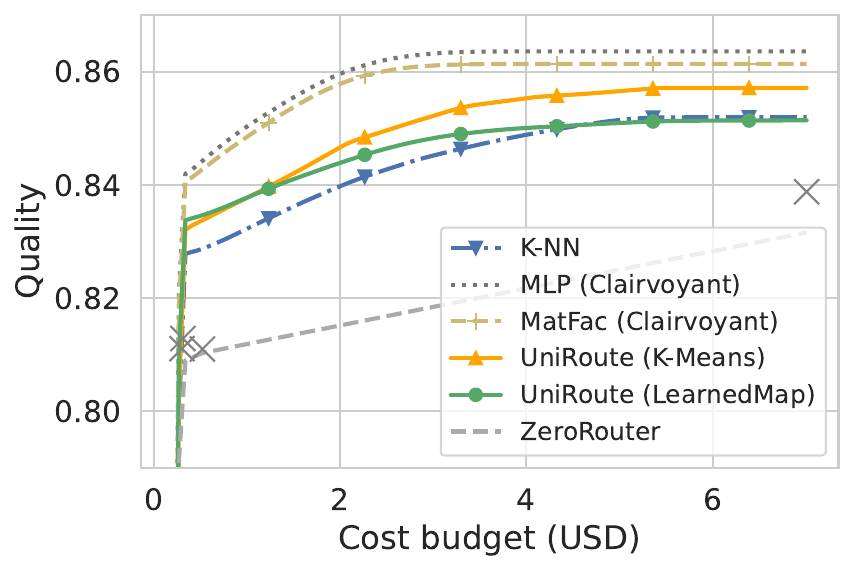}
        \caption{SPROUT o3-mini}
        \label{fig:sprout}
    \end{subfigure}
    
    \begin{subfigure}{0.32\linewidth}
        \includegraphics[width=\textwidth]{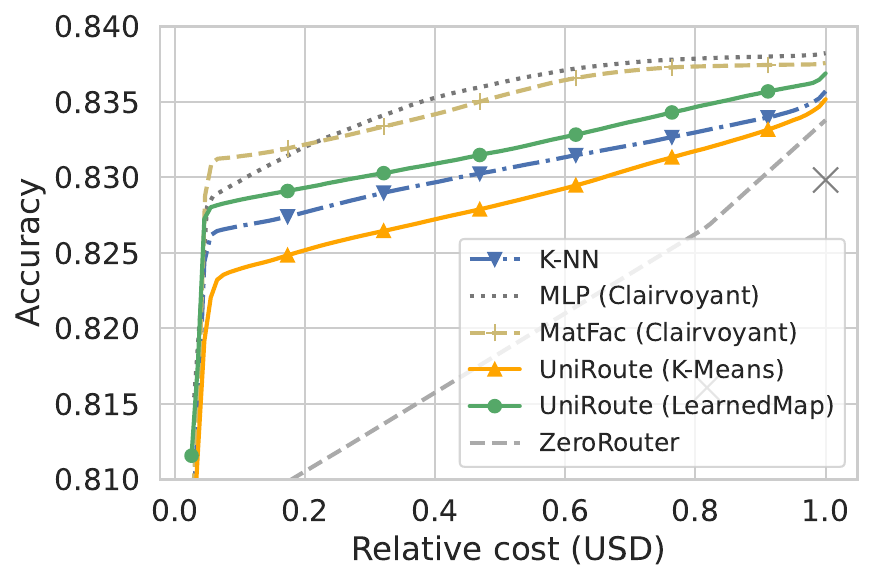}
        \caption{Headlines}
        \label{fig:headlines}
    \end{subfigure}
    \begin{subfigure}{0.33\linewidth}
        \includegraphics[width=\textwidth]{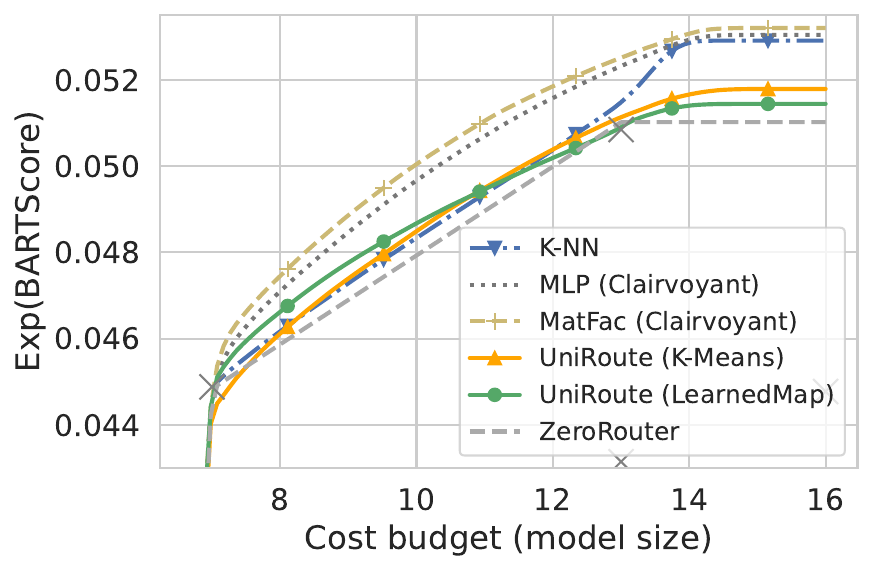}
        \caption{MixInstruct}
        \label{fig:mixinstruct}
    \end{subfigure}
  \end{minipage}
    \caption{Deferral curves and router evaluation metrics (Area (50\%), Area, and QNC) for different methods in the dynamic pool setting. 
    MLP \citep{Hu:2024b} and MatFac \citep{OngAlmWu2024,ZhuWuWen2024}, are oracle methods that observe testing LLMs during training. 
    ZeroRouter~\citep{HuBieLi2024} and $K$-NN~\citet{HuBieLi2024,Shnitzer:2023} are baselines applicable to the dynamic LLM pool setting.
    }
  \label{fig:deferral-curves-app}
\end{figure}

\newpage

\subsection{Train on Chatbot Arena and Test on EmbedLLM}
\label{sec:chatbot_exp_details}
We observe that representing each prompt with a small number of binary attributes that capture its inherent hardness shines when there is a prompt distribution shift at test time. 
To illustrate this, we use the seven binary difficulty attributes proposed in \citet{Li:2024c}, and prompt Gemini 1.5 Pro to annotate each attribute for each training prompt. We then construct a query embedder 
\begin{equation}
\boldsymbol{\varphi}(\bx) = \sigma (\boldsymbol{V}^\top \mathrm{Gecko}(\bx))\in [0, 1]^7, 
\label{eq:attribute_embed}
\end{equation}
where $\boldsymbol{V} \in \mathbb{R}^{768 \times 7}$ is distilled using the training set to predict the 7-category attributes for any new prompt $\bx$.

We compare  Gecko-based prompt representation and attribute-based representation by training on Chatbot Arena conversation data \citep{ZheChiShe2023}, and testing on EmbedLLM, which contains mostly Q\&A prompts.
To reduce confounding factors, we train on all LLMs that are present in both datasets (26 LLMs), and test on the same set of LLMs (i.e., no unseen LLMs at test time).   After appropriate filtering, the Chatbot Arena dataset has 8447 records left. The filtering step ensures that we only deal with LLMs that are present in both datasets. These examples are split further into 90\% training and 10\% validation splits.

The Chatbot Arena dataset contains pairwise comparison labels: each user prompt is responded to by two random LLMs, to which the user selects the better response. To evaluate per-cluster performance for representing each LLM, we fit the Bradley-Terry-Luce model \citep{bradley1952rank} to the pairwise comparison labels within a cluster and estimate the pointwise quality scores for each LLM for that cluster. We use the full EmbedLLM dataset for testing.

The results are shown in \cref{fig:chatbot_arena} where we compare two variants of our proposed \ourMethod{} ($K$-means):  
\begin{enumerate}
    \item \emph{K-means (Gecko)}. This is \ourMethod{} ($K$-means) (see \Cref{sec:cluster_router}) where the text embedding is set to the Gecko model \citep{LeeDaiRen2024}. 
    \item \emph{K-means (Attributes)}. This is \ourMethod{} ($K$-means) where the text embedding is set to  \eqref{eq:attribute_embed}.
\end{enumerate}
We observe that K-means (Attributes) in this case performs better than K-means (Gecko), suggesting that using prompt hardness attributes helps improve robustness to prompt distribution shifts. 
In fact, this routing approach is the only method that can reach the performance of the most accurate model in the pool, thus attaining a finite quality-neutral cost (QNC). The reason the Pareto-random router has a decreasing trend is because the Pareto-optimal LLMs are chosen using the validation set, and turn out to be not optimal for the test set.

\begin{figure}[!t]
\vspace{4mm}
  \begin{minipage}[t]{.49\linewidth}
  \vspace{-\topskip}
\includegraphics[width=0.9\textwidth]{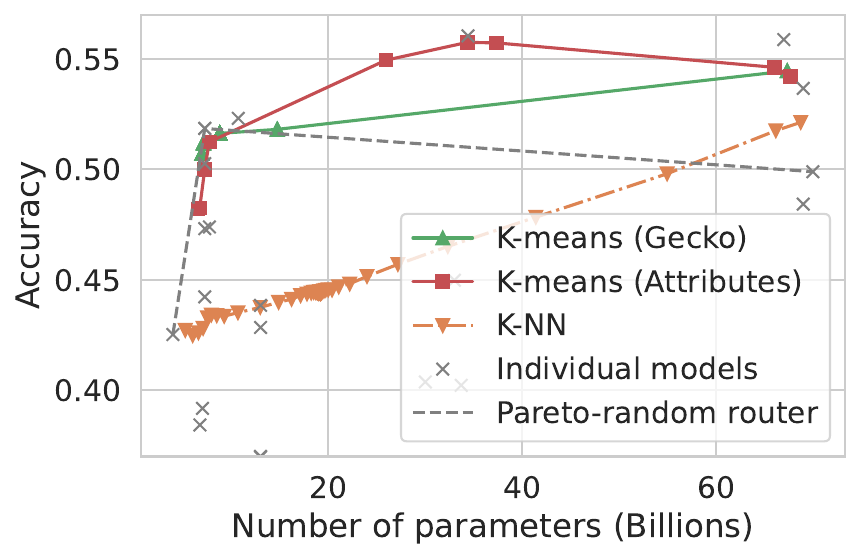}
  \end{minipage}%
  \hfill%
  \begin{minipage}[t]{.49\linewidth}
  \vspace{-\topskip}
\begin{tabular}{lrrr}
\toprule
Method & Area$\uparrow$ & QNC$\downarrow$ & Peak Acc.$\uparrow$\\
\midrule
Pareto-random router & .507 & $\infty$ & 51.9\% \\
K-NN & .472 & $\infty$ & 52.1\% \\
K-means (Gecko) & .529 & $\infty$ & 54.5\% \\
K-means (Attributes) & \best{.545} & \best{.97} & \best{55.8\%} \\
\bottomrule
\end{tabular}
  \end{minipage}
\caption{Deferral curves of routers trained on Chatbot Arena with pairwise comparison labels, and tested on EmbedLLM with per-prompt correctness labels.}
\label{fig:chatbot_arena}
\end{figure}

\subsection{Static LLM Pool Setting}
\label{app:static-pool}
While the 
dynamic pool setting is the focal point of our work,
we show in Table \ref{tab:static-pool} 
that even in the \emph{static} LLM pool setting,
\ourMethod{} is typically comparable to most baselines. The MLP baseline alone often has a slight edge. This is because, unlike our methods, which are tied to a particular input LLM representation, the MLP approach has the flexibility of learning its own representation for each fixed LLM. Note that this is possible only in the static LLM pool setting.

In this case, all LLMs are seen during training, and hence the training and validation samples have correctness label annotations from all LLMs. We tune the hyper-parameters, such as $K$ in $K$-NN and $K$-Means, the number of hidden nodes in MLP, and the intermediate dimension in Matrix Factorization, to maximize the area under the deferral curve on the validation sample.  We pick $K$ from the range $\{5, 10, 25, 50, 75, 100, 150, 200, 250, 300\}$ for $K$-NN and $K$-Means, pruning out values that are too large for the given validation sample size.

For \ourMethod{} (LearnedMap), we used two hidden layers, with the same chosen number of hidden nodes as MLP. We set the number of clusters to be the same as the chosen number of clusters for \ourMethod{} (K-Means). We employed Adam with learning rate 0.001 and batch size 64 for training, picking the checkpoint with the best quality metric on the validation set. For MixInstruct alone, given that the dataset is prone to overfitting, we replicate the same hyper-parameter choices as in the dynamic LLM pool setting (Appendix \ref{app:implementation-details}).

For these experiments, we additionally consider the Headlines dataset (see \Cref{app:deferral}) where all LLMs are fully observed during training.  We additionally consider the LiveCodeBench coding benchmark \citep{jain2024livecodebench}, and include from it 15 LLMs for which the model size was publicly available (the Math+Code dataset contains two LLMs from this benchmark). We split the LiveCodeBench dataset into 60\% for training, 10\% for validation, and 30\% for testing, and employ the same hyper-parameter choices as in MixInstruct. For Headlines, we use 40\%, 10\%, 50\% for training, validation, and testing, respectively.

\begin{table}[H]
\centering
\caption{%
Comparing \ourMethod{} with existing methods for the \textbf{static LLM pool} setting, where $\msetTrain = \msetNew$. We report the area under the deferral curve ($\uparrow$). The best baseline results are \best{{highlighted}}, and the best \ourMethod{} results are \textbf{boldfaced}. Even in the static setting, our approach is competitive compared to most  baselines. The MLP baseline often has a slight edge. This is because, unlike our methods, which are tied to a particular input LLM representation, the MLP approach has the flexibility of learning its own representation for each fixed LLM. Note that this is possible only in the static LLM pool setting, and not the dynamic setting, which is the focus of this paper.
}
\resizebox{\linewidth}{!}{
\begin{tabular}{lcccccc}
\toprule
\diaghead(-3,1){justpadspaceeee}{Method}{Dataset} & \rotatebox[origin=c]{50}{EmbedLLM} &  \rotatebox[origin=c]{50}{MixInstruct}  & \rotatebox[origin=c]{50}{RouterBench} &
\rotatebox[origin=c]{50}{Math+Code} & \rotatebox[origin=c]{50}{LiveCodeBench} & 
\rotatebox[origin=c]{50}{Headlines} 
\\
\midrule
ZeroRouter~\citep{HuBieLi2024} & .601 & .0483 & .707 & .392
& .457 & .834
\\
MLP~\citet{HuBieLi2024} & \best{{.689}} & .0500 & \best{{.747}} & \best{{.483}}
& .480 & .852 
\\
Matrix Factorization~\citep{OngAlmWu2024,ZhuWuWen2024} & .682 & .0503 & .744
& .482
& \best{{.482}} & .849
\\
$K$-NN~\citet{HuBieLi2024,Shnitzer:2023} & .636 & \best{{.0510}} & .744  
& .475
& .474
&  \best{{.854}}
\\
\midrule
\ourMethod{} ($K$-Means) & .682 & \textbf{.0504} & {.744} 
&
\textbf{.480}
&
\textbf{.481}
& .845
\\
\ourMethod{} (LearnedMap) & \textbf{.683} & {.0502} & \textbf{.744} 
& .463
& .479 
& \textbf{.854}
\\
\bottomrule
\end{tabular}
}
\label{tab:static-pool}
\end{table}

\subsection{Visualisation of LLM Embeddings}
\label{sec:embed-viz}

We visualise the cluster-based
LLM embeddings $\LLMEmbed$ learned from our clustering procedure on the EmbedLLM and RouterBench datasets in Figures~\ref{fig:embed-similarity-embedllm} and~\ref{fig:embed-similarity-routerbench}.
These heatmaps illustrate the Gaussian kernel similarity between all pairs of LLM embeddings on these datasets.
The results are largely intuitive:
e.g.,
on RouterBench, we find that the \texttt{claude} family of models are highly similar to each other.
Similarly,
on EmbedLLM,
code-focussed models generally tend to demonstrate a high degree of similarity.

We emphasise again that EmbedLLM previously also considered representing LLMs as feature vectors.
Importantly, however, their representation depends on a \emph{fixed} pool of LLMs,
and does not readily generalise to new LLMs (without further retraining).

\begin{figure}[h]
    \centering
    \resizebox{0.9\linewidth}{!}{%
    \includegraphics[width=0.99\textwidth]{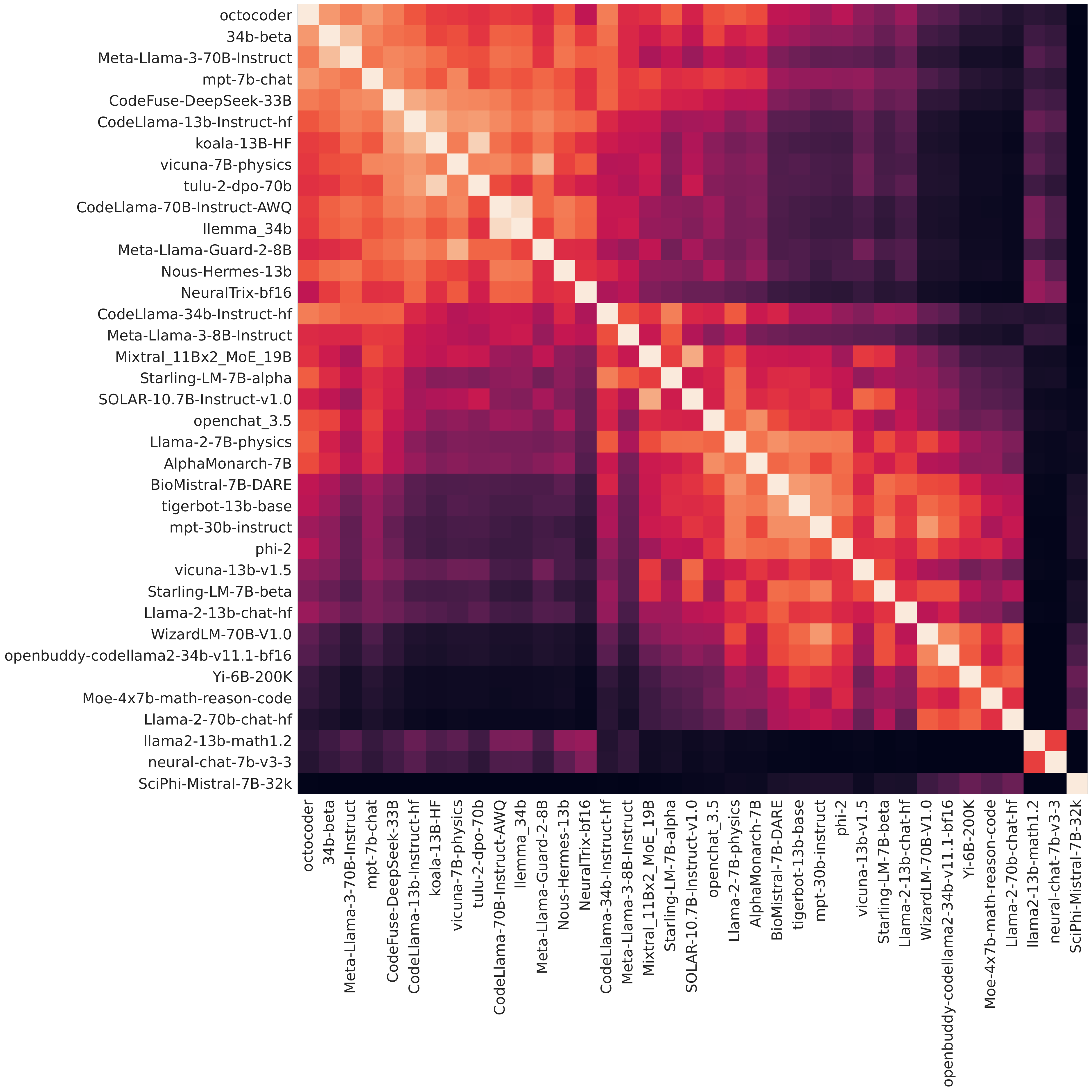}
    }
    
    \caption{Gaussian kernel similarity between pairs of LLM embeddings on EmbedLLM dataset.}
    \label{fig:embed-similarity-embedllm}
\end{figure}

\begin{figure}[H]
    \centering
    \resizebox{0.5\linewidth}{!}{%
    \includegraphics[scale=0.5]{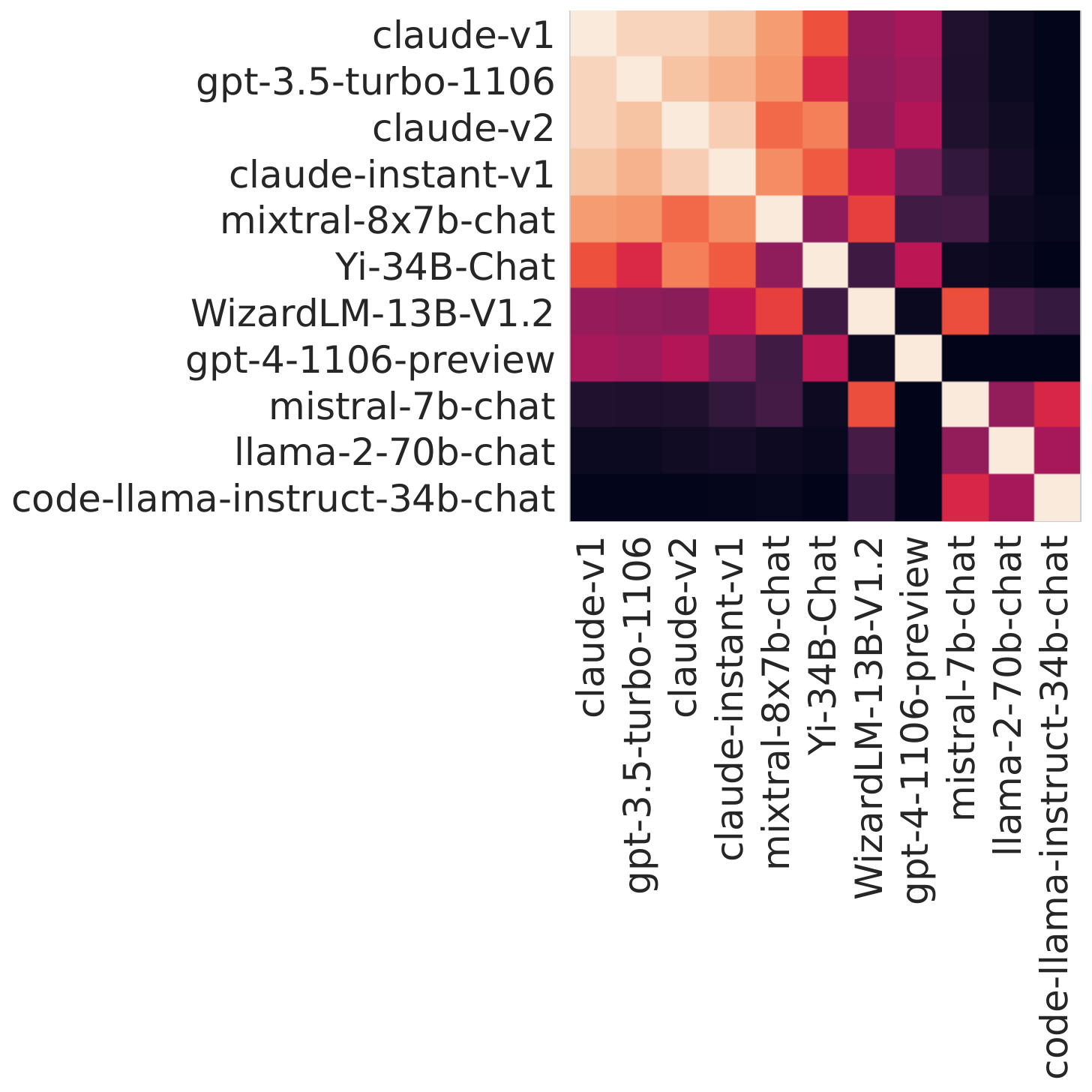}%
    }
    
    \caption{Gaussian kernel similarity between pairs of LLM embeddings on RouterBench dataset.}
    \label{fig:embed-similarity-routerbench}
\end{figure}

\ifarxiv
\else
\section{Additional Related Work}
\label{app:related}

\label{sec:discussion_related}

\textbf{Model representation}.
Broadly, the idea of representing models via compact representations has been studied in various contexts~\citep{Achille:2019,Yan:2020,Cotler:2023}.
In the context of LLMs,
our proposal relates to several recent strands of work~\citep{Tailor:2024,Thrush:2024,ZhuWuWen2024,Feng:2024,Li:2025,Zhao:2024}, which merit individual discussion.

The idea of representing an LLM via a \LossVector{} has close relation to some recent works.
In~\citet{Tailor:2024}, the authors proposed to represent ``experts'' via predictions on a small \emph{context set},
so as to enable deferral to a new, randomly selected expert at evaluation time.
While not developed in the context of LLMs (and for a slightly different problem),
this bears similarity to our proposal of representing LLMs via a \LossVector{} on a validation set; 
however, note that the mechanics of routing based on this vector (via suitable clustering) are novel.
\citet{Thrush:2024} considered representing LLMs via their perplexity on a set of public benchmarks,
for subsequent usage in pre-training data selection.
This shares the core idea of using LLM performance on a set of examples to enable subsequent modelling, albeit for a wholly different task.
\citet{ZhuWuWen2024} proposed to construct a generic
embedding for LLMs based on performance on public benchmarks.
This embedding is constructed via a form of matrix factorisation, akin to~\citet{OngAlmWu2024}.
While~\citet{ZhuWuWen2024} discuss model routing as a possible use-case for such embeddings, there is no explicit evaluation of embedding generation in the case of \emph{dynamic} LLMs;
note that this setting would na\"{i}vely require full re-computation of the embeddings, or some version of incremental matrix factorisation~\citep{Brand:2002}.

Some recent works have considered the problem of routing with a dynamic pool of LLMs.
\citet{Feng:2024} proposed a graph neural network approach,
wherein LLMs are related to prompts and \emph{tasks} (e.g., individual benchmarks that a prompt is drawn from).
Such 
pre-defined \emph{task labels} for input prompts
may be unavailable in some practical settings.
\citet{Li:2025} proposed to use LLM performance on benchmark data to construct a \emph{model identity vector},
which is trained using a form of variational inference.
This has conceptual similarity with our \LossVector{} proposal (and the works noted above);
however, our mechanics of learning based on this vector 
(i.e., the cluster-based representation)
are markedly different.
Further,~\citet{Li:2025} consider an online routing setting wherein bandit approaches are advisable,
whereas we consider and analyse a conventional supervised learning setting.
Finally, we note that~\citet{Zhao:2024} consider the problem of routing to a dynamic \emph{LoRA pool}, where LoRA modules are natively represented by aggregating (learned) embeddings on a small subset of training examples.
These embeddings are learned via a contrastive loss,
constructed based on certain pre-defined task labels.
As such labels may not be provided in many settings,~\citet{Chen:2024} implemented a variant wherein these are replaced by unsupervised cluster assignments.
While similar in spirit to our proposals, the details of the mechanics are different; e.g., we use the clustering to compute a set of average accuracy scores for each LLM,
rather than training an additional embedding.

Finally, we note that~\citet{Chen:2024} also consider the use of clustering as part of router training.
However, the usage is fundamentally different:
~\citet{Chen:2024}
regularise the learned prompt embedding $\QueryEmbed$
based on
cluster information,
while we use clustering to construct the LLM embedding $\LLMEmbed$.

Our proposed approach is similar in spirit to methods such as  K-NN, where the prediction for a prompt is based on the labels associated with queries in its neighborhood; in our case, we consider pre-defined clusters instead of example-specific neighborhoods.

\fi

\end{document}